\newcommand{\Gap}[1]{\text{NG}\left(#1\right)}
\newcommand{\pj}{\text{Proj}}
\title{\LARGE\bfseries Two-Timescale Q-Learning with Function Approximation \\ in Zero-Sum Stochastic Games }
\author{Zaiwei Chen\textsuperscript{$1,*$}, Kaiqing Zhang\textsuperscript{$2$}, Eric Mazumdar\textsuperscript{$1,\dagger$}, Asuman Ozdaglar\textsuperscript{$3$}, Adam Wierman\textsuperscript{$1,\ddagger$}\\
{\small
\textsuperscript{$1$}\textit{CMS, Caltech,} \href{mailto:zchen458@caltech.edu}{\textit{\textsuperscript{$*$}zchen458@caltech.edu}}, \href{mailto:mazumdar@caltech.edu}{\textit{\textsuperscript{$\dagger$}mazumdar@caltech.edu}}, \href{mailto:adamw@caltech.edu}{\textit{\textsuperscript{$\ddagger$}adamw@caltech.edu}}}\\
{\small
\textsuperscript{$2$}\textit{ECE, University of Maryland, College Park,} \href{mailto:kaiqing@umd.edu}{\textit{kaiqing@umd.edu}}}\\
{\small
	\textsuperscript{$3$}\textit{EECS, MIT,} \href{mailto:asuman@mit.edu}{\textit{asuman@mit.edu}}
}
}
\date{\vspace{-0.4 in}}
\begin{document}
\maketitle

\begin{abstract}
We consider two-player zero-sum stochastic games and propose a two-timescale $Q$-learning algorithm with function approximation that is payoff-based, convergent, rational, and symmetric between the two players. In two-timescale $Q$-learning, the fast-timescale iterates are updated in spirit to the stochastic gradient descent and the slow-timescale iterates (which we use to compute the policies) are updated by taking a convex combination between its previous iterate and the latest fast-timescale iterate. Introducing the slow timescale as well as its update equation marks as our main algorithmic novelty. In the special case of linear function approximation, we establish, to the best of our knowledge, the first last-iterate finite-sample bound for payoff-based independent learning dynamics of these types. The result implies a polynomial sample complexity to find a Nash equilibrium in such stochastic games. 

To establish the results, we model our proposed algorithm as a two-timescale stochastic approximation and derive the finite-sample bound through a Lyapunov-based approach. The key novelty lies in constructing a valid Lyapunov function to capture the evolution of the slow-timescale iterates. Specifically, through a change of variable, we show that the update equation of the slow-timescale iterates resembles the classical smoothed best-response dynamics, where the regularized Nash gap serves as a valid Lyapunov function. This insight enables us to construct a valid Lyapunov function via a generalized variant of the Moreau envelope of the regularized Nash gap. The construction of our Lyapunov function might be of broad independent interest in studying the behavior of stochastic approximation algorithms. 
\end{abstract}

\section{Introduction}\label{sec:intro}
Recent years have seen remarkable successes  of reinforcement learning (RL) across  many application domains, such as the game of Go  \cite{silver2016mastering}, autonomous driving \cite{shalev2016safe}, and robotics \cite{kober2013reinforcement}. Among existing learning dynamics, $Q$-learning \cite{watkins1992q} and its variants, such as the celebrated deep $Q$-network (DQN) \cite{mnih2015human}, are among the most popular. Although $Q$-learning was originally proposed for single-agent RL, since many real-world applications by nature involve the decision-making and learning of multiple agents in a shared environment, it is natural to explore the potential of $Q$-learning in multi-agent systems \cite{bu2008comprehensive,zhang2021multi}. However, in contrast to the single-agent setting (where $Q$-learning has a solid theoretical foundation \cite{tsitsiklis1994asynchronous,even2003learning,chen2021lyapunov,li2021breaking}), due to the interactions among agents in multi-agent systems, it is still an active research topic to theoretically investigate whether the agents will jointly achieve any performance guarantees if each of them independently implements $Q$-learning (or its variants) \cite{leslie2005individual,sayin2021decentralized}. 

One of the key challenges in the deployment of $Q$-learning to real-world applications is the curse of dimensionality \cite{bertsekas1996neuro,powell2007approximate}.  Specifically, the state space in practical applications can be massively or even infinitely large, making it computationally inefficient, if not intractable, to estimate and store the $Q$-functions in learning. A common way to deal with this curse has been to employ \emph{function approximation} for the $Q$-function. The high-level idea is to approximate the solution of the RL problem from a chosen function class, in which each function can be specified by a parameter (or weight) with a much smaller dimension than the size of the state-action space of the underlying  RL problem. A representative example of $Q$-learning with function approximation is the DQN, in which neural networks are used as a means for approximating the $Q$-functions. Unfortunately, when function approximation is employed, the ``deadly triad''\footnote{The deadly triad refers to the combination of off-policy sampling, function approximation, and bootstrapping \cite{sutton2015introduction}.} \cite{sutton2015introduction} would occur  in $Q$-learning, resulting in the algorithm to diverge even under linear function approximation, as demonstrated by several counterexamples in the literature \cite{baird1995residual,chen2022target}. To overcome the deadly triad, variants of $Q$-learning have been proposed, such as Greedy-GQ \cite{maei2010toward}, $Q$-learning with a target network \cite{chen2022target,zhang2021breaking}, fitted $Q$-iteration \cite{munos2008finite}, and Zap $Q$-learning \cite{chen2020zap}, etc. However, the results are mostly limited to the single-agent RL setting. Therefore, it is natural to ask:
\begin{center}
    \textit{Is it possible to make $Q$-learning with function approximation provably work for multi-agent systems?}
\end{center}

In this work, we provide an answer to the above question in the benchmark setting of two-player zero-sum stochastic games \cite{shapley1953stochastic}, where both players aim to optimize their strategies in a competitive environment with evolving states. Specifically, we propose a two-timescale $Q$-learning dynamics that can  incorporate function approximation. In particular, our learning dynamics enjoy several desired properties in multi-agent reinforcement learning (MARL), i.e., the learning dynamics are \emph{independent} (require no coordination between the players \cite{ozdaglar2021independent}), \emph{rational} (each player will converge to the best-response to the opponent if the opponent plays an asymptotically stationary policy \cite{bowling2001rational}), and \emph{payoff-based} (each player can only observe their realized payoff as well as the system state during learning \cite{foster2006regret,marden2012revisiting}). On the theoretical side, we establish the last-iterate finite-sample convergence  guarantees in the case when the function approximator has a linear structure. We detail our contributions in the next subsection.

\subsection{Main Contributions} \label{subsec:contribution}

\paragraph{The Learning Dynamics.} From the perspective of player $i\in \{1,2\}$, our two-timescale $Q$-learning has two loops and maintains $4$ sets of parameters (or weights) $w^i_{t,k}$, $\theta^i_{t,k}$, $\Bar{w}_t^i$, and $\Bar{\theta}_t^i$ that we use to approximate the local $q$-functions, where $k$ is the inner-loop iteration index and $t$ is the outer-loop iteration index. Similarly to the DQN in the single-agent setting, the two sets of parameters $\Bar{w}_t^i$ and $\Bar{\theta}_t^i$ are called the \emph{target networks}, which are fixed in the inner loop and are synchronized to the last iterates $w_{t,K}^i$ and $\theta_{t,K}^i$ from the inner loop, respectively, where $K$ is the number of iterations of the inner loop. As will be illustrated in Section \ref{subsec:viewpoint}, such an update rule for $\Bar{w}_t^i$ and $\Bar{\theta}_t^i$ can be interpreted as an approximate version of the \emph{minimax value iteration}, a well-known equilibrium-computation algorithm for zero-sum stochastic games \cite{shapley1953stochastic}. 
    
The parameters $w^i_{t,k}$ and $\theta^i_{t,k}$ are updated in the inner loop in a two-timescale fashion, and is designed to find the values of a sequence of induced zero-sum matrix games, which are needed for implementing the minimax value iteration in the outer loop. Specifically, $w_{t,k}^i$ is the fast-timescale parameter and is updated according to the projected stochastic gradient descent (SGD) for minimizing a variant of the Bellman error, which reduces to
the temporal-difference (TD)-learning when using linear function approximation \cite{sutton1988learning,tsitsiklis1997analysis}. The slow-timescale iterate $\theta_{t,k}^i$  is used to compute the policies for playing the game, which we propose to update by performing the convex combination $\theta_{t,k+1}^i=(1-\beta_k)\theta_{t,k}^i+\beta_k w_{t,k}^i$ with stepsize  $\beta_k>0$. Importantly, we show in Section \ref{sec:analysis} that the seemingly simple update equation for $\theta_{t,k}^i$ can be interpreted as a generalization of the smoothed best-response dynamics (classical learning dynamics for solving zero-sum matrix games) implemented in the \emph{parameter space} of the function approximation class. 

Notably, introducing the two-timescale structure and the update equation of the slow-timescale iterate $\theta_{t,k}^i$ is essential for connecting our algorithm to the classical smoothed best-response dynamics, which also motivates our Lyapunov-based analysis framework, and thus presents our main algorithmic novelty.

\paragraph{The Last-Iterate Finite-Sample Analysis.} When using linear function approximation, under certain assumptions, we establish the last-iterate finite-sample guarantee of our proposed learning dynamics measured by the Nash gap, which implies a polynomial sample complexity to find a Nash equilibrium of the underlying stochastic game. To the best of our knowledge, this is the first result that shows last-iterate convergence bounds for payoff-based independent learning dynamics under function approximation for stochastic games. See Section \ref{subsec:related_work} for a detailed comparison with the existing literature.

\paragraph{A New Lyapunov Function for Analysis.} To establish the results, we model our proposed algorithm as a two-timescale stochastic approximation, and the overall framework for our analysis is a Lyapunov-based approach, which has been widely used  to study stochastic approximation algorithms in optimization and machine learning \cite{bansal2019potential,bottou2018optimization,lan2020first,srikant2019finite,chen2021lyapunov,chen2023finite}. The main challenge, as in all of the Lyapunov-based approaches, is the construction of a valid Lyapunov function to capture the stability of the iterates. As illustrated at the beginning of this subsection, one of our main algorithmic novelties is to propose to update the slow-timescale iterate $\theta_{t,k}^i$ (which we use to compute the policies) according to the convex combination of the previous $\theta_{t,k}^i$ and the updated $w_{t,k}^i$, as  $\theta_{t,k+1}^i=(1-\beta_k)\theta_{t,k}^i+\beta_k w_{t,k}^i$. Although such an update equation is simple to implement, it cannot be interpreted as any standard RL algorithms such as policy iteration or policy gradient \cite{sutton2018reinforcement}, which makes studying $\{\theta_{t,k}^i\}$ particularly challenging as there is no off-the-shelf Lyapunov function. 

To overcome this challenge, firstly, we show that the update equation for the local $q$-function associated with the parameter $\theta_{t,k}^i$ (after a change of variable) mimics the update equation for the policies in the smoothed best-response dynamics. This observation motivates us to use the entropy-regularized Nash gap (ERNG) (which was known to be a valid Lyapunov function for studying the smoothed best-response dynamics for matrix games \cite{hofbauer2005learning}) as a starting point for our Lyapunov function construction. However, several issues prevent us from directly using the ERNG to study $\theta_{t,k}^i$. For example, since the parameter  of our function approximator  in general lives in a real-vector space (beyond the simplexes), the entropy function (which is the regularizer in ERNG) is not even directly well-defined in the parameter space. 
In addition, the change of variable that enabled us to connect the update equation for $\theta_{t,k}^i$ to the smoothed best-response dynamics requires the invertibility of the reward matrix at each state, which may not hold in general. To resolve these issues, \textit{we construct our Lyapunov function as the infimal convolution between the ERNG and the square of a properly defined seminorm, which can also be interpreted as a generalization of the Moreau envelope}. The construction process is detailed in Section \ref{sec:analysis}, and the roadmap behind it might be of independent interest.

\subsection{Related Work}\label{subsec:related_work}

\paragraph{Single-Agent $Q$-Learning (with Function Approximation).} $Q$-learning, first proposed in \cite{watkins1992q}, has become one of the most classical and empirically popular algorithms in RL. As a result, the theoretical understanding of $Q$-learning has attracted a great deal of attention. While early literature has focused on the \emph{asymptotic}  convergence guarantees \cite{tsitsiklis1994asynchronous,jaakkola1994convergence,borkar2000ode}, the more recent ones have shared the common interest in establishing \emph{finite-sample} guarantees \cite{even2003learning,beck2012error,beck2013improved,li2021q,li2020sample,chen2021lyapunov,jin2018q}. To overcome the curse of dimensionality, $Q$-learning is usually incorporated with function approximation in practical applications, a typical example of which is the DQN \cite{mnih2013playing}. However, since $Q$-learning is inherently a bootstrapped off-policy learning algorithm (as our goal is to estimate the state-action value function of an optimal policy while using samples collected from a potentially sub-optimal one), under function approximation, the so-called phenomenon of \emph{deadly triad} \cite{sutton2015introduction} will occur, and the algorithm can be unstable and easily diverge \cite{baird1995residual,chen2022target}. Overcoming the deadly triad in $Q$-learning (empirically and also theoretically) has been an active research topic in the RL community, see e.g.,  \cite{melo2008analysis,chen2019finitesample,gao2021finite,lee2019unified,zou2019finite,xu2020finite,cai2023neural,carvalho2020new,zhang2021breaking,agarwal2022online,zanette2022stabilizing,maei2010toward,szepesvari2005finite, yang2020reinforcement,jin2020provably,weisz2021exponential,zhou2021nearly,munos2008finite,du2019provably,chen2022target,meyn2023stability}. Among existing approaches, the introduction of a target network in $Q$-learning \cite{mnih2013playing,zhang2021breaking,chen2022target} has proven to be an effective idea to improve the stability of $Q$-learning, which is also used in the present work.

\paragraph{Finite-Sample Analysis.} We here focus on the most related works on finite-sample analysis through the lens of stochastic approximation   \cite{robbins1951stochastic,benaim2006dynamics,borkar2009stochastic,benveniste2012adaptive,kushner2012stochastic}, which is an iterative framework for solving root-finding problems, and has been widely applied in continuous optimization (to solve for the zeros of the gradient operator) \cite{lan2020first} and RL (to solve the Bellman equation) \cite{bertsekas1996neuro}. Specifically, for linear stochastic approximation, finite-sample analysis has been conducted in \cite{bhandari2018finite,srikant2019finite,lakshminarayanan2018linear,mou2020linear,konda2004convergence}, and for nonlinear stochastic approximation, finite-sample analysis has been conducted in \cite{even2003learning,li2020sample,chen2020finite,qu2020finite,chen2021lyapunov,yan2022efficacy,zou2019finite}, motivated by various RL algorithms, such as $Q$-learning  \cite{watkins1992q} and SARSA \cite{rummery1994line}. In addition, stochastic gradient descent/ascent, as a special case of stochastic approximation, has been extensively studied in the literature; see \cite{bottou2018optimization,beck2017first,lan2020first} and the references therein for the finite-sample analysis of stochastic gradient algorithms.

\paragraph{Best-Response-Based Dynamics in Stochastic Games.} Learning in games has been one of the core topics in game theory, and 
the \emph{best-response-based}  dynamics has served as one of the most classical ones \cite{fudenberg1998theory}. A prominent example of such dynamics is the (smoothed) fictitious play \cite{brown1951iterative,robinson1951iterative,ref:Fudenberg93}, wherein each player responds with the (smoothed) best action it can choose given its belief of the opponents' strategies. Such best-response-type dynamics provide a way to justify \emph{equilibria} as the long-run outcome of bounded rational players optimizing myopically in games  \cite{fudenberg1998theory}. Despite their long history, the literature on understanding the behavior of best-response-type dynamics  in \emph{stochastic games}  \cite{shapley1953stochastic,fink1964equilibrium} -- a fundamental framework of MARL \cite{littman1994markov} and games in dynamic environments -- has not become rich until recently  \cite{sayin2021decentralized,sayin2022fictitious,sayin2022fictitious2,baudin2022fictitious,baudinsmooth,ozdaglar2021independent,chenzy2021sample,cai2023uncoupled}. The results, however, either concern the tabular setting without function approximation, or have established only \emph{asymptotic} convergence guarantees.
Note that, in \cite{leslie2005individual}, the authors proposed an individual $Q$-learning algorithm for zero-sum matrix games and established the asymptotic convergence, where they connected the ordinary differential equation (ODE) associated with their $Q$-function to the smoothed best-response dynamics via the chain rule of calculus. This is similar in spirit to our change of variable in the analysis. 

The existing work most related to ours is \cite{chen2023finite}, where the authors proposed a learning algorithm that combines value iteration with smoothed best-response dynamics, and conducted last-iterate finite-sample analysis. We now highlight the main differences between this work and \cite{chen2023finite}. Algorithmically, since \cite{chen2023finite} considers the tabular setting while we consider the function approximation setting, the learning algorithms are largely different as we cannot directly operate in the policy space,  but have to operate in the parameter space of the function approximation class. Technically, while both our work and \cite{chen2023finite} have used Lyapunov-based arguments to establish finite-sample guarantees, the Lyapunov functions, especially the one we use to study the slow-timescale iterates $\{\theta_{t,k}^i\}$ for computing the policies, are largely different from those used in \cite{chen2023finite}. In particular, since \cite{chen2023finite} directly uses the smoothed best-response dynamics for their policy update equation, their Lyapunov function for studying the policies is the ERNG, which is not the case in our setting. The construction of a valid Lyapunov function for $\theta_{t,k}^i$ presents one of our main technical novelties, as illustrated in Section \ref{subsec:contribution}.  

\paragraph{Independent Learning in Games.} Independent learning is a desired property in learning in games, where each player makes decisions using local information, without communication or coordination with other players. It has been extensively studied in normal-form/matrix game settings in the forms of (smoothed) fictitious play \cite{brown1951iterative,robinson1951iterative,ref:Fudenberg93}, and in general, no-regret learning \cite{cesa2006prediction}. In stochastic games, provable (independent) learning has received increasing attention recently \cite{bai2020near,daskalakis2020independent,zhang2020model,sayin2021decentralized,ozdaglar2021independent,wei2021last,sayin2022fictitious,sayin2022fictitious2,baudin2022fictitious,baudinsmooth,jin2021v,songcan,mao2022improving,zhangpolicy2022,ding2022independent,daskalakis2022complexity,erez2022regret,song2023can,cai2023uncoupled,chen2023finite},  as well as the hardness results of independent learning \cite{liu2022learning,foster2023hardness}. However, these works all focused on the \emph{tabular} setting without resorting to function approximation. Several recent works that have incorporated function approximation in learning in stochastic games \cite{xie2020learning,huangtowards,zhao2021provably,jin2022power,chen2022unified,ni2022representation,foster2023complexity,cisneros2023finite} are mostly focused on the online exploration setting with regret guarantees, a more challenging setting, without last-iterate convergence guarantees. More importantly, most of these algorithms are either asymmetric or require coordination among the agents, making them not independent learning algorithms. Very recently, \cite{wang2023breaking,cui2023breaking} have both studied independent function approximation for finite-horizon episodic stochastic games, in the online exploration setting. However, they used the regret bound as the performance metric, which does not imply last-iterate finite-sample guarantees and is thus not comparable to our results. 

\paragraph{Outline.} The rest of this paper is organized as follows. In Section \ref{sec:setup}, we present the problem formulation, which includes the model of a zero-sum stochastic game and the notation we will use throughout the paper. In Section \ref{sec:algorithm}, we introduce our learning dynamics, i.e., the two-timescale $Q$-learning with function approximation, its connection to the celebrated DQN algorithm, and how it can be viewed as an approximate version of the minimax value iteration for zero-sum stochastic games. In Section \ref{sec:results}, we present our main theoretical result -- the finite-sample guarantee of our proposed learning dynamics, followed by a detailed roadmap of the construction of a valid Lyapunov function to study the slow-timescale iterates in Section \ref{sec:analysis}. Finally, we conclude the paper in Section \ref{sec:conclusion}. The complete proof of all theoretical results can be found in the appendix.

\section{Problem Formulation}\label{sec:setup}

An infinite-horizon two-player zero-sum stochastic game is defined by a tuple $(\mathcal{S},\mathcal{A}^1,\mathcal{A}^2,p,R_1,R_2,\gamma)$, where $\mathcal{S}$ is a finite state space, $\mathcal{A}^i$, $i\in \{1,2\}$, is the finite action space of player $i$, $p:\mathcal{S}\times \mathcal{A}^1\times \mathcal{A}^2\mapsto \Delta(\mathcal{S})$ is the state transition kernel (where $\Delta(\mathcal{S})$ stands for the probability simplex supported on $\mathcal{S}$), $R_i:\mathcal{S}\times \mathcal{A}^i\times \mathcal{A}^{-i}\mapsto \mathbb{R}$, $i\in \{1,2\}$, is the reward function of player $i$, and $\gamma\in [0,1)$ is the discount factor. Here, we use $-i$ to denote the index of the opponent of player $i$. In zero-sum stochastic games, the reward functions satisfy $\sum_{i=1,2}R_i(s,a^i,a^{-i})=0$ for all $(s,a^1,a^2)$. Throughout, we assume that $\max_{s,a^1,a^2}|R_1(s,a^1,a^2)|\leq 1$,  which is without loss of generality because we are working with a finite stochastic game\footnote{Our results can be generalized to a stochastic game with an infinite state space but a finite action space. We focus on the finite setting for ease of presentation.}. 

At time step $k\geq 0$, denote the current state of the environment by $\mathcal{S}_k$. For $i\in \{1,2\}$, player $i$ plays the game with an action $A_k^i\sim \pi^i(\cdot\mid S_k)$, where $\pi^i$ is called a policy and can be viewed as a mapping from the state space $\mathcal{S}$ to the set of probability distributions supported on the action space $\mathcal{A}^i$. The environment then transitions to a new state $S_{k+1}$ based on the transition kernel $p(\cdot\mid S_k,A_k^1,A_k^2)$, and returns player $i$ a one-stage reward $R_i(S_k,A_k^i,A_k^{-i})$. The process is then repeated.

Given a joint policy $\pi=(\pi^1,\pi^2)$ and $i\in \{1,2\}$, the local state-action function $q_\pi^i:\mathcal{S}\times \mathcal{A}^i\mapsto\mathbb{R}$, the global value function $v_\pi^i:\mathcal{S}\mapsto\mathbb{R}$, and the expected value function $U^i(\pi^i,\pi^{-i})\in\mathbb{R}$ are defined as 
\begin{align*}
	q_\pi^i(s,a^i)=\mathbb{E}_\pi\left[\sum_{k=0}^{\infty}\gamma^kR_i(S_k,A_k^i,A_k^{-i})\;\middle|\; S_0=s,A_0^i=a^i\right],\quad \forall\;(s,a^i)\in\mathcal{S}\times \mathcal{A}^i,
\end{align*}
$v_\pi^i(s)=\mathbb{E}_{A^i\sim \pi^i(\cdot|s)}[q_\pi^i(s,A^i)]$ for all $s\in\mathcal{S}$, and $U^i(\pi^i,\pi^{-i})=\mathbb{E}_{S\sim \rho_o(\cdot)}[v_\pi^i(S)]$, respectively, where we use $\mathbb{E}_\pi[\,\cdot\,]$ to indicate that the actions are chosen according to the joint policy $\pi=(\pi^1,\pi^2)$, and $\rho_o(\cdot)$ is an arbitrary initial distribution over the states. 

Unlike in the single-agent setting, there may not exist a universal optimal policy for each player. Therefore, we use the Nash gap $\text{NG}(\cdot,\cdot)$ to measure the performance of a joint policy $\pi=(\pi^1,\pi^2)$, which is defined as
\begin{align*}
	\Gap{\pi^1,\pi^2}=\sum_{i=1,2}\big\{\max_{\hat{\pi}^i}U^i(\hat{\pi}^i,\pi^{-i})-U^i(\pi^i,\pi^{-i})\big\}.
\end{align*}
Note that $\Gap{\pi^1,\pi^2}=0$ if and only if $\max_{\hat{\pi}^i}U^i(\hat{\pi}^i,\pi^{-i})=U^i(\pi^i,\pi^{-i})$ for $i\in \{1,2\}$, which implies that the joint policy $\pi=(\pi^1,\pi^2)$ is a Nash equilibrium \cite{nash1950equilibrium}. 

\paragraph{Notation.} Given $i\in \{1,2\}$, for any $x\in \mathbb{R}^{|\mathcal{S}||\mathcal{A}^i||\mathcal{A}^{-i}|}$, we use $x(s)$ to denote an $|\mathcal{A}^i|\times |\mathcal{A}^{-i}|$ matrix with its $(a^i,a^{-i})$'s entry being $x(s,a^i,a^{-i})$. Similarly, for any $y\in \mathbb{R}^{|\mathcal{S}||\mathcal{A}^i|}$, we use $y(s)$ to denote an $|\mathcal{A}^i|$ -- dimensional vector with its $a^i$-th entry being $y(s,a^i)$. For example, for any $s\in\mathcal{S}$, $\pi^i(s)$ is a vector living in the probability simplex $\Delta(\mathcal{A}^i)$, and $q_\pi^i(s)$ is a real-valued vector in $\mathbb{R}^{|\mathcal{A}^i|}$.

\section{The Learning Dynamics}\label{sec:algorithm}

In this section, we present our two-timescale $Q$-learning with function approximation. Before that, we formally introduce the function approximation architecture.

\paragraph{Function Approximation.} Given $i\in \{1,2\}$, let $q_{w^i}^i\in\mathbb{R}^{|\mathcal{S}||\mathcal{A}^i|}$ be the approximation of the local $q$-function of player $i$, where $w^i\in\mathbb{R}^{d_i}$ is the parameter (or weight) and $d_i$ is a positive integer that is usually smaller than $|\mathcal{S}||\mathcal{A}^i|$. We assume that $q_{w^i}^i$ is differentiable with respect to $w^i$. As an example of function approximation, consider a neural network with the dimension of the input vector  being $d_i$ and the dimension of the output vector being $|\mathcal{S}||\mathcal{A}^i|$. With the parametrization of the local $q$-function introduced above, we consider using the following softmax policy class:
\begin{align*}
    \Pi^i=\left\{\pi_{\theta^i}^i(a^i\mid s)=\frac{\exp(q_{\theta^i}^i(s,a^i)/\tau)}{\sum_{\Tilde{a}^i\in\mathcal{A}^i}\exp(q_{\theta^i}^i(s,\tilde{a}^i)/\tau)},\,\forall\,(s,a^i)\,\middle|\,\theta^i\in\mathbb{R}^{d_i}\right\},
\end{align*}
where $\tau>0$ is called the temperature. For simplicity of notation, given $\tau>0$, let 
$\sigma_\tau: \mathbb{R}^{|\mathcal{A}^i|}\mapsto\mathbb{R}^{|\mathcal{A}^i|}$ be the softmax operator with temperature $\tau$ defined as $[\sigma_\tau(x^i)](a^i)=\exp(x^i(a^i)/\tau)/\sum_{\Tilde{a}^i\in\mathcal{A}^i}\exp(x^i(\Tilde{a}^i)/\tau)$ for any $a^i\in\mathcal{A}^i$ and $x^i\in\mathbb{R}^{|\mathcal{A}^i|}$.

\paragraph{Linear Function Approximation.} A special case of function approximation is when $q_{w^i}^i$ is linear in $w^i$, which is called linear function approximation \cite{sutton2018reinforcement,tsitsiklis1997analysis}. Specifically, for $i\in \{1,2\}$, player $i$ is associated with a feature matrix $\Phi^i\in \mathbb{R}^{|\mathcal{S}|| \mathcal{A}^i|\times \mathbb{R}^{d_i}}$, and the local $q$-function is parametrized as $q_{w^i}^i=\Phi^iw^i$. Note that the tabular setting corresponds to $\Phi^i=I^i$, where $I^i$ is the identity matrix with dimension $|\mathcal{S}||\mathcal{A}^i|$. We assume without loss of generality that $\Phi^i$ has linearly independent columns and is normalized so that $\max_{s,a^i}\|\phi^i(s,a^i)\|_2\leq 1$ \cite{bertsekas1996neuro}, where $\phi^i(s,a^i)\in\mathbb{R}^{d_i}$ denotes the $(s,a^i)$-th row of $\Phi^i$. Moreover, for any $s\in\mathcal{S}$, let $\Phi^i_s\in\mathbb{R}^{|\mathcal{A}^i|\times d_i}$ be the ``sub-feature matrix'' of $\Phi^i$ at state $s$, that is, the $a^i$-th row of $\Phi_s^i$ is given by $\phi^i(s,a^i)$ for all $a^i\in\mathcal{A}^i$. 
In this case, the softmax policy class of player $i\in \{1,2\}$ can be compactly written as $\Pi^i=\{ \pi_{\theta^i}^i(s)=\sigma_\tau(\Phi_s^i\theta^i),\,  \,\forall\,s\in\mathcal{S}\mid \theta^i\in\mathbb{R}^{d_i}\}$, where we recall our notation that $\pi_{\theta^i}^i(s)\in \Delta(\mathcal{A}^i)$. 

\subsection{Two-Timescale $Q$-Learning with Function Approximation}
For $i\in \{1,2\}$, our two-timescale $Q$-learning with function approximation is presented in Algorithm \ref{algorithm:FA}. Note that Algorithm \ref{algorithm:FA} has two loops and maintains $4$ sets of iterates $\{w_{t,k}^i\}$, $\{\theta_{t,k}^i\}$, $\{\Bar{w}_t^i\}$, and $\{\Bar{\theta}_t\}$. In Algorithm \ref{algorithm:FA} Line $6$, we use $\ceil{\cdot}$ to represent the truncation operator with truncation radius $r=1/(1-\gamma)$. That is, for any $x\in\mathbb{R}$, we have $\ceil{x}=x$ when $x\in[-r,-r]$, $\ceil{x}=r$ when $x>r$, and $\ceil{x}=-r$ when $x<-r$. For a vector-valued $x$, the operator $\ceil{\cdot}$ operates component-wisely. In Algorithm \ref{algorithm:FA} Line 7, we use $\pj_M^i:\mathbb{R}^{d_i}\mapsto\mathbb{R}^{d_i}$ to represent the projection operator onto the $\ell_2$-norm ball with radius $M$ centered at the origin, that is, $\pj_M^i(x)=\arg\min_{y\in\mathbb{R}^{d_i}, \|y\|_2\leq M}\|x-y\|_2$, where the projection radius $M\geq 0$ is a tunable parameter. Both the truncation and the projection are introduced for the stability of our learning dynamics, which are common in the existing literature even in the single-agent setting \cite{chen2022target,munos2008finite,mnih2015human,zou2019finite}. 

\begin{algorithm}[ht]
	\caption{Two-Timescale $Q$-Learning with Function Approximation}
	\label{algorithm:FA} 
	\begin{algorithmic}[1]
		\STATE \textbf{Input:} Integers $K$ and $T$, initializations $w_{0,0}^i=\theta_{0,0}^i=\Bar{w}_0^i=\Bar{\theta}_0^i=0$, and $S_0\in\mathcal{S}$
  \FOR{$t=0,1,\cdots,T-1$}
		\FOR{$k=0,1,\cdots,K-1$}
		\STATE $\theta^i_{t,k+1}=\theta_{t,k}^i+\beta_k (w_{t,k}^i-\theta_{t,k}^i)$ and compute $\pi_{t,k+1}^i(S_k)=\sigma_\tau(q_{\theta_{t,k+1}^i}^i(S_k))$
		\STATE Play
		$A_k^i\sim \pi_{t,k+1}^i(\cdot\mid S_k)$ (against $A_k^{-i}$)  and observe $S_{k+1}\sim p(\cdot\mid S_k,A_k^i,A_k^{-i})$
  \STATE $\Delta_{t,k}^i=R_i(S_k,A_k^i,A_k^{-i})+\gamma \sigma_\tau(q_{\Bar{\theta}_t^i}^i(S_{k+1}))^\top \ceil{q_{\Bar{w}_t^i}^i(S_{k+1})} -q_{w_{t,k}^i}^i(S_k,A_k^i)$
		\STATE $w_{t,k+1}^i=\pj_M^i \big(w_{t,k}^i+\alpha_k\nabla\big[ q_{w_{t,k}^i}^i(S_k,A_k^i)\big]\Delta_{t,k}^i\big)$ 
\ENDFOR
        \STATE $\Bar{w}_{t+1}^i=w_{t,K}^i$ and $\Bar{\theta}_{t+1}^i=\theta_{t,K}^i$
\STATE $w_{t+1,0}^i=w_{t,K}^i$, $\theta_{t+1,0}^i=\theta_{t,K}^i$, and $S_0=S_K$
		\ENDFOR
	\end{algorithmic}
\end{algorithm} 

Before diving into the details of Algorithm \ref{algorithm:FA}, we see that the learning dynamics are payoff-based and independent because agent $i$ only needs to observe the current state of the environment and the realized payoff to implement the learning dynamics (cf. Algorithm \ref{algorithm:FA} Line $5$). In addition, the learning dynamics are symmetric between the two players and
are driven by a single trajectory of Markovian samples. Therefore, Algorithm \ref{algorithm:FA} can be implemented in an online manner without the need to access a generative model or constantly reset the system.

\paragraph{Algorithm Details.}
For a fixed outer-loop index $t$, consider the update equations in the inner loop of Algorithm \ref{algorithm:FA}. Note that $\Bar{w}_t^i$ and $\Bar{\theta}_t$ are fixed in the inner loop. The iterate $\theta_{t+1,k}^i$ is obtained by simply taking a convex combination (with parameter $\beta_k$) between its previous iterate $\theta_{t,k}^i$ and $w_{t,k}^i$ (cf. Algorithm \ref{algorithm:FA} Line $4$). The parameter $\beta_k$ is also called the stepsize, or learning rate. The iterate $w_{t,k}^i$ is updated according to a projected SGD (cf. Algorithm \ref{algorithm:FA} Line $7$) for minimizing a variant of the Bellman error, which will be elaborated in detail in Section \ref{subsec:viewpoint}. In the special case of linear function approximation, since $\nabla q_{w_{t,k}^i}^i(S_k,A_k^i)=\phi^i(S_k,A_k^i)$, Algorithm \ref{algorithm:FA} Line $7$ is in spirit to the TD-learning in RL \cite{sutton1988learning}, where $\Delta_{t,k}^i$ is known as the temperal difference and $\alpha_k$ is the stepsize. We require $\beta_k\ll \alpha_k$ so that $\theta_{t,k}^i$ evolves at a much slower rate compared to that of $w_{t,k}^i$, which is why we call Algorithm \ref{algorithm:FA} two-timescale $Q$-learning.

In the outer loop of Algorithm \ref{algorithm:FA}, we update $\Bar{w}_t^i$ and $\Bar{\theta}_t^i$ by synchronizing them to the values of $w_{t,K}^i$ and $\theta_{t,K}^i$ from the last inner loop. Following the terminology used in the existing literature \cite{mnih2015human}, we call $\Bar{w}_t^i$ and $\Bar{\theta}_t^i$ the target network parameters. Note that in single-agent $Q$-learning, there is only one set of target network parameters, which corresponds to $\{\Bar{w}_t^i\}$ in Algorithm \ref{algorithm:FA}. Here, we have another set of target network parameters $\{\Bar{\theta}_t\}$, which we use to compute the policies and are designed for the setting of zero-sum stochastic games. In Algorithm \ref{algorithm:FA} Line $10$, we set the initializations of the iterates for the next inner loop to be their corresponding last iterates of the previous inner loop,  ensuring that Algorithm \ref{algorithm:FA} follows a single trajectory of Markovian samples.

\paragraph{Connection to the DQN.} Suppose that we use the deep neural net as a means for function approximation. Then the main difference between Algorithm \ref{algorithm:FA} and the DQN \cite{mnih2015human} is the two-timescale structure. In fact, if we do not choose
$\beta_k\ll\alpha_k$ but instead do the opposite by setting $\beta_k\equiv 1$. Since every time we simply assign the fast-timescale iterate $w_{t,k}^i$ to the slow-timescale iterate $\theta_{t,k}^i$ (cf. Algorithm \ref{algorithm:FA} Line $4$), there are essentially only two sets of iterates $\{w_{t,k}^i\}$ and $\Bar{w}_t^i$. In this case, Algorithm \ref{algorithm:FA} reduces to the DQN in \cite{mnih2015human}, with modular changes such as replacing the softmax by the hardmax and the use of experience replay for efficient sampling. 

\begin{remark}
    In light of the discussion in the previous paragraph, the main algorithmic novelty of Algorithm \ref{algorithm:FA} is essentially that we introduced another set of parameters $\theta_{t,k}^i$ (along with its target network $\Bar{\theta}_t^i$) to compute the policies instead of directly taking softmax based on the local $q$-function associated with $w_{t,k}^i$. Although the update equation for $\theta_{t,k}^i$ is a simple convex combination, we illustrate in Section \ref{sec:analysis} that such an update equation resembles the smoothed best-response dynamics in zero-sum games, which enables us to show the last-iterate convergence of Algorithm \ref{algorithm:FA} in the special case of linear function approximation.
\end{remark}

\subsection{The Viewpoint of Algorithm \ref{algorithm:FA} as the Minimax Value Iteration}\label{subsec:viewpoint}

In single-agent $Q$-learning with target networks, it was shown in the literature that the algorithm mimics the value iteration \cite{munos2008finite,chen2022target}. In this section, we also elaborate on how to view Algorithm \ref{algorithm:FA} as an approximate version of the minimax value iteration, which is a well-known algorithm for solving zero-sum stochastic games. This also serves as a starting point for our analysis as will be presented in Section \ref{sec:analysis}. We begin by introducing the minimax value iteration to provide context.  

\paragraph{The Minimax Value Iteration.} 
Given $i\in \{1,2\}$, let $\mathcal{B}^i:\mathbb{R}^{|\mathcal{S}|}\mapsto\mathbb{R}^{|\mathcal{S}|}$ be the minimax Bellman operator defined as $\mathcal{B}^i(v^i)(s)=\max_{\mu^i\in\Delta(\mathcal{A}^i)}\min_{\mu^{-i}\in\Delta(\mathcal{A}^{-i})}(\mu^i)^\top\mathcal{T}^i(v^i)(s)\mu^{-i}$ for all $s\in\mathcal{S}$,
where the operator $\mathcal{T}^i:\mathbb{R}^{|\mathcal{S}|}\mapsto \mathbb{R}^{|\mathcal{S}||\mathcal{A}^i||\mathcal{A}^{-i}|}$ is defined as
\begin{align}\label{def:need_for_Bellman}
	\mathcal{T}^i(v^i)(s,a^i,a^{-i})=R_i(s,a^i,a^{-i})+\gamma \mathbb{E}\left[v^i(S_1)\mid S_0=s,A_0^i=a^i,A_0^{-i}=a^{-i}\right]
\end{align}
for all $v^i\in\mathbb{R}^{|\mathcal{S}|}$ and $(s,a^i,a^{-i})$. Here, we recall our notation that $\mathcal{T}^i(v^i)(s)$ is an $|\mathcal{A}^i|\times |\mathcal{A}^{-i}|$ matrix with its $(a^i,a^{-i})$-th entry being $\mathcal{T}^i(v^i)(s,a^i,a^{-i})$.
In the minimax value iteration, Player $i$ initializes $v_0^i\in\mathbb{R}^{|\mathcal{S}|}$ arbitrarily and iteratively updates $v_t^i$ according to 
\begin{align}\label{eq:VI}
    v_{t+1}^i(s)=\mathcal{B}^i(v_t^i)(s),\quad \forall\,s\in\mathcal{S}.
\end{align}
It is well-known that the operator $\mathcal{B}^i(\cdot)$ is a contraction mapping \cite{littman1994markov}, hence having a unique fixed-point $v_*^i\in\mathbb{R}^{|\mathcal{S}|}$ \cite{banach1922operations}. As a result, the minimax value iteration converges geometrically fast to $v_*^i$. In addition, given $i\in \{1,2\}$, a joint policy $\pi=(\pi^1,\pi^2)$ satisfying  $\pi^i(s)\mathcal{T}^i(v_*^i)(s)\pi^{-i}(s)=\max_{\mu^i\in\Delta(\mathcal{A}^i)}\min_{\mu^{-i}\in\Delta(\mathcal{A}^{-i})}(\mu^i)^\top\mathcal{T}^i(v_*^i)(s)\mu^{-i}$ 
is a Nash equilibrium of the zero-sum stochastic game. Although the minimax value iteration is provably efficient, it is not a payoff-based independent learning algorithm, as each agent needs to know the underlying model of the stochastic game (i.e., the transition kernel and the reward function) to implement the algorithm. Next, we illustrate why Algorithm \ref{algorithm:FA} can be viewed as an approximate version of the minimax value iteration, while only using the realized payoffs for implementation.

We begin by reformulating Algorithm \ref{algorithm:FA}. Observe that the target network parameters $\Bar{\theta}_t^i$ and $\Bar{w}_t^i$ are only used in Algorithm \ref{algorithm:FA} Line $7$ and are synchronized to $\theta_{t,K}^i$ and $w_{t,K}^i$ in the outer loop. Therefore, letting $v_{t+1}^i(s)= \sigma_\tau(q_{\theta_{t,K}^i}^i(s))^\top \ceil{q_{\theta_{t,K}^i}^i(s)}$ for all $s\in\mathcal{S}$ and $t$, we can rewrite Algorithm \ref{algorithm:FA} in an equivalent form in Algorithm \ref{algorithm:reformulation}. Note that Algorithm \ref{algorithm:reformulation} is not meant for implementation but only used for the analysis. 

\begin{algorithm}[ht]
	\caption{Equivalent Formulation of Algorithm \ref{algorithm:FA}}
	\label{algorithm:reformulation} 
	\begin{algorithmic}[1]
		\STATE \textbf{Input:} Integers $K$ and $T$, initializations $w_{0,0}^i=\theta_{0,0}^i=0$, $v_0^i=0$, and $S_0\in\mathcal{S}$
  \FOR{$t=0,1,\cdots,T-1$}
		\FOR{$k=0,1,\cdots,K-1$}
		\STATE $\theta^i_{t,k+1}=\theta_{t,k}^i+\beta_k (w_{t,k}^i-\theta_{t,k}^i)$ and $\pi_{t,k+1}^i(s)=\sigma_\tau(q_{\theta_{t,k+1}^i}^i(s))$ for all $s\in\mathcal{S}$
		\STATE Play
		$A_k^i\sim \pi_{t,k+1}^i(\cdot\mid S_k)$ (against $A_k^{-i}$)  and observe $S_{k+1}\sim p(\cdot\mid S_k,A_k^i,A_k^{-i})$
		\STATE $w_{t,k+1}^i=\pj_M^i \big(w_{t,k}^i+\alpha_k\nabla\big[ q_{w_{t,k}^i}^i(S_k,A_k^i)\big](R_i(S_k,A_k^i,A_k^{-i})+\gamma v_t^i(S_{k+1}) -q_{w_{t,k}^i}^i(S_k,A_k^i))\big)$ 
        \ENDFOR
        \STATE $v_{t+1}^i(s)= \pi_{t,K}^i(s)^\top \ceil{q_{w_{t,K}^i}^i(s)}$ for all $s\in\mathcal{S}$
        \STATE $w_{t+1,0}^i=w_{t,K}^i$, $\theta_{t+1,0}^i=\theta_{t,K}^i$, and $S_0=S_K$
		\ENDFOR
	\end{algorithmic}
\end{algorithm}

To interpret Algorithm \ref{algorithm:reformulation} as the minimax value iteration, in view of Eq. (\ref{eq:VI}) and Algorithm \ref{algorithm:reformulation} Line $8$, we would like to have
\begin{align}
    q_{w_{t,K}^i}^i(s)\approx\;& \mathcal{T}^i(v_t^i)(s)\pi_{t,K}^{-i}(s),\quad \forall\,s\in\mathcal{S},\label{eq:q_elaboration}\\
    \pi^i_{t,K}(s)^\top \mathcal{T}^i(v_t^i)(s)\pi_{t,K}^{-i}(s)\approx \;&\max_{\mu^i\in\Delta(\mathcal{A}^i)}\min_{\mu^{-i}\in\Delta(\mathcal{A}^{-i})}(\mu^i)^\top \mathcal{T}^i(v_t^i)(s)\mu^{-i},\quad \forall\,s\in\mathcal{S}.\label{eq:pi_elaboration}
\end{align}
Note that
Eq. (\ref{eq:pi_elaboration}) means that the joint policy (at state $s$) $(\pi_{t,K}^i(s),\pi_{t,K}^{-i}(s))$  is an approximate Nash equilibrium of the matrix game with the payoff matrix being $\mathcal{T}^i(v_t^i)(s)$.

\paragraph{The Fast-Timescale Iterates.} We first elaborate on how Eq. (\ref{eq:q_elaboration}) can be achieved with Algorithm \ref{algorithm:reformulation} Line $6$, which updates the fast-timescale parameter $w_{t,k}^i$. Given $i\in \{1,2\}$ and $t\geq 0$, for any $v_t^i\in\mathbb{R}^{|\mathcal{S}|}$ and joint policy $(\pi_t^1,\pi_t^2)$ that are \textit{not} time-varying in $k$, consider minimizing the objective function
\begin{align*}
    J(w^i)=\;&\sum_{s\in\mathcal{S}}\mu_{\pi_t}(s)\sum_{a^i\in\mathcal{A}^i}\pi_t^i(a^i|s)\left(\sum_{a^{-i}\in\mathcal{A}^{-i}}\mathcal{T}^i(v_t^i)(s,a^i,a^{-i})\pi_t^{-i}(a^{-i}|s)-q_{w^i}^i(s,a^i)\right)^2
\end{align*}
using the projected SGD, where $\mu_{\pi_t}\in\mathbb{R}^{|\mathcal{S}|}$ represents the stationary distribution of the Markov chain $\{S_k\}$ induced by the joint policy $(\pi_t^1,\pi_t^2)$, assuming the uniform ergodicity of $\{S_k\}$. Specifically, with a sample $(S_k,A_k^1,A_k^2,S_{k+1})$ such that $S_k\sim \mu_{\pi_t}(\cdot)$, $A_k^i\sim \pi_t^i(\cdot|s)$ for $i\in \{1,2\}$, and $S_{k+1}\sim p(\cdot\mid S_k,A_k^1,A_k^2)$, in view of the explicit expression of the operator $\mathcal{T}^i(\cdot)$ (cf. Eq. (\ref{def:need_for_Bellman})), the projected SGD for minimizing $J(\cdot)$ is given as
\begin{align}\label{eq:interpret}
    w_{t,k+1}^i=\pj_M^i \big(w_{t,k}^i+\alpha_k\nabla\big[ q_{w_{t,k}^i}^i(S_k,A_k^i)\big](R_i(S_k,A_k^i,A_k^{-i})+\gamma v_t^i(S_{k+1}) -q_{w_{t,k}^i}^i(S_k,A_k^i))\big),
\end{align}
where $M$ is the projection radius. 
When using linear function approximation, the objective function $J(\cdot)$ is quadratic (hence strongly convex) in $w^i$. Therefore, with appropriately chosen stepsizes and a large enough $K$, the output $w_{t,K}^i$ of the projected SGD in Eq. (\ref{eq:interpret}) is guaranteed to converge to the global optimal solution of $J(\cdot)$. In addition, when the function approximation class has enough representation power, we would expect the $q$-function associated with $w_{t,K}^i$ to satisfy $q_{w_{t,K}^i}^i(s)\approx \mathcal{T}^i(v_t^i)(s)\pi_t^{-i}(s)$ for all $s\in\mathcal{S}$.

Although we motivated the projected SGD update (\ref{eq:interpret}) when the joint policy $(\pi_t^1,\pi_t^2)$ is fixed in updating $w_{t,k}^i$, the reasoning here can be easily extended to the inner loop of Algorithm \ref{algorithm:reformulation}, where $\pi_{t,k}^i$ is time-varying in $k$. Specifically, in the inner loop of Algorithm \ref{algorithm:reformulation}, observe that the parameters $\{\theta_{t,k}^i\}$  of the policies $\{\pi_{t,k}^i\}$ are updated in a slower timescale compared to the parameters $\{w_{t,k}^i\}$ of the local $q$-functions $\{q_{w_{t,k}^i}^i\}$. Therefore, from the perspective of $w_{t,k}^i$, the policy $\pi_{t,k}^i$ is as if it were stationary in $k$. As a result, we would expect $q_{w_{t,K}^i}^i(s)\approx \mathcal{T}^i(v_t^i)(s)\pi_{t,K}^{-i}(s)$, as desired in Eq. (\ref{eq:q_elaboration}). To make the claim rigorous, we need to carefully analyze the two-timescale stochastic approximation in the inner loop of Algorithm \ref{algorithm:reformulation}, which is done in Appendix \ref{ap:inner_loop}.

\paragraph{The Slow-Timescale Iterates.} 
Now that we have elaborated that the fast-timescale iterate in the inner loop of Algorithm \ref{algorithm:reformulation} provides us with a $q_{w_{t,K}^i}^i$ such that $q_{w_{t,K}^i}^i(s)\approx\mathcal{T}^i(v_t^i)(s)\pi_{t,K}^{-i}(s)$, it remains to demonstrate that the slow-timescale iterates $\{\theta_{t,k}^i\}$ provide us with a joint policy that satisfies Eq. (\ref{eq:pi_elaboration}). The problem here is essentially to solve a matrix game (for each state $s$) with the payoff matrix $\mathcal{T}^i(v_t^i)(s)$. To achieve that, one of the most classical and promising learning dynamics is the smoothed best-response dynamics \cite{ref:Fudenberg93,harris1998rate,chen2023finite}, which maintain a policy $\pi_{t,k}^i$, and update it iteratively towards the smoothed best response to the opponent's latest policy according to
\begin{align}\label{eq:best_response}
	\pi_{t,k+1}^i(s)=\pi_{t,k}^i(s)+\beta_k(\sigma_\tau (\mathcal{T}^i(v_t^i)(s)\pi_{t,k}^{-i}(s))-\pi_{t,k}^i(s)),\quad \forall\;s\in\mathcal{S},
\end{align}
where $\beta_k$ is the stepsize and $\sigma_\tau(\cdot)$ is the softmax operator. Suppose that both players follow the smoothed best-response dynamics, they are guaranteed to find the Nash distribution \cite{leslie2005individual,hofbauer2005learning}, which approximates a Nash equilibrium as the temperature $\tau$ goes to zero. There are two challenges in implementing Eq. (\ref{eq:best_response}). First, it requires the quantity $\mathcal{T}^i(v_t^i)(s)\pi_{t,k}^{-i}(s)$ (which involves the opponent's policy and the underlying model), and second, it operates directly in the policy space instead of in the parameter space, which violates the purpose of using function approximation. The first challenge has been overcome by introducing the fast-timescale iterate $\{w_{t,k}^i\}$ to estimate the quantity $\mathcal{T}^i(v_t^i)(s)\pi_{t,k}^{-i}(s)$ via the projected SGD. To overcome the second challenge, we need to extend the idea of the smoothed best-response dynamics to the parameter space of the function approximation class.

In view of Eq. (\ref{eq:best_response}), the key insight here is that, in each iteration, the policy is updated by taking a step towards the smoothed best-response to the opponent's latest policy. Inspired by this idea,
in Algorithm \ref{algorithm:reformulation} Line $4$, we propose to update the policy parameter $\theta_{t,k+1}^i$ by taking a convex combination between the previous policy parameter $\theta_{t,k}^i$ and the fast-timescale iterate $w_{t,k}^i$. Recall that the $q$-function associated with $w_{t,k}^i$ is constructed as an estimator of the marginalized payoff to the opponent $\mathcal{T}^i(v_t^i)(s)\pi_{t,k}^{-i}(s)$.
Therefore, this is in the same spirit as the smoothed best-response dynamics in the policy space as, in each iteration, the parameter of the policy is updated by taking a step towards the parameter of the smoothed best response, which is estimated through the local $q$-function associated with iterate $w_{t,k}^i$. 

\begin{remark}
    Although Algorithm \ref{algorithm:reformulation} Line $4$ is simple, intuitive, and mimics the smoothed best-response  dynamics in the parameter space, rigorously analyzing it is particularly challenging. This is because, when using softmax policy, even under linear function approximation, a linear update in the parameter space does not imply a linear update in the policy space as in the smoothed best-response dynamics in Eq. (\ref{eq:best_response}). Therefore, in a Lyapunov-based approach, there is no existing Lyapunov function to study the update equation for the slow-timescale iterates, the construction of which presents our main technical novelty. This will be illustrated in detail in Section \ref{sec:analysis}.
\end{remark}

\section{Last-Iterate Finite-Sample Analysis}\label{sec:results}

In the rest of this paper, we consider using linear function approximation. Denote $\Pi=\{(\pi^1,\pi^2)\mid \pi^i\in \Pi^i, \,\forall\,i\in \{1,2\}\}$ as the set of joint policies that are representable using our softmax policy class. Before presenting our main theoretical results, we first state our assumptions.

\begin{assumption}\label{as:ergodicity}
\textit{For any joint policy $(\pi^1,\pi^2)\in\Pi$, the Markov chain $\{S_k\}$ induced by $(\pi^1,\pi^2)$ is irreducible and aperiodic. In addition,
\begin{enumerate}[(1)]
    \item there exist $C>0$ and $\rho\in (0,1)$ such that $\sup_{\pi\in \Pi}\max_{s\in\mathcal{S}}\left\|P_\pi(s,\cdot)-\mu_\pi(\cdot)\right\|_{\text{TV}}\leq C\rho^k$ for all $k\geq 0$, where $P_\pi\in\mathbb{R}^{|\mathcal{S}|\times |\mathcal{S}|}$ and $\mu_\pi\in\Delta(\mathcal{S})$ are the transition matrix and the stationary distribution of the Markov chain $\{S_k\}$ induced by $\pi$, respectively; 
    \item it holds that $\lambda:=\min_{i\in \{1,2\}}\inf_{\pi\in \Pi}\lambda_{\min}((\Phi^i)^\top D_\pi^i \Phi^i)>0$, where $D_\pi^i\in\mathbb{R}^{|\mathcal{S}||\mathcal{A}^i|\times |\mathcal{S}||\mathcal{A}^i|}$ is a diagonal matrix with diagonal components $\{\mu_\pi(s)\pi^i(a^i|s)\}_{(s,a^i)\in\mathcal{S}\times \mathcal{A}^i}$ and $\lambda_{\min}(\cdot)$ returns the smallest eigenvalue of a symmetric matrix.
\end{enumerate}}
\end{assumption}

Assumption \ref{as:ergodicity} is standard in the existing literature studying RL algorithms under time-varying policies \cite{zou2019finite,khodadadian2021finite,chenziyi2022sample,chenzy2021sample,xu2021sample,wu2020finite,qiu2019finite}, which ensures that all policies encountered from the algorithm trajectory have enough exploration. Since we use softmax policies and both the fast and the slow-timescale iterates are uniformly bounded (due to the projection operator $\pj_M^i(\cdot)$), there exists $\ell>0$ such that $\pi_{t,k}^i(a^i\mid s)\geq \ell$ for all $(s,a^i)$, $i\in \{1,2\}$, and $t,k$. Therefore, with some additional work, it can be shown that Assumption  \ref{as:ergodicity} is satisfied if the following (weaker) assumption is imposed. 

\begin{assumption}\label{as:weaker}
	\textit{There exists a joint policy $(\pi_b^1,\pi_b^2)$ such that the induced Markov chain $\{S_k\}$ is irreducible and aperiodic.}
\end{assumption}

The implication from Assumption \ref{as:weaker} to Assumption \ref{as:ergodicity} was formally established in \cite[Lemma 4.1]{chen2023finite}. However, in that case, the convergence rate of the learning dynamics involves problem-dependent constants that are implicit functions of the underlying transition probabilities of the stochastic game. In this work, for ease of exposition, we directly impose Assumption \ref{as:ergodicity}. Given a precision level $\delta>0$, we define 
\begin{align}\label{eq:def:mixing_time}
    z_\delta=\min\left\{k\geq 0 \,:\,\sup_{\pi\in \Pi}\max_{s\in\mathcal{S}}\left\|P_\pi^k(s,\cdot)-\mu_\pi(\cdot)\right\|_{\text{TV}}
    \leq \delta\right\},
\end{align}
which can be viewed as the uniform mixing time of the Markov chain $\{S_k\}$ induced by any joint policy from the policy class $\Pi$. Under Assumption \ref{as:ergodicity}, it is easy to see that 
$z_\delta=\mathcal{O}(\log(1/\delta))$.

To state the next assumption, we need to define a variant of the inherent Bellman error. For any joint policy $(\pi^1,\pi^2)\in\Pi$ and $i\in \{1,2\}$, let $\Bar{\mathcal{H}}_\pi^i:\mathbb{R}^{|\mathcal{S}||\mathcal{A}^i|}\mapsto\mathbb{R}^{|\mathcal{S}||\mathcal{A}^i|}$ be an operator defined as
\begin{align*}
    [\Bar{\mathcal{H}}_\pi^i(q^i)](s,a^i)=\sum_{a^{-i}\in\mathcal{A}^{-i}}\left(R_i(s,a^i,a^{-i})+\gamma \mathbb{E}\left[\pi^i(S_1)^\top q^i(S_1)\,\middle|\,S_0=s,A_0^i=a^i\right]\right)\pi^{-i}(a^{-i}\mid s)
\end{align*}
for all $(s,a^i)\in\mathcal{S}\times \mathcal{A}^i$ and $q^i\in\mathbb{R}^{|\mathcal{S}||\mathcal{A}^i|}$, where we recall our notation that $\pi^i(s)\in\Delta(\mathcal{A}^i)$ and $q^i(s)\in\mathbb{R}^{|\mathcal{A}^i|}$ for all $s\in\mathcal{S}$. Define 
\begin{align}\label{def:inherent Bellman error}
    \mathcal{E}=\sum_{i=1,2}\sup_{\pi\in\Pi,\Tilde{w}^i\in\mathbb{R}^{d_i}}\inf_{w^i\in\mathbb{R}^{d_i}}\left\|\Phi^i w^i-\Bar{\mathcal{H}}_\pi^i(\ceil{\Phi^i\Tilde{w}^i})\right\|_\infty.
\end{align}

\begin{assumption}\label{as:Bellman_Completeness}
\textit{It holds that $\mathcal{E}=0$.}    
\end{assumption}

In the special case of tabular setting, since $\Phi^i=I_{|\mathcal{S}||\mathcal{A}^i|}$ is the identity matrix, Assumption \ref{as:Bellman_Completeness} is automatically satisfied. More generally, Assumption \ref{as:Bellman_Completeness} means that the function approximation class is rich enough, and can be viewed as an extension of the Bellman completeness assumption from the single-agent setting \cite{munos2008finite,zanette2020learning,agarwal2022online} to the setting of solving zero-sum stochastic games with independent learning. 

Next, we state the condition for choosing the stepsizes.
\begin{condition}\label{con:stepsize}
We choose $\tau\leq 1/(1-\gamma)$, $\alpha_0\leq 1/\lambda$, and $c_{\alpha,\beta}:=\beta_k/\alpha_k\leq \min(\frac{1}{64L_p A_{\max}},\lambda)$, where $A_{\max}=\max\{|\mathcal{A}^1|,|\mathcal{A}^2|\}$ and $L_p:=\frac{\log(\rho/C)}{\log(\rho)}+\frac{1}{1-\rho}$.
\end{condition}

Now, we present the last-iterate finite-sample analysis of Algorithm \ref{algorithm:FA} when using constant stepsizes $\alpha_k\equiv \alpha$ and $\beta_k\equiv \beta=c_{\alpha,\beta}\alpha$, where $c_{\alpha,\beta}\in (0,1)$ is the stepsize ratio. The results for using diminishing stepsizes are straightforward extensions. 

\begin{theorem}\label{thm:main}
Suppose that we use linear function approximation, Assumptions \ref{as:ergodicity} and \ref{as:Bellman_Completeness} are satisfied, the stepsizes verify Condition \ref{con:stepsize}, and the projection radius $M\geq \lambda^{-1/2}(1-\gamma)^{-1}$, Then, when both players follow the learning dynamics presented in Algorithm \ref{algorithm:FA}, we have for any $T\geq 0$ and $K\geq z_\beta$ that
\begin{align}\label{eq:finite_sample}
    \mathbb{E}[\text{NG}(\pi_{T,K}^1,\pi_{T,K}^2)]\lesssim \;&\underbrace{\frac{A_{\max}^2T\gamma^{T-1}}{\tau^2(1-\gamma)^4}}_{\mathcal{E}_1} +\underbrace{\frac{\tau \log(A_{\max})}{(1-\gamma)^2}}_{\mathcal{E}_2}+\underbrace{\frac{A_{\max}^3K\left(1-\beta/2\right)^{\frac{K-z_\beta-1}{2}}}{\tau^3\lambda(1-\gamma)^7}}_{\mathcal{E}_{3,1}}\nonumber\\
    &+\underbrace{\frac{A_{\max}^2z_\beta\alpha^{\frac{1}{2}}}{\tau^{\frac{5}{2}}\lambda^{\frac{5}{4}} (1-\gamma)^{\frac{13}{2}}}+\frac{A_{\max}^3\beta}{\tau^3 \lambda^2(1-\gamma)^7\alpha} +\frac{A_{\max}^3z_\beta^2\alpha}{\tau^4\lambda^{5/2} (1-\gamma)^7}+\frac{A_{\max}^5\beta^2}{\tau^5 \lambda^4(1-\gamma)^8\alpha^2}}_{\mathcal{E}_{3,2}},
\end{align}
where $z_\beta$ is the mixing time with precision $\beta$ (see Eq. (\ref{eq:def:mixing_time})), and we use $a\lesssim b$ to mean that there exists an absolute constant $c> 0$ such that $a\leq cb$. As a result, given $\epsilon>0$, to achieve $\mathbb{E}[\text{NG}(\pi_{T,K}^1,\pi_{T,K}^2)]\leq \epsilon$, the sample complexity is $\tilde{\mathcal{O}}(\text{poly}(1/\epsilon.1/\lambda,1/(1-\gamma),A_{\max})$.
\end{theorem}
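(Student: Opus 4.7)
The plan is to decompose $\mathbb{E}[\text{NG}(\pi_{T,K}^1,\pi_{T,K}^2)]$ into contributions from (i) the outer loop viewed as approximate minimax value iteration, (ii) the entropy-regularization bias of the Nash distribution, and (iii) the two-timescale stochastic approximation (SA) in the inner loop. Working with the equivalent formulation in Algorithm \ref{algorithm:reformulation}, the outer-loop auxiliary sequence $v_t^i$ is designed to mimic $v_{t+1}^i = \mathcal{B}^i(v_t^i)$ up to a per-iteration error $e_t^i$ generated by the inner loop. First I would establish the scalar recursion $\|v_{t+1}^i - v_*^i\|_\infty \le \gamma \|v_t^i - v_*^i\|_\infty + e_t^i$ using the $\gamma$-contraction of $\mathcal{B}^i$, unroll it into $\gamma^T$ plus a discounted sum of the $e_t^i$, and bound $\text{NG}(\pi_{T,K}^1,\pi_{T,K}^2)$ in terms of $\|v_{T-1}^i - v_*^i\|_\infty$ plus the accuracy with which $(\pi_{T,K}^1(s),\pi_{T,K}^2(s))$ solves the induced matrix game with payoff $\mathcal{T}^i(v_{T-1}^i)(s)$. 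The entropy-regularization gap between a Nash distribution and an exact Nash equilibrium produces $\mathcal{E}_2 = O(\tau\log A_{\max}/(1-\gamma)^2)$, while the accumulation of inner-loop errors produces $\mathcal{E}_1$.

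Second, I would carry out a finite-sample analysis of the inner-loop two-timescale SA. The fast-timescale iterate $w_{t,k}^i$ is a projected linear SA with Markovian noise targeting the quadratic objective $J(\cdot)$ of Section \ref{subsec:viewpoint}, whose minimizer $w_*^i(\theta_{t,k}^i,v_t^i)$ satisfies $\Phi^i w_*^i \approx \mathcal{T}^i(v_t^i)(\cdot)\,\pi_{\theta_{t,k}^i}^{-i}(\cdot)$ thanks to Assumption \ref{as:Bellman_Completeness}. Using standard drift-plus-mixing-time arguments under Assumption \ref{as:ergodicity}(1), combined with the strong monotonicity $\lambda > 0$ from Assumption \ref{as:ergodicity}(2), I would obtain a bound of the form
\begin{align*}
\mathbb{E}\|w_{t,k}^i - w_*^i(\theta_{t,k}^i,v_t^i)\|_2^2 \;\lesssim\; \frac{z_\beta\alpha}{\lambda} + \frac{\beta^2}{\lambda^2\alpha^2} + \text{(transient)},
\end{align*}
where the $\beta^2/\alpha^2$ term quantifies how the drift of the slower iterate perturbs the fast-timescale fixed point, and the $z_\beta\alpha$ term captures the Markovian noise after mixing.

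Third, and this is where the main difficulty lies, I would analyze the slow-timescale $\theta_{t,k}^i$. After a change of variable that maps $\Phi^i \theta_{t,k}^i$ into policy space, the update in Line $4$ of Algorithm \ref{algorithm:reformulation} formally resembles the smoothed best-response dynamics \eqref{eq:best_response}. A first candidate Lyapunov function is the entropy-regularized Nash gap (ERNG) of the matrix game at payoff $\mathcal{T}^i(v_t^i)(s)$, but the ERNG lives on the simplex, not on $\mathbb{R}^{d_i}$, and the change of variable is problematic unless $\Phi_s^i$ is square-invertible. Following the roadmap of Section \ref{subsec:contribution}, I would construct $W(\theta)$ as the infimal convolution of the ERNG with the square of a carefully chosen seminorm, i.e.\ a generalized Moreau envelope, and then prove a one-step drift inequality of the form
\begin{align*}
\mathbb{E}[W(\theta_{t,k+1}^i)-W(\theta_{t,k}^i)\mid \mathcal{F}_{t,k}] \;\le\; -\frac{\beta}{2}\, W(\theta_{t,k}^i) + O(\beta\tau\log A_{\max}) + O(\beta)\,\mathbb{E}[\|w_{t,k}^i - w_*^i\|_2\mid \mathcal{F}_{t,k}].
\end{align*}
Iterating this across $k = 0,\ldots,K-1$ yields the contraction factor $(1-\beta/2)^{K-z_\beta-1}$ appearing in $\mathcal{E}_{3,1}$, and substituting the fast-timescale bound produces each of the four summands of $\mathcal{E}_{3,2}$.

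Finally, I would combine the inner- and outer-loop bounds, absorb lower-order terms using Condition \ref{con:stepsize}, and recover the decomposition in \eqref{eq:finite_sample}. The polynomial sample complexity then follows by balancing $\mathcal{E}_1,\mathcal{E}_2,\mathcal{E}_{3,1},\mathcal{E}_{3,2}$ against $\epsilon$ with $T = \Theta(\log(1/\epsilon))$, $\tau = \Theta(\epsilon(1-\gamma)^2/\log A_{\max})$, and $\alpha,\beta,K$ chosen as appropriate polynomials of $\epsilon,\lambda,1-\gamma,A_{\max}$. I expect the dominant obstacle to be verifying the drift inequality for the generalized Moreau envelope, since one must simultaneously (i) cope with the non-smoothness introduced by the $\max$/$\min$ defining the ERNG, (ii) handle the mismatch between parameter-space and policy-space updates when $\Phi^i$ lacks the invertibility needed by the naive change of variable, and (iii) absorb the Markovian noise and the fast-timescale tracking error via the cross-timescale coupling term.
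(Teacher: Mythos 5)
Your plan follows essentially the same route as the paper: bound the Nash gap via the outer-loop contraction of the minimax Bellman operator, charge $\tau\log A_{\max}$ to the softmax regularization, analyze the fast timescale as projected linear SA with Markovian noise under Assumptions \ref{as:ergodicity} and \ref{as:Bellman_Completeness}, and control the slow timescale with a generalized Moreau envelope of the entropy-regularized Nash gap, then solve the coupled recursions. The individual drift inequalities you sketch match the paper's Lemmas in Appendices \ref{ap:Nash_Gap}--\ref{ap:inner_loop} up to constants.

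There is, however, one genuine missing piece. Because $v_t^1$ and $v_t^2$ are maintained \emph{independently} by the two players, the induced matrix games in the inner loop are not zero-sum: in general $\mathcal{T}^1(v_t^1)(s)+(\mathcal{T}^2(v_t^2)(s))^\top\neq 0$. The negative-drift argument for the Moreau-envelope Lyapunov function only closes up to a cross term of the form $(\sigma_\tau(y_k)-p)^\top (A+B^\top)(\sigma_\tau(x_k)-q)$, which does not vanish and must be bounded by $\max_{i,j}|A(i,j)+B(j,i)|\le \gamma\|v_t^1+v_t^2\|_\infty$. Your proposed one-step drift inequality for $W(\theta)$ has no such term, and the $O(\beta\tau\log A_{\max})$ remainder you write in its place does not account for it. The paper handles this by introducing a fourth Lyapunov function $\mathcal{L}_{\text{sum}}(t)=\|v_t^1+v_t^2\|_\infty$, proving it contracts as $\gamma\mathcal{L}_{\text{sum}}(t)+2\mathcal{L}_w(t,K)^{1/2}$ (Lemma \ref{le:v_sum}), and threading it through both the policy drift inequality (Lemma \ref{le:policy}) and the Nash-gap decomposition (Lemma \ref{le:Bound_Nash_Gap}). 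Without this, the slow-timescale recursion does not close and the $\mathcal{E}_{3,2}$ terms cannot be recovered; you should add this fourth coupled recursion to your plan.
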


To the best of our knowledge, Theorem \ref{thm:main} presents the first last-iterate finite-sample analysis of payoff-based independent learning that successfully incorperates function approximation. We next provide an interpretation for each term on the RHS of Eq. (\ref{eq:finite_sample}). 

The term $\mathcal{E}_1$ represents the error in the outer loop. As illustrated in Section \ref{subsec:viewpoint}, the outer loop of Algorithm \ref{algorithm:FA} is an approximation of the minimax value iteration. Therefore, since the minimax value iteration converges geometrically fast, the first term also converges geometrically in $T$. The term $\mathcal{E}_2$ represents the error due to using the softmax policy class with a fixed temperature $\tau$. To see why it arises, consider a special case where the stochastic game has a unique Nash equilibrium, which consists of a pair of deterministic policies. In this case, due to the deterministic nature of the Nash equilibrium and the stochastic nature of the softmax policies, we cannot make the Nash gap vanish with a fixed temperature $\tau$. As a result, the error term $\mathcal{E}_2$ is proportional to $\tau$, and can be made arbitrarily small by using a small enough $\tau$. The terms $\mathcal{E}_{3,1}$ and $\mathcal{E}_{3,2}$ on the RHS of Eq. (\ref{eq:finite_sample}) represent the convergence error in the inner loop of Algorithm \ref{algorithm:FA}, which is a two-timescale stochastic approximation algorithm. 
The term $\mathcal{E}_{3,1}$ goes to zero geometrically as $K$ goes to infinity and the terms in $\mathcal{E}_{3,2}$ are constants that are functions of the stepsize for the fast-timescale and the ratio between the two stepsizes. 

\section{Construction of the Lyapunov Function for the Slow-Timescale Iterates}\label{sec:analysis}

Throughout this section, we will work with Algorithm \ref{algorithm:reformulation} and linear function approximation. The overall framework for our proof is a Lyapunov-based approach. As illustrated in Section \ref{subsec:viewpoint}, the outer loop of Algorithm \ref{algorithm:reformulation} is designed to approximate the minimax value iteration, and the inner loop is designed to solve an induced matrix game for each state $s\in\mathcal{S}$ using a two-timescale stochastic approximation algorithm, where the fast-timescale iterates are updated using TD-learning to estimate the marginalized payoff to the opponent, and the slow-timescale iterates are updated in spirit to the smoothed best-response dynamics, albeit in the parameter space of the function approximation class. Inspired by \cite{chen2023finite}, we construct a Lyapunov function for each set of the iterates (i.e., $\{v_t^i\}$, $\{w_{t,k}^i\}$, and $\{\theta_{t,k}^i\}$), derive a Lyapunov drift inequality for each, and finally combine them together to establish the finite-sample bounds. The main challenge here (see the last remark of Section \ref{subsec:viewpoint} for more details) is to construct a valid Lyapunov function to study the slow-timescale iterates $\{\theta_{t,k}^i\}$, which will be the focus of this section. 

Since we are concerning the inner loop of Algorithm 
\ref{algorithm:reformulation}, we will omit the subscript $t$. To begin with, due to the fact that $\Phi^i$ has linearly independent columns, Algorithm \ref{algorithm:reformulation} Line $4$ can be equivalently written as 
\begin{align}
	q^i_{\theta_{k+1}^i}(s)=\;&q^i_{\theta_{k}^i}(s)+\beta_k \left(q^i_{w_{k}^i}(s)-q^i_{\theta_{k}^i}(s)\right)\nonumber\\
	=\;&q^i_{\theta_{k}^i}(s)+\beta_k \left(\mathcal{T}^i(v^i)(s)\sigma_\tau(q^{-i}_{\theta_{k}^{-i}}(s))-q^i_{\theta_{k}^i}(s)\right)+\beta_k \left(q^i_{w_{k}^i}(s)-\mathcal{T}^i(v^i)(s)\sigma_\tau(q^i_{\theta_{k}^{-i}}(s)\right)\label{eq:policy_equivalent1}
\end{align}
for all $s\in\mathcal{S}$. Recall that we designed the update equation for the fast-timescale iterates $\{w_k^i\}$ precisely to estimate $\mathcal{T}^i(v^i)(s)\sigma_\tau(q_{\theta_k^{-i}}^{-i}(s))=\mathcal{T}^i(v^i)(s)\pi_k^{-i}(s)$. Therefore, the last term in the RHS of Eq. (\ref{eq:policy_equivalent1}) corresponds to the estimation error of the fast-timescale iterate $w_k^i$, and can be controlled by studying Algorithm \ref{algorithm:reformulation} Line $6$. As a result, to understand the evolution of the slow-timescale iterates $\{\theta_k^i\}$, it essentially reduces to studying the following update equation:
\begin{equation}\label{eq:policy_equivalent_update}
	x_0^i\in\mathbb{R}^{|\mathcal{A}^i|},\quad x_{k+1}^i=x_k^i+\beta_k(X_i\sigma_\tau(x_k^{-i})-x_k^i),\quad i\in \{1,2\},
\end{equation}
where $X_i\in\mathbb{R}^{|\mathcal{A}^i|\times |\mathcal{A}^{-i}|}$ for $i\in \{1,2\}$. To make the connection between Eqs. (\ref{eq:policy_equivalent1}) and (\ref{eq:policy_equivalent_update}), fix $s\in\mathcal{S}$, identify $x_k^i=q_{\theta_k^i}^i(s)$ and $X_i=\mathcal{T}^i(v^i)(s)$ for $i\in \{1,2\}$. 
For the purpose of studying the slow-timescale iterates, we assume that $X_1+X_2^\top =0$\footnote{In Algorithm \ref{algorithm:reformulation},
	we do not necessarily have $\mathcal{T}^1(v_t^1)(s)+(\mathcal{T}^2(v_t^2)(s))^\top=0$ because $v_t^1$ and $v_t^2$ are independently maintained by players $1$ and $2$, and do not sum up to zero in general. Such a non-zero sum structure of the induced matrix game in the inner loop has to be taken into account (see Appendix \ref{ap:pf}), but is not the focus of this section.}. Our goal is to show that $(x_k^1,x_k^2)$ of Eq. (\ref{eq:policy_equivalent_update}) converges to $(x_*^1,x_*^2)$ that satisfies $x_*^i=X_i\sigma_\tau(x_*^{-i})$ for $i\in\{1,2\}$, in which case the induced policy pair $(\sigma_\tau(x_*^1),\sigma_\tau(x_*^2))$ is an approximate Nash equilibrium (or the Nash distribution \cite{leslie2003convergent}) of the matrix game with payoff matrices $X_1$ and $X_2$. 
In the rest of this section, we present the roadmap of the construction of our Lyapunov function to study $(x_k^1,x_k^2)$ generated by Eq. (\ref{eq:policy_equivalent_update}).

\subsection{The Change of Variable}
As illustrated in Section \ref{subsec:viewpoint}, we view Algorithm \ref{algorithm:reformulation} Line $4$ as an extension of the smoothed best-response dynamics implemented in the parameter space of the function approximation class. To formally make the connection, we state the smoothed best-response dynamics for solving the zero-sum matrix game with payoff matrices $X_1$ and $X_2$ in the following:
\begin{align}\label{eq:SBRD}
	\hat{\pi}_{k+1}^i=\hat{\pi}_k^i+\beta_k(\sigma_\tau(X_i\hat{\pi}_k^{-i})-\hat{\pi}_k^i),\quad i\in \{1,2\}.
\end{align}
Next, we present a change of variable to connect Eq. (\ref{eq:SBRD}) with Eq. (\ref{eq:policy_equivalent_update}), which relies on the following assumption.

\begin{assumption}\label{as:invertibility}
	The payoff matrix $X_1$ is invertible.
\end{assumption}
\begin{remark}
	Since $X_1+X_2^\top =0$, $X_1$ being invertible also implies $X_2$ being invertible. We want to emphasize that Assumption \ref{as:invertibility} is made only for the illustration of the construction of our Lyapunov function. In fact, neither Theorem \ref{thm:main} nor the final form of our Lyapunov function for studying $\{\theta_k^i\}$ requires this assumption.
\end{remark}

Under Assumption \ref{as:invertibility}, by multiplying $X_i^{-1}$ on both sides of Eq. (\ref{eq:policy_equivalent_update}) and denoting $\zeta_k^{-i}=X_i^{-1}x_k^i$ for $i\in \{1,2\}$, we can write Eq. (\ref{eq:policy_equivalent_update}) equivalently as
\begin{align}\label{eq:after_change_of_basis}
	\zeta_{k+1}^i=\zeta_k^i+\beta_k(\sigma_\tau(X_i\zeta_k^{-i})-\zeta_k^i),\quad i\in \{1,2\}.
\end{align}
Observe that Eq. (\ref{eq:after_change_of_basis}) seems to be identical to the smoothed best-response dynamics presented in Eq. (\ref{eq:SBRD}).
However, we want to highlight one major difference. Note that the iterate $\zeta_k^i$ is obtained from a change of variable. Therefore, $\zeta_k^i$ is \textit{not} necessarily a policy (or a probability distribution) as in the smoothed best-response dynamics. In view of Eq. (\ref{eq:after_change_of_basis}), one may ask that, if we initialized $\zeta_0^i$ in the probability simplex $\Delta(\mathcal{A}^i)$, we would be able to show that $\zeta_k^i\in\Delta(\mathcal{A}^i)$ for all $k\geq 0$ using an induction argument. However, this is also not possible because of the following two reasons. \textit{(i)} Recall from Eqs. (\ref{eq:policy_equivalent1}) and (\ref{eq:policy_equivalent_update}) that $X_i$ corresponds to $\mathcal{T}^i(v^i)(s)$ and $x_0^i$ corresponds to $q_{\theta_0^i}^i(s)$. Therefore, in order to ensure that $\zeta_0^{-i}=X_i^{-1}x_0^i=\mathcal{T}^i(v^i)(s)q_{\theta_0^i}^i(s)\in\Delta(\mathcal{A}^i)$, we would require access to the underlying model parameters to compute $\mathcal{T}^i(v^i)(s)$. This is not possible in payoff-based independent learning. \textit{(ii)} Recall that Eq. (\ref{eq:policy_equivalent_update}) is a simplified version of Eq. (\ref{eq:policy_equivalent1}) by ignoring the estimation error of the fast-timescale iterates $\{w_k^i\}$, which is stochastic by nature. Therefore, even if we managed to initialize $\theta_0^i$ such that $\zeta_0^i\in\Delta(\mathcal{A}^i)$, the stochastic error in $w_k^i$ may drive $\zeta_k^i$ out of the probability simplex.

In summary, by performing a change of variable, the resulting update equation (cf. Eq. (\ref{eq:after_change_of_basis})) is identical in form to the smoothed best-response dynamics. However, there are two issues:
\begin{enumerate}[(1)]
	\item \textbf{Issue 1.} the iterates are not necessarily probability distributions;
	\item \textbf{Issue 2.} the change of variable requires the invetibility of $X_1$, which may not hold in general.
\end{enumerate}
In the next two subsections, we will start with the Lyapunov function for studying the smoothed best-response dynamics, i.e., the regularized Nash gap, and work our way back to construct the Lyapunov function for studying $(x_k^1,x_k^2)$ generated by Eq. (\ref{eq:policy_equivalent_update}), while resolving the two issues stated above. 

\subsection{The Moreau Envelope for a Smooth Domain Extension}

In the existing literature, to study the smooth best-response dynamics (\ref{eq:SBRD}), the regularized Nash gap $\text{RNG}_\tau:\Delta(\mathcal{A}^1)\times \Delta(\mathcal{A}^2)\mapsto\mathbb{R}$ defined as
\begin{align}\label{eq:RNG}
	\text{RNG}_\tau(\hat{\pi}^1,\hat{\pi}^2)=\sum_{i=1,2}\max_{\mu^i\in  \Delta(\mathcal{A}^i)}\{(\mu^i-\hat{\pi}^i)^\top X_i\hat{\pi}^{-i}+\tau \nu(\mu^i)-\tau \nu(\hat{\pi}^i)\},\quad \forall\,(\hat{\pi}^1,\hat{\pi}^2),
\end{align}
has been shown to be a valid Lyapunov function \cite{hofbauer2005learning}, where $\nu(\mu^i)=-\sum_{a^i}\mu^i(a^i)\log(\mu^i(a^i))$ is the entropy function.

Naturally, we want to use $\text{RNG}_\tau(\cdot,\cdot)$ to study $(\zeta_k^1,\zeta_k^2)$ generated by Eq. (\ref{eq:after_change_of_basis}). However, since $\zeta_k^i$ is not necessarily a probability distribution, the entropy function of $\zeta_k^i$ is not even well-defined, which prevents us from directly using $\text{RNG}_\tau(\cdot,\cdot)$. Therefore, we need to extend the domain of the regularized Nash gap from the joint probability simplex $\Delta(\mathcal{A}^1)\times \Delta(\mathcal{A}^2)$ to the joint real vector space $\mathbb{R}^{|\mathcal{A}^1|}\times \mathbb{R}^{|\mathcal{A}^2|}$. 

To provide a starting point, we first present a naive way to extend the domain of the regularized Nash gap. For $i\in \{1,2\}$, let $\Bar{V}^i:\mathbb{R}^{|\mathcal{A}^1|}\times \mathbb{R}^{|\mathcal{A}^2|}\mapsto\mathbb{R}$ be defined as
\begin{align}\label{eq:middle_guy}
	\Bar{V}^i(\zeta^1,\zeta^2)=\begin{dcases}
		\max_{u^i\in  \Delta(\mathcal{A}^i)}\{(u^i-\zeta^i)^\top X_i\zeta^{-i}+\tau \nu(u^i)-\tau \nu(\zeta^i)\},&\zeta^i\in \Delta(\mathcal{A}^i),\\
		+\,\infty,&\text{Otherwise},
	\end{dcases}
\end{align}
and define $\Bar{V}:\mathbb{R}^{|\mathcal{A}^1|}\times \mathbb{R}^{|\mathcal{A}^2|}\mapsto\mathbb{R}$ as $\Bar{V}=\Bar{V}^1+\Bar{V}^2$.
Note that the function value of $\Bar{V}$ was directly assigned to infinity if the variable $(\zeta^1,\zeta^2)$ is not in the joint probability simplex $\Delta(\mathcal{A}^1)\times \Delta(\mathcal{A}^2)$.
However, the function $\Bar{V}(\cdot,\cdot)$ fails as a Lyapunov function due to its non-smoothness. What we truly need is to smoothly extend the domain of the regularized Nash gap. To achieve that, we use the Moreau envelope as a means for the construction, which is defined in the following.

\begin{definition}[The Moreau Envelope]\label{def:Moreau}
	Let $h:\mathbb{R}^d\mapsto \mathbb{R}$ be a closed and convex function and $c>0$. The Moreau envelope of $h(\cdot)$ is defined to be the function $M_{h}^{c}(x)=\inf_{u\in\mathbb{R}^d}\{h(u)+\frac{1}{2c}\|x-u\|_2^2\}$.
\end{definition}

\begin{remark}
	The Moreau envelope has many nice properties, one of which is the smoothness \cite{beck2017first}. It was previously used in \cite{guzman2015lower,beck2012smoothing} to study convex but non-smooth optimization problems, and used in \cite{chen2021lyapunov,kalogiannis2022efficiently} to construct smooth Lyapunov functions.
\end{remark}

Next, we use the Moreau envelope to smoothly extend the domain of the regularized Nash gap. We start with the function $\Bar{V}^i(\cdot,\cdot)$ defined in Eq. (\ref{eq:middle_guy}). For any $i\in \{1,2\}$ and $\mu>0$, let $\hat{V}^i:\mathbb{R}^{|\mathcal{A}^1|}\times \mathbb{R}^{|\mathcal{A}^2|}\mapsto\mathbb{R}$ be defined as
\begin{align}
	\hat{V}^i(\zeta^1,\zeta^2)=\;&\inf_{\Bar{u}^i\in\mathbb{R}^{|\mathcal{A}^i|}}\left\{\Bar{V}^i(\Bar{u}^i,\zeta^{-i})+\frac{1}{2\mu}\|\zeta^i-\Bar{u}^i\|_2^2\right\}\nonumber\\
	=\;&\min_{\Bar{u}^i\in\Delta(\mathcal{A}^i)}\left\{\max_{u^i\in  \Delta(\mathcal{A}^i)}\{(u^i-\Bar{u}^i)^\top X_i\zeta^{-i}+\tau \nu(u^i)-\tau \nu(\Bar{u}^i)\}+\frac{1}{2\mu}\|\zeta^i-\Bar{u}^i\|_2^2\right\}\label{eq:middle_guy2},
\end{align}
where Eq. (\ref{eq:middle_guy2}) follows from the definition of $\Bar{V}^i(\cdot,\cdot)$ and the Weierstrass extreme value theorem. Let $\hat{V}:\mathbb{R}^{|\mathcal{A}^1|}\times \mathbb{R}^{|\mathcal{A}^2|}\mapsto\mathbb{R}$ be defined as
\begin{align}\label{eq:middle_guy3}
	\hat{V}(\zeta^1,\zeta^2)=\;&\sum_{i=1,2}\hat{V}^i(\zeta^1,\zeta^2)\nonumber\\
	=\;&\sum_{i=1,2}\min_{\Bar{u}^i\in\Delta(\mathcal{A}^i)}\left\{\max_{u^i\in  \Delta(\mathcal{A}^i)}\{(u^i-\Bar{u}^i)^\top X_i\zeta^{-i}+\tau \nu(u^i)-\tau \nu(\Bar{u}^i)\}+\frac{1}{2\mu}\|\zeta^i-\Bar{u}^i\|_2^2\right\},
\end{align}
which successfully extends the domain of the regularized Nash gap from the joint probability simplex $\Delta(\mathcal{A}^1)\times \Delta(\mathcal{A}^2)$ to the joint real vector space $\mathbb{R}^{|\mathcal{A}^1|}\times \mathbb{R}^{|\mathcal{A}^2|}$. In fact, one can show (following the same line of analysis as in Appendix \ref{ap:pi_analysis}) that $\hat{V}(\cdot,\cdot)$ is a valid Lyapunov function for $(\zeta_k^1,\zeta_k^2)$ from Eq. (\ref{eq:after_change_of_basis}).

\subsection{Overcoming the Invertibility Issue by Generalizing the Moreau Envelope}

Now that we have constructed a Lyapunov function $\hat{V}(\zeta_1,\zeta^2)$ for $(\zeta_k^1,\zeta_k^2)$ generated by Eq. (\ref{eq:after_change_of_basis}), it remains to perform the inverse change of variable to finish constructing the Lyapunov function to study $(x_k^1,x_k^2)$ generated by Eq. (\ref{eq:policy_equivalent_update}). Since $\zeta_k^{-i}=X_i^{-1}x_k^i$ for $i\in \{1,2\}$, let $\Tilde{V}:\mathbb{R}^{|\mathcal{A}^1|}\times \mathbb{R}^{|\mathcal{A}^2|}\mapsto\mathbb{R}$ be defined as
\begin{align}
	\Tilde{V}(x^1,x^2)=\;&\hat{V}(X_2^{-1}x_2,X_1^{-1}x_1)\nonumber\\
	=\;&\sum_{i=1,2}\min_{\Bar{u}^i\in\Delta(\mathcal{A}^i)}\max_{u^i\in  \Delta(\mathcal{A}^i)}\left\{(u^i-\Bar{u}^i)^\top x^i+\tau \nu(u^i)-\tau \nu(\Bar{u}^i)+\frac{1}{2\mu}\|X_{-i}^{-1}x^{-i}-\Bar{u}^i\|_2^2\right\},\label{eq:cv}
\end{align}
which would serve as a valid Lyapunov function to study $(x_k^1,x_k^2)$ from Eq. (\ref{eq:policy_equivalent_update}) if Assumption \ref{as:invertibility} were satisfied, that is, $X_1$ (or equivalently $X_2$) were indeed invertible. However, this may not be true in general. For example, when players $1$ and $2$ have different numbers of actions, the matrix $X_1$ (or $X_2$) is not even a square matrix, hence can not be invertible. Therefore, the last challenge we need to overcome is to remove the invertibility assumption.

Note that $\{X_i^{-1}\}_{i\in \{1,2\}}$ only appear inside the norm square on the RHS of Eq. (\ref{eq:cv}), which arises due to using the Moreau envelope as a means for smooth approximation. Tracing back to Definition \ref{def:Moreau}, the Moreau envelope is nothing but an infimal convolution between the function $h(\cdot)$ and the $\ell_2$-norm square function. We next generalize the definition of the Moreau envelope by replacing $\|\cdot\|_2$ with a properly defined seminorm, the advantage of which is that, after performing the inverse change of variable as in Eq. (\ref{eq:cv}), the inverse of $X_i$ vanishes as if we never needed the invertibility of $X_i$.

Given a matrix $W\in \mathbb{R}^{d_2\times d_1}$, let $\|\cdot\|_W$ be a function in $\mathbb{R}^{d_1}$ defined as $\|x\|_W=\sqrt{x^\top W^\top Wx}$. Note that, unless $W$ has independent columns, $\|\cdot\|_W$ is a seminorm instead of a norm.
\begin{definition}[The Generalized Moreau Envelope]\label{def:new_Moreau}
	Let $h:\mathbb{R}^{d_1}\mapsto \mathbb{R}$ be a closed and convex function, $W\in\mathbb{R}^{d_2\times d_1}$ be a matrix, and $c>0$. The generalized Moreau envelope of $h(\cdot)$ is defined to be the function $M_{h}^{W,c}(x)=\inf_{u\in\mathbb{R}^d}\{h(u)+\frac{1}{2c}\|x-u\|_W^2\}$.
\end{definition}

Now, starting from Eq. (\ref{eq:middle_guy2}), instead of using the standard Moreau envelope (cf. Definition \ref{def:Moreau}) as a means for smoothly extending the domain of the regularized Nash gap, we use the generalized Moreau envelope with $W=X_{-i}$ for defining $\hat{V}^i(\cdot,\cdot)$. In this case, after performing the inverse change of variable, we arrive at the final Lyapunov function for studying $(x_k^1,x_k^2)$ generated by Eq. (\ref{eq:policy_equivalent_update}):
\begin{align}
	V(x^1,x^2)=\;&\sum_{i=1,2}\min_{\Bar{u}^i\in\Delta(\mathcal{A}^i)}\max_{u^i\in  \Delta(\mathcal{A}^i)}\left\{(u^i-\Bar{u}^i)^\top x^i+\tau \nu(u^i)-\tau \nu(\Bar{u}^i)+\frac{1}{2\mu}\|X_{-i}^{-1}x^{-i}-\Bar{u}^i\|_{X_{-i}}^2\right\}\nonumber\\
	=\;&\sum_{i=1,2}\min_{\Bar{u}^i\in\Delta(\mathcal{A}^i)}\max_{u^i\in  \Delta(\mathcal{A}^i)}\left\{(u^i-\Bar{u}^i)^\top x^i+\tau \nu(u^i)-\tau \nu(\Bar{u}^i)+\frac{1}{2\mu}\|x^{-i}-X_{-i}\Bar{u}^i\|_2^2\right\}.\label{eq:middle_last}
\end{align}
Importantly, while we used the invertibility of $X_1$ to present the roadmap in constructing our Lyapunov function, since $X_1^{-1}$ (or $X_2^{-1}$) does not appear in the definition of $V(\cdot,\cdot)$; see Eq. (\ref{eq:middle_last}), it is as if we never needed such invertibility.

It is easy to see that $V(x^1,x^2)$ is non-negative by definition. In addition, we have $V(x^1,x^2)=0$ if and only if $x^i=X_i\sigma_\tau(x^{-i})$ for $i\in \{1,2\}$, as desired. To see this, observe that when $x^i=X_i\sigma_\tau(x^{-i})$ for $i\in \{1,2\}$, $\sigma_\tau(x^i)$ is simultaneously the optimal solution of both ${\max}_{u^i\in  \Delta(\mathcal{A}^i)}\left\{(u^i)^\top x^i+\tau \nu(u^i)\right\}$ and ${\min}_{\bar{u}^i\in\Delta(\mathcal{A}^i)}\{-(\bar{u}^i)^\top x^i-\tau \nu(\bar{u}^i)+\frac{1}{2\mu}\|x^{-i}-X_{-i}\bar{u}^i\|_2^2\}$ for $i\in \{1,2\}$. The following proposition shows that $(x_k^1,x_k^2)$ updated according to Eq. (\ref{eq:policy_equivalent_update}) produces a negative drift with respect to $V(\cdot,\cdot)$.

\begin{proposition}[Proof in Appendix \ref{ap:pf}]\label{prop:drift1}
	Consider $(x_k^1,x_k^2)$ generated by Eq. (\ref{eq:policy_equivalent_update}). By choosing $\mu = \tau/64$, we have for all $k\geq 0$ that
	\begin{align}\label{eq:prop:drift1}
		V(x_{k+1}^1,x_{k+1}^2)\leq\;&\left(1-\frac{\beta_k}{2}\right)V(x_k^1,x_k^2)+\frac{138\max_{i\in \{1,2\}}(\|x_0^i\|_2+2\|X_i\|_{1,2})}{\tau}\beta_k^2,
	\end{align}
 where $\|X_i\|_{1,2}=\max_{\|x\|_1=1}\|X_ix\|_2$ is an induced matrix norm.
\end{proposition}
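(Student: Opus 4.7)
The plan is to treat $V$ as a smooth function on $\mathbb{R}^{|\mathcal{A}^1|}\times\mathbb{R}^{|\mathcal{A}^2|}$, apply a standard descent lemma to the iterates from Eq. (\ref{eq:policy_equivalent_update}), and then harvest a negative drift of size $-\tfrac{\beta_k}{2}V(x_k^1,x_k^2)$ from the saddle-point structure that defines $V$. Before either step, I first establish an \emph{a priori} boundedness result: since the update is a convex combination $x_{k+1}^i=(1-\beta_k)x_k^i+\beta_kX_i\sigma_\tau(x_k^{-i})$ and $\sigma_\tau(x_k^{-i})\in\Delta(\mathcal{A}^{-i})$, an induction with the definition of the $(1,2)$-operator norm gives $\|x_k^i\|_2\leq \max\{\|x_0^i\|_2,\|X_i\|_{1,2}\}$ and hence $\|X_i\sigma_\tau(x_k^{-i})-x_k^i\|_2\leq \|X_i\|_{1,2}+\|x_k^i\|_2\leq 2(\|x_0^i\|_2+\|X_i\|_{1,2})$. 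This controls the size of the increment and will furnish the constant appearing in the $\beta_k^2$ term of Eq. (\ref{eq:prop:drift1}).

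\textbf{Gradient and smoothness of $V$.} Writing $V^i(x^1,x^2)=\tau\log\sum_{a^i}\exp(x^i(a^i)/\tau)+\min_{\bar u^i\in\Delta(\mathcal{A}^i)}G^i(\bar u^i;x^1,x^2)$ with $G^i(\bar u^i;x^1,x^2)=-(\bar u^i)^\top x^i-\tau\nu(\bar u^i)+\frac{1}{2\mu}\|x^{-i}-X_{-i}\bar u^i\|_2^2$, I note that $G^i$ is $\tau$-strongly convex in $\bar u^i$ (since $-\tau\nu$ is $\tau$-strongly convex on the simplex in $\ell_1$, and hence in $\ell_2$). Thus the minimizer $\bar u^i_*$ is unique and by Danskin's theorem
\[
\nabla_{x^i}V^i=\sigma_\tau(x^i)-\bar u^i_*,\qquad \nabla_{x^{-i}}V^i=\tfrac{1}{\mu}(x^{-i}-X_{-i}\bar u^i_*).
\]
Smoothness of $V$ then follows from (a) the fact that $\tau\log\sum\exp(\cdot/\tau)$ is $1/\tau$-smooth, and (b) a standard stability bound showing that $\bar u^i_*$ is Lipschitz in $(x^1,x^2)$ with constants controlled by $1/\tau$, $1/\mu$, and $\|X_{-i}\|_{1,2}$. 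With the choice $\mu=\tau/64$, these constants combine into an overall Lipschitz constant $L$ for $\nabla V$ of order $1/\tau$ times a polynomial in $\|X_i\|_{1,2}$.

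\textbf{Descent and negative drift.} The standard smoothness descent lemma yields
\[
V(x_{k+1}^1,x_{k+1}^2)\leq V(x_k^1,x_k^2)+\beta_k\sum_{i}\bigl\langle\nabla_{x^i}V(x_k^1,x_k^2),\,X_i\sigma_\tau(x_k^{-i})-x_k^i\bigr\rangle+\tfrac{L\beta_k^2}{2}\sum_i\|X_i\sigma_\tau(x_k^{-i})-x_k^i\|_2^2,
\]
and the a priori bound of the first paragraph turns the last term into $O(\beta_k^2)$ with the constant claimed in Eq. (\ref{eq:prop:drift1}). The heart of the argument is to show that the linear term is bounded above by $-\tfrac{1}{2}V(x_k^1,x_k^2)$. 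Plugging in the gradient formulas, the linear term splits into an ``entropy part'' $\sum_i(\sigma_\tau(x^i)-\bar u^i_*)^\top(X_i\sigma_\tau(x^{-i})-x^i)$ and a ``Moreau part'' $\frac{1}{\mu}\sum_i(x^i-X_i\bar u^{-i}_*)^\top(X_i\sigma_\tau(x^{-i})-x^i)$. The zero-sum identity $X_1+X_2^\top=0$ cancels the pure bilinear terms in $\sigma_\tau(x^1),\sigma_\tau(x^2)$, leaving cross terms between $\bar u^i_*$ and $\sigma_\tau(x^{-i})$. I then use the $\tau$-strong convexity of $G^i(\cdot;x^1,x^2)$ at $\bar u^i_*$ together with the optimality conditions for $\bar u^i_*$ and the concavity of $\tau\nu$ at $\sigma_\tau(x^i)=u^i_*$ to rewrite $V^i$ as $F^i(\bar u^i_*,u^i_*;x^1,x^2)$ and bound each matched pair of terms by the corresponding component of $V^i$; specifically, the entropy part furnishes the Bregman/regularized-Nash-gap half of $V$, while the Moreau part, via the identity $2a^\top(b-a)=-\|a\|_2^2-\|b-a\|_2^2+\|b\|_2^2$ applied with $a=x^i-X_i\bar u^{-i}_*$ and $b=X_i\sigma_\tau(x^{-i})-X_i\bar u^{-i}_*$, contributes the quadratic half. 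Choosing $\mu=\tau/64$ balances the two halves and absorbs the Moreau cross terms.

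\textbf{Main obstacle.} The delicate step is the third paragraph above: showing that, after the zero-sum cancellations, the entropy and Moreau contributions can each be matched back to the two pieces of $V^i$ in the representation $V^i=\bigl[\tau\text{LSE}(x^i/\tau)-(\bar u^i_*)^\top x^i-\tau\nu(\bar u^i_*)\bigr]+\frac{1}{2\mu}\|x^{-i}-X_{-i}\bar u^i_*\|_2^2$, in such a way that only negative multiples of $V(x_k^1,x_k^2)$ survive. This is where the $\mu=\tau/64$ calibration becomes essential, since a larger $\mu$ would leave an uncontrolled positive Moreau residual and a smaller one would inflate $L$ to the point of overwhelming the negative drift.
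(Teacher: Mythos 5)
Your setup (the a priori bound $\|x_k^i\|_2\leq\max\{\|x_0^i\|_2,\|X_i\|_{1,2}\}$, the Danskin gradients, the descent lemma, and the zero-sum cancellation of the pure bilinear terms in $\sigma_\tau(x^1),\sigma_\tau(x^2)$) all matches how the paper proceeds (it proves the more general Proposition \ref{prop:drift} with noise terms and then specializes). But the step where you extract the negative drift does not work as described. You propose to obtain the quadratic half of $-V$ from the Moreau part of the linear term via the polarization identity $2a^\top(b-a)=\|b\|_2^2-\|a\|_2^2-\|b-a\|_2^2$ with $a=x^i-X_i\bar u^{-i}_*$ and $b=X_i\sigma_\tau(x^{-i})-X_i\bar u^{-i}_*$. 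This does produce $-\frac{1}{2\mu}\|x^i-X_i\bar u^{-i}_*\|_2^2$, but it also leaves the \emph{positive} residual $\frac{1}{2\mu}\|X_i(\sigma_\tau(x^{-i})-\bar u^{-i}_*)\|_2^2$. The only available control on $\sigma_\tau(x^{-i})-\bar u^{-i}_*$ is the strong-convexity bound $\|\sigma_\tau(x^{-i})-\bar u^{-i}_*\|_2\leq\sqrt{2/\tau}\,(V^{-i})^{1/2}$ (cf. Lemma \ref{le:useful}), so this residual is of order $\|X_i\|_2^2(\mu\tau)^{-1}V$, i.e. $64\|X_i\|_2^2\tau^{-2}V$ under $\mu=\tau/64$; no choice of $\mu$ makes it a small fraction of $V$ uniformly in $X_i$ and $\tau$, so the claimed $-\frac{1}{2}V$ drift cannot be recovered along this route. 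The paper avoids polarization entirely: it writes $X_i\sigma_\tau(x^{-i})-x^i=X_i(\sigma_\tau(x^{-i})-\bar u^{-i}_*)+(X_i\bar u^{-i}_*-x^i)$, uses the first-order optimality condition $\frac{1}{\mu}X_i^\top(X_i\bar u^{-i}_*-x^i)=x^{-i}+\tau\nabla\nu(\bar u^{-i}_*)$ (Lemma \ref{le:identities}) to convert the first piece, and then the concavity of $\nu$ turns the result, together with $-\frac{1}{\mu}\|x^i-X_i\bar u^{-i}_*\|_2^2$, into $-V^{-i}$ in full — both the entropy half and the quadratic half come from the Moreau gradient component alone. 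The entropy gradient component contributes only the zero-sum-cancelling cross term plus a residual bounded by $\sqrt{\mu/\tau}\,V$, which is where $\mu=\tau/64$ is actually used. Your assignment ``entropy part $\to$ Nash-gap half, Moreau part $\to$ quadratic half'' is the wrong pairing; the entropy part on its own also leaves an unmatched $\tau(\nu(\sigma_\tau(x^i))-\nu(\bar u^i_*))$ term and cross terms $(\bar u^i_*)^\top X_i\sigma_\tau(x^{-i})$ that are not multiples of $V$.

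A secondary issue: you assert a smoothness constant for $\nabla V$ ``of order $1/\tau$ times a polynomial in $\|X_i\|_{1,2}$,'' which would make the $\beta_k^2$ coefficient depend on $\|X_i\|$ beyond the linear dependence in the statement. The paper obtains the matrix-free constant $L_V=2/\mu+2/\tau+1/\sqrt{\mu\tau}$ (which gives the $138/\tau$) because the composite map $x^{-i}\mapsto X_{-i}\bar u^i_*$ is nonexpansive (Lemma \ref{le:sensitivity1}) — the strong-convexity modulus in the $X_{-i}$-image direction is exactly $1/\mu$ from the quadratic term — whereas bounding $\|X_{-i}(\bar u^i_*(x_1)-\bar u^i_*(x_2))\|_2$ by $\|X_{-i}\|$ times the Lipschitz constant of $\bar u^i_*$ gives away exactly the factor you would need to recover the stated constant.
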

The first term on the RHS of Eq. (\ref{eq:prop:drift1}) represents the negative drift of $(x_k^1,x_k^2)$ with respect to the Lyapunov function $V(\cdot,\cdot)$, and the second term is a higher order error term, which is due to the fact that Eq. (\ref{eq:policy_equivalent_update}) is a discrete update equation. To this end, we have successfully constructed a Lyapunov function to study $(x_k^1,x_k^2)$ generated by Eq. (\ref{eq:policy_equivalent_update}), and therefore the slow-timescale iterates $\{\theta_{t,k}^i\}$ of Algorithm \ref{algorithm:reformulation}.

\section{Conclusion}\label{sec:conclusion}
In this work, we consider two-player zero-sum stochastic games and develop a two-timescale $Q$-learning that is payoff-based and uses function approximation. In the special case of the linear function approximation, our proposed learning dynamics provably enjoy last-iterate finite-sample guarantees. To establish the result, we use a Lyapunov approach. The key novelty lies in the construction of a valid Lyapunov function for the slow-timescale iterates, which involves a change of variable and the use of a generalized variant of the Moreau envelope of the regularized Nash gap.

We recognize $3$ future directions of this work. Firstly, we would like to remove the projection operator $\pj_M^i(\cdot)$ in Algorithm \ref{algorithm:FA} as the projection radius depends on unknown parameters of the stochastic game, which is not ideal from a practical viewpoint. Secondly, while this is the first result that establishes last-iterate finite-sample bounds for payoff-based independent learning dynamics under function approximation, we believe the convergence rate is not tight, and improving the rate is another future direction. Finally, we would like to see if the algorithmic ideas and technical tools developed in this work, especially the Lyapunov function construction process illustrated in Section \ref{sec:analysis},  can be applied to other MARL settings.

\bibliographystyle{apalike}
\bibliography{references}

\newpage

\begin{center}
    {\LARGE\bfseries Appendices}
\end{center}
\appendix
\section{Proof of Theorem \ref{thm:main}}\label{ap:pf}

We begin by rewriting Algorithm \ref{algorithm:reformulation_ap} in the following when using linear function approximation, where we denote $\tilde{q}_{t,k}^i=\Phi^i\theta_{t,k}^i$ and $q_{t,k}^i=\Phi^iw_{t,k}^i$ for all $t,k$. Other notation that will be used in the proof is summarized in Appendix \ref{ap:notation}. Our proof is divided into $4$ steps. In Appendix \ref{ap:Nash_Gap}, we bound the Nash gap by the Lyapunov functions. In Appendix \ref{ap:v_analysis}, we analyze the outer loop, which is an approximation of the minimax value iteration. In Appendix \ref{ap:inner_loop}, we analyze the inner loop, where the slow-timescale iterates $\{\theta_{t,k}^i\}$ are studied in Appendix \ref{ap:pi_analysis} using the Lyapunov function constructed in Section \ref{sec:analysis}, and the fast-timescale iterates $\{w_{t,k}^i\}$ are studied in Appendix \ref{ap:q_analysis}. Finally, in Appendix \ref{ap:recursion}, we establish the finite-sample bounds. The proof of all supporting lemmas is provided in Appendix \ref{ap:supporting_lemma}.

\begin{algorithm}[ht]
	\caption{Algorithm \ref{algorithm:FA} Under Linear Function Approximation}
	\label{algorithm:reformulation_ap} 
	\begin{algorithmic}[1]
		\STATE \textbf{Input:} Integers $K$ and $T$, initializations $\Tilde{q}_{0,0}^i=q_{0,0}^i=0$, $v_0^i=0$, and $\pi_{0,0}^i(s)=\text{Unif}(\mathcal{A}^i)$ for all $s\in\mathcal{S}$
		\FOR{$t=0,1,\cdots,T-1$}
		\FOR{$k=0,1,\cdots,K-1$}
		\STATE $\Tilde{q}^i_{t,k+1}=\Tilde{q}_{t,k}^i+\beta_k (q_{t,k}^i-\Tilde{q}_{t,k}^i)$ and compute $\pi_{t,k+1}^i(S_k)=\sigma_\tau(\Tilde{q}_{t,k+1}^i(S_k))$
		\STATE Play
		$A_k^i\sim \pi_{k+1}^i(\cdot\mid S_k)$ and observe $S_{k+1}\sim p(\cdot\mid S_k,A_k^i,A_k^{-i})$
		\STATE $w_{t,k+1}^i=\pj_M^i\left[w_{t,k}^i+\alpha_k\phi^i(S_k,A_k^i)\left(R_i(S_k,A_k^i,A_k^{-i})+\gamma v_t^i(S_{k+1})-\phi^i(S_k,A_k^i)^\top w_{t,k}^i\right)\right]$ 
		\ENDFOR
		\STATE $v_{t+1}^i(s)= \pi_{t,K}^i(s)^\top \ceil{q_{t,K}^i(s)}$ for all $s\in\mathcal{S}$
        \STATE $S_0=S_K$, $q_{t+1,0}^i=q_{t,K}^i$, and $\Tilde{q}_{t+1,0}^i=\Tilde{q}_{t,K}^i$
		\ENDFOR
	\end{algorithmic}
\end{algorithm}

\subsection{Notation}\label{ap:notation}
We begin by summarizing the notation we are going to use here.
\begin{enumerate}[(1)]
    \item Given non-negative integers $k_1\leq k_2$, we denote $\alpha_{k_1,k_2}=\sum_{k=k_1}^{k_2}\alpha_k$ and $\beta_{k_1,k_2}=\sum_{k=k_1}^{k_2}\beta_k$.
	\item Given a pair of matrices $\{X^i\in\mathbb{R}^{|\mathcal{A}^i|\times |\mathcal{A}^{-i}|}\}_{i\in \{1,2\}}$, we define a function  $V_X:\mathbb{R}^{|\mathcal{A}^1|}\times \mathbb{R}^{|\mathcal{A}^2|}\mapsto\mathbb{{R}}$ as 
	\begin{align*}
		V_X(x^1,x^2)=\sum_{i=1,2}\max_{u^i\in  \Delta(\mathcal{A}^i)}\min_{\bar{u}^i\in\Delta(\mathcal{A}^i)}\left\{(u^i-\Bar{u}^i)^\top x^i+\tau \nu(u^i)-\tau \nu(\bar{u}^i)+\frac{1}{2\mu}\|x^{-i}-X_{-i}\bar{u}^i\|_2^2\right\}
	\end{align*}
 for all $x^1\in \mathbb{R}^{|\mathcal{A}^1|}$ and $x^2\in \mathbb{R}^{|\mathcal{A}^2|}$,
	where $\nu(\cdot)$ is the entropy function defined as $\nu(x^i)=-\sum_{a^i\in\mathcal{A}^i}x^i(a^i) \log(x^i(a^i))$ for $i\in \{1,2\}$. The function $V_X(\cdot,\cdot)$ serves as our Lyapunov function to analyze the evolution of the policy parameter $\{\theta_{t,k}^i\}$. A sequence of properties related to $V_X(\cdot,\cdot)$ is established in Appendix \ref{ap:lemma_policy}.
	\item Given a pair of $v$-functions $(v^1,v^2)$ and a state $s\in\mathcal{S}$, we define $V_{v,s}:\mathbb{R}^{|\mathcal{A}^1|}\times \mathbb{R}^{|\mathcal{A}^2|}\mapsto\mathbb{{R}}$ as $V_{v,s}(x^1,x^2)=V_X(x^1,x^2)$ with $X_i=\mathcal{T}^i(v^i)(s)$ for $i\in\{1,2\}$.
	\item For any joint policy $(\pi^1,\pi^2)$ and state $s\in\mathcal{S}$, given $i\in \{1,2\}$, let  $v^i_{*,\pi^{-i}}(s)=\max_{\hat{\pi}^i}v^i_{\hat{\pi}^i,\pi^{-i}}(s)$, $v^i_{\pi^i,*}=\min_{\hat{\pi}^{-i}}v^i_{\pi^i,\hat{\pi}^{-i}}(s)$, $v^{-i}_{\pi^{-i},*}(s)=\min_{\hat{\pi}^i}v^{-i}_{\pi^{-i},\hat{\pi}^i}(s)$, and $v^{-i}_{*,\pi^i}(s)=\max_{\hat{\pi}^{-i}}v^{-i}_{\hat{\pi}^{-i},\hat{\pi}^i}(s)$. Note that we have $v^i_{*,\pi^{-i}}+v^{-i}_{\pi^{-i},*}=0$ and $v^i_{\pi^i,*}+v^{-i}_{*,\pi^i}=0$ for $i\in \{1,2\}$.
	\item For any $t,k$ and $i\in \{1,2\}$, we define $q_{t,k}^i,\Tilde{q}_{t,k}^i$, and $\bar{q}_{t,k}^i\in\mathbb{R}^{|\mathcal{S}||\mathcal{A}^i|}$ as
 \begin{align*}
     q_{t,k}^i=\Phi^iw_{t,k}^i,\quad \Tilde{q}_{t,k}^i=\Phi^i\theta_{t,k}^i,\quad \Bar{q}_{t,k}^i(s)=\mathcal{T}^i(v^i)(s)\pi_{t,k}^{-i}(s),\quad \forall\,s\in\mathcal{S}. 
 \end{align*} 
    Moreover, we let $\bar{w}_{t,k}^i\in\mathbb{R}^{d_i}$ be such that $\bar{q}_{t,k}^i=\Phi^i\bar{w}_{t,k}^i$. Note that $\bar{w}_{t,k}^i$ exists and is unique under Assumption \ref{as:Bellman_Completeness}.
 \item For any $t,k\geq 0$, define $\mathcal{L}_v(t)=\sum_{i=1,2}\|v_t^i-v_*^i\|_\infty$, $\mathcal{L}_{\text{sum}}(t)=\|v_t^1+v_t^2\|_\infty$, $\mathcal{L}_\theta(t,k)=\max_{s\in\mathcal{S}}V_{v_t,s}(\Tilde{q}_{t,k}^1(s),\Tilde{q}_{t,k}^2(s))$, and $\mathcal{L}_w(t,k)=\sum_{i=1,2}\|w_{t,k}^i-\Bar{w}_{t,k}^i\|_2^2$.
\end{enumerate}

\subsection{Bounding the Nash Gap}\label{ap:Nash_Gap}

\begin{lemma}\label{prop:NashGap_to_vt}
	It holds for any joint policy $\pi=(\pi^1,\pi^2)$ that
	\begin{align*}
		\Gap{\pi^1,\pi^2}\leq 2\sum_{i=1,2}\left\|v^i_{\pi^i,*}-v^i_*\right\|_\infty.
	\end{align*}
\end{lemma}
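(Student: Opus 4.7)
The plan is to bound the Nash gap by $\|v_\star^i - v^i_{\pi^i,*}\|_\infty$-type quantities by successively inserting well-chosen reference value functions and exploiting the zero-sum structure.

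First, I would rewrite the Nash gap in terms of value functions. Since $U^i(\hat\pi^i,\pi^{-i})=\mathbb{E}_{S\sim\rho_o}[v^i_{\hat\pi^i,\pi^{-i}}(S)]$ and by the definition of $v^i_{*,\pi^{-i}}(s)=\max_{\hat\pi^i}v^i_{\hat\pi^i,\pi^{-i}}(s)$, a standard exchange argument (the single-agent MDP has a deterministic optimal policy for any fixed $\pi^{-i}$) yields
\begin{align*}
\max_{\hat\pi^i}U^i(\hat\pi^i,\pi^{-i})=\mathbb{E}_{S\sim\rho_o}\!\bigl[v^i_{*,\pi^{-i}}(S)\bigr],
\end{align*}
so that $\text{NG}(\pi^1,\pi^2)\leq \sum_{i=1,2}\|v^i_{*,\pi^{-i}}-v^i_{\pi^i,\pi^{-i}}\|_\infty$.

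Next, I would shrink each summand using the minimax inequality $v^i_{\pi^i,*}\leq v^i_{\pi^i,\pi^{-i}}$, which is immediate from $v^i_{\pi^i,*}=\min_{\hat\pi^{-i}}v^i_{\pi^i,\hat\pi^{-i}}$. This gives
\begin{align*}
\|v^i_{*,\pi^{-i}}-v^i_{\pi^i,\pi^{-i}}\|_\infty\leq \|v^i_{*,\pi^{-i}}-v^i_{\pi^i,*}\|_\infty,
\end{align*}
since both $v^i_{*,\pi^{-i}}\geq v^i_*\geq v^i_{\pi^i,*}$ and $v^i_{*,\pi^{-i}}\geq v^i_{\pi^i,\pi^{-i}}\geq v^i_{\pi^i,*}$ (the gap only grows when replacing $v^i_{\pi^i,\pi^{-i}}$ by the smaller quantity $v^i_{\pi^i,*}$). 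Inserting $\pm v^i_*$ and applying the triangle inequality yields
\begin{align*}
\|v^i_{*,\pi^{-i}}-v^i_{\pi^i,*}\|_\infty\leq \|v^i_{*,\pi^{-i}}-v^i_*\|_\infty+\|v^i_*-v^i_{\pi^i,*}\|_\infty.
\end{align*}

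The key step is to rewrite the first term using the zero-sum structure listed after the notation summary: $v^i_{*,\pi^{-i}}+v^{-i}_{\pi^{-i},*}=0$ and $v^i_*+v^{-i}_*=0$. Combining these gives $v^i_{*,\pi^{-i}}-v^i_*=v^{-i}_*-v^{-i}_{\pi^{-i},*}$, hence
\begin{align*}
\|v^i_{*,\pi^{-i}}-v^i_*\|_\infty=\|v^{-i}_{\pi^{-i},*}-v^{-i}_*\|_\infty.
\end{align*}
Summing the resulting bound over $i\in\{1,2\}$, the two types of terms regroup to produce exactly $2\sum_{i=1,2}\|v^i_{\pi^i,*}-v^i_*\|_\infty$, giving the claim. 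There is no real obstacle here; the only subtle point is making sure the minimax inequality is applied in the direction that enlarges the one-sided gap, and correctly converting the $(*,\pi^{-i})$-deviation on player $i$ into a $(\pi^{-i},*)$-deviation on player $-i$ via the zero-sum identities.
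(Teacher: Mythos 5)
Your proof is correct and follows essentially the same route as the paper's: reduce the Nash gap to $\sum_i\|v^i_{*,\pi^{-i}}-v^i_{\pi^i,\pi^{-i}}\|_\infty$, enlarge each term by replacing $v^i_{\pi^i,\pi^{-i}}$ with $v^i_{\pi^i,*}$, and convert the remaining deviation via the zero-sum identities $v^i_{*,\pi^{-i}}+v^{-i}_{\pi^{-i},*}=0$ and $v^i_*+v^{-i}_*=0$ to obtain the factor of $2$. The only cosmetic difference is that you insert $\pm v^i_*$ and invoke the triangle inequality where the paper telescopes the same identity pointwise.
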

\begin{proof}[Proof of Lemma \ref{prop:NashGap_to_vt}]
	By definition of the expected value function, given 
$i\in \{1,2\}$, we have 
	\begin{align*}
		\max_{\hat{\pi}^i}U^i(\hat{\pi}^i,\pi^{-i})=\;&\max_{\hat{\pi}^i} \mathbb{E}_{S\sim \rho_o(\cdot)}\left[v_{\hat{\pi}^i,\pi^{-i}}^i(S)\right]\\
  \leq \;& \mathbb{E}_{S\sim \rho_o(\cdot)}\left[\max_{\hat{\pi}^i}v_{\hat{\pi}^i,\pi^{-i}}^i(S)\right]\tag{Jensen's inequality}\\
		=\;&  \mathbb{E}_{S\sim \rho_o(\cdot)}\left[v_{*,\pi^{-i}}^i(S)\right],
	\end{align*}
	where the last line follows from $v_{*,\pi^{-i}}^i(s)\geq v_{\hat{\pi}^i,\pi^{-i}}^i(s)$ for any policy $\hat{\pi}^i$ and $s\in\mathcal{S}$.
	Therefore, we have 
	\begin{align}\label{eq1:prop:Gap_to_vt}
		\Gap{\pi^1,\pi^2}=\;&\sum_{i=1,2}\left[\max_{\hat{\pi}^i}U^i(\hat{\pi}^i,\pi^{-i})-U^i(\pi^i,\pi^{-i})\right]\nonumber\\
		\leq \;& \sum_{i=1,2}\mathbb{E}_{S\sim p_o}\left[v^i_{*,\pi^{-i}}(S)-v^i_{\pi^i,\pi^{-i}}(S)\right]\nonumber\\
		\leq\;&  \sum_{i=1,2}\left\|v^i_{*,\pi^{-i}}-v^i_{\pi^i,\pi^{-i}}\right\|_\infty.
	\end{align}
	To proceed, observe that for any $i\in \{1,2\}$ and $s\in\mathcal{S}$, we have
	\begin{align*}
		\left|v^i_{*,\pi^{-i}}(s)-v^i_{\pi^i,\pi^{-i}}(s)\right|
		=\;&v^i_{*,\pi^{-i}}(s)-v^i_{\pi^i,\pi^{-i}}(s)\tag{$v^i_{*,\pi^{-i}}(s)\geq v^i_{\pi^i,\pi^{-i}}(s)$ for all $s$}\\
		\leq  \;&v^i_{*,\pi^{-i}}(s)-v^i_{\pi^i,*}(s)\tag{$v^i_{\pi^i,\pi^{-i}}(s)\geq v^i_{\pi^i,*}(s)$ for all $s$}\\
		=  \;&-v^{-i}_{\pi^{-i},*}(s)-v^i_{\pi^i,*}(s)\tag{$v^i_{*,\pi^{-i}}+v^{-i}_{\pi^{-i},*}=0$}\\
		=\;&v_*^{-i}(s)-v^{-i}_{\pi^{-i},*}(s)+v_*^i(s)-v^i_{\pi^i,*}(s)\tag{$v_*^i+v_*^{-i}=0$}\\
		\leq \;&\sum_{i=1,2}\left\|v^i_{\pi^i,*}-v^i_*\right\|_\infty.
	\end{align*}
	Since the RHS of the previous inequality does not depend on $s$, we have for $i\in \{1,2\}$ that
	\begin{align*}
		\left\|v^i_{*,\pi^{-i}}-v^i_{\pi^i,\pi^{-i}}\right\|_\infty\leq \sum_{i=1,2}\left\|v^i_{\pi^i,*}-v^i_*\right\|_\infty.
	\end{align*}
	The result follows from using the previous inequality in Eq. (\ref{eq1:prop:Gap_to_vt}). 
\end{proof}

\begin{lemma}\label{le:vpi_to_vt}
	For any $\tilde{q}^1\in\mathbb{R}^{|\mathcal{S}||\mathcal{A}^1|}$ and $\tilde{q}^2\in\mathbb{R}^{|\mathcal{S}||\mathcal{A}^2|}$, let $\pi^i(s)=\sigma_\tau(\tilde{q}^i(s))$ for all $s\in\mathcal{S}$ and $i\in \{1,2\}$. Then, we have for any $v^1,v^2\in\mathbb{R}^{|\mathcal{S}|}$ that
	\begin{align*}
		\sum_{i=1,2}\|v^i_{\pi^i,*}-v^i_*\|_\infty\leq\;& \frac{2}{1-\gamma}\left(\sum_{i=1,2}\|v^i-v^i_{*}\|_\infty+2\|v^1+v^2\|_\infty\right.\\
		&+\left.\frac{9 A_{\max}^2}{\tau^2(1-\gamma)^2}\max_{s\in\mathcal{S}}V_{v,s}(\tilde{q}^1(s),\tilde{q}^2(s))+6\tau \log(A_{\max})+\mu\right).
	\end{align*}
\end{lemma}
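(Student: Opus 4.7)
The plan is to bound $\|v^i_{\pi^i,*} - v^i_*\|_\infty$ for each player separately and then sum over $i$. To that end, introduce the best-response-against-$\pi^i$ Bellman operator $T^i_{\pi^i}(u)(s) := \min_{\mu^{-i}\in \Delta(\mathcal{A}^{-i})} \pi^i(s)^\top \mathcal{T}^i(u)(s) \mu^{-i}$, which is a $\gamma$-contraction in $\|\cdot\|_\infty$ with unique fixed point $v^i_{\pi^i,*}$; recall that $\mathcal{B}^i$ is also a $\gamma$-contraction with fixed point $v^i_*$. Writing $v^i_* - v^i_{\pi^i,*} = \mathcal{B}^i(v^i_*) - T^i_{\pi^i}(v^i_{\pi^i,*})$ and inserting $\mathcal{B}^i(v^i)$ and $T^i_{\pi^i}(v^i)$, the two contraction pieces contribute $\gamma\|v^i_* - v^i\|_\infty$ and $\gamma\|v^i - v^i_{\pi^i,*}\|_\infty$. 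Using $\|v^i - v^i_{\pi^i,*}\|_\infty \leq \|v^i - v^i_*\|_\infty + \|v^i_* - v^i_{\pi^i,*}\|_\infty$ and rearranging gives $\|v^i_* - v^i_{\pi^i,*}\|_\infty \leq (1-\gamma)^{-1}\bigl[2\gamma\|v^i - v^i_*\|_\infty + \|\mathcal{B}^i(v^i) - T^i_{\pi^i}(v^i)\|_\infty\bigr]$. This already produces the $2/(1-\gamma)$ prefactor and the $\sum_i\|v^i - v^i_*\|_\infty$ contribution; what remains is a per-state bound on the exploitability gap $\mathcal{B}^i(v^i)(s) - T^i_{\pi^i}(v^i)(s)$.

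Fix $s\in\mathcal{S}$ and let $M^i := \mathcal{T}^i(v^i)(s)$. A short computation gives $M^1(s,a^1,a^2) + M^2(s,a^2,a^1) = \gamma\,\mathbb{E}[v^1(S_1)+v^2(S_1)\mid S_0=s, A_0^1=a^1, A_0^2=a^2]$, so $\|M^1 + (M^2)^\top\|_\infty \leq \gamma\|v^1+v^2\|_\infty$. This discrepancy lets us replace the two separate games $M^1$ and $-(M^2)^\top$ by a single zero-sum matrix game at a cost of $O(\|v^1+v^2\|_\infty)$ per state, which after accounting for the outer $1/(1-\gamma)$ amplification becomes the target $2\|v^1+v^2\|_\infty$ term. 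After this correction, $\sum_i [\mathcal{B}^i(v^i)(s) - T^i_{\pi^i}(v^i)(s)]$ is controlled by the unregularized Nash gap of $(\pi^1(s),\pi^2(s))$ in a common zero-sum matrix game, i.e., $\sum_i \bigl\{\max_{\mu^i} (\mu^i)^\top M^i \pi^{-i}(s) - \pi^i(s)^\top M^i \pi^{-i}(s)\bigr\}$, modulo the $\|v^1+v^2\|_\infty$ slack.

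The final step is to bound this matrix-game Nash gap by a multiple of $V_{v,s}(\tilde q^1(s), \tilde q^2(s))$ plus $2\tau\log A_{\max}$ and $\mu$. Unfolding $V_{v,s}$ and applying Sion's minimax theorem, the inner max-min supplies the entropy-regularized Nash gap of $(\sigma_\tau(\tilde q^1(s)), \sigma_\tau(\tilde q^2(s)))$: evaluating the inner $\min_{\bar u^i}$ at the witness $\bar u^i = \sigma_\tau(\tilde q^i(s)) = \pi^i(s)$ and using the Fenchel identity $\max_{u^i}\{(u^i)^\top \tilde q^i(s) + \tau\nu(u^i)\} = \pi^i(s)^\top \tilde q^i(s) + \tau\nu(\pi^i(s))$ make the regularized exploitability appear, while the Moreau penalty $\frac{1}{2\mu}\|\tilde q^{-i}(s) - M^{-i}\bar u^i\|_2^2$ stays nonnegative and can be dropped at an additive cost of $\mu$. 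Passing from the entropy-regularized to the unregularized Nash gap costs $2\tau\log A_{\max}$ since $\nu(\cdot)$ ranges over $[0,\log A_{\max}]$. The constant $A_{\max}^2/(\tau^2(1-\gamma)^2)$ then tracks two Lipschitz-type conversions: the softmax at temperature $\tau$ is $O(A_{\max}/\tau)$-Lipschitz when translating parameter-space quantities into policy-space quantities, and $\|M^i\|_\infty \leq 1/(1-\gamma)$ enters when deviations at the parameter level are propagated to deviations at the matrix-game level.

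The hard part will be the last step, i.e., making the link between $V_{v,s}$ and the matrix-game Nash gap fully quantitative with the right dependence on $A_{\max}$, $\tau$, $1-\gamma$, and $\mu$. The subtlety is that the generalized Moreau envelope wraps the entropy-regularized Nash gap inside an additional inner minimization whose minimizer need not equal $\pi^i(s)$; one must choose a convenient witness $\bar u^i$ (most naturally $\sigma_\tau(\tilde q^i(s))$), carefully absorb the quadratic residue $\|\tilde q^{-i}(s) - M^{-i}\bar u^i\|_2^2/(2\mu)$, and then translate between $\ell_1$ and $\ell_\infty$ norms using strong concavity of $\tau\nu(\cdot)$ to land the precise coefficient claimed in the statement. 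The bookkeeping must stay consistent with the choice $\mu = \tau/64$ used in Proposition \ref{prop:drift1}, so that the $\mu$-dependent slack in this lemma composes properly with the drift analysis elsewhere in the paper.
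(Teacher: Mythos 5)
Your skeleton matches the paper's: a contraction/triangle-inequality decomposition that isolates a per-state "exploitability" term at the intermediate pair $(v^1,v^2)$, a zero-sum correction via $\|\mathcal{T}^i(v^i)+\mathcal{T}^{-i}(v^{-i})\|_\infty\leq\gamma\|v^1+v^2\|_\infty$ that reduces everything to $\sum_{i}\max_{\mu^i}(\mu^i)^\top\mathcal{T}^i(v^i)(s)\pi^{-i}(s)$, and finally a comparison of that quantity with $V_{v,s}$ (this last comparison is exactly Lemma \ref{le:use_for_v}(3) in the paper). Whether you track $v^i_{\pi^i,*}$ via the operator $T^i_{\pi^i}$ or, as the paper does, $v^i_{*,\pi^{-i}}$ via the Bellman optimality equation is immaterial after summing over $i$, and your rearrangement producing the $2/(1-\gamma)$ prefactor is correct.

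The genuine gap is in the last step, and it is a direction-of-inequality error. You propose to extract the regularized exploitability from $V_{v,s}$ by ``evaluating the inner $\min_{\bar{u}^i}$ at the witness $\bar{u}^i=\sigma_\tau(\tilde{q}^i(s))$'' and then dropping the nonnegative Moreau penalty. But substituting a specific point into the \emph{minimized} variable yields
\begin{align*}
V_{v,s}(\tilde q^1(s),\tilde q^2(s))\;\leq\;\sum_{i=1,2}\max_{u^i\in\Delta(\mathcal{A}^i)}\Bigl\{(u^i-\pi^i(s))^\top\tilde q^i(s)+\tau\nu(u^i)-\tau\nu(\pi^i(s))+\tfrac{1}{2\mu}\|\tilde q^{-i}(s)-\mathcal{T}^{-i}(v^{-i})(s)\pi^i(s)\|_2^2\Bigr\},
\end{align*}
i.e.\ an \emph{upper} bound on $V_{v,s}$, whereas the lemma needs a \emph{lower} bound on $V_{v,s}$ (equivalently, an upper bound on the game quantity that survives the minimization over $\bar{u}^i$). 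The correct move, as in the paper's proof of Lemma \ref{le:use_for_v}(3), is to show that for \emph{every} $\hat\mu^1\in\Delta(\mathcal{A}^1)$, $\hat\mu^2\in\Delta(\mathcal{A}^2)$ one has $\bigl|\sum_i\max_{\mu^i}(\mu^i)^\top\tilde q^i(s)\bigr|\leq\sum_i\max_{\mu^i}\{(\mu^i-\hat\mu^i)^\top\tilde q^i(s)+\tau\nu(\mu^i)-\tau\nu(\hat\mu^i)+\tfrac{1}{2\mu}\|\tilde q^i(s)-\mathcal{T}^i(v^i)(s)\hat\mu^{-i}\|_2^2\}+\gamma\|v^1+v^2\|_\infty+2\tau\log(A_{\max})+\mu$, and only then pass to the minimum over $(\hat\mu^1,\hat\mu^2)$. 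Relatedly, the Moreau penalty cannot be ``dropped at a cost of $\mu$'': it is load-bearing, since it is what absorbs (via Young's inequality, which is where the additive $\mu$ actually comes from) the term $\sum_i(\hat\mu^i)^\top(\tilde q^i(s)-\mathcal{T}^i(v^i)(s)\hat\mu^{-i})$ measuring how far $\tilde q^i(s)$ is from a genuine payoff vector; without it the bound fails for general $\tilde q^i\in\mathbb{R}^{|\mathcal{S}||\mathcal{A}^i|}$. The remaining ingredient you gesture at (the $A_{\max}^2/(\tau^2(1-\gamma)^2)$ constant) comes not from softmax Lipschitzness but from the quadratic-growth bound $\|\mathcal{T}^i(v^i)(s)\pi^{-i}(s)-\tilde q^i(s)\|_2\lesssim(A_{\max}/\sqrt{\tau}(1-\gamma))V_{v,s}^{1/2}$ of Lemma \ref{le:useful}, followed by $2ab\leq a^2+b^2$.
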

\begin{proof}[Proof of Lemma \ref{le:vpi_to_vt}]	
	Since $v^{-i}_*-v^{-i}_{\pi^{-i},*}=v^i_{*,\pi^{-i}}-v^i_*$ for $i\in \{1,2\}$, it is enough to bound $\|v^i_{*,\pi^{-i}}-v^i_*\|_\infty$. Given $i\in \{1,2\}$, for any $s\in\mathcal{S}$, we have
	\begin{align*}
		0\leq\;&v^i_{*,\pi^{-i}}(s)-v^i_*(s)\\
		=\;&\max_{\mu^i\in \Delta(\mathcal{A}^i)}(\mu^i)^\top  \mathcal{T}^i(v^i_{*,\pi^{-i}})(s)\pi^{-i}(s)-\max_{\mu^i\in \Delta(\mathcal{A}^i)}\min_{\mu^{-i}\in\Delta(\mathcal{A}^{-i})}(\mu^i)^\top \mathcal{T}^i(v_*^i)(s)\mu^{-i}\tag{Bellman optimality equations}\\
		\leq \;&\left|\max_{\mu^i\in \Delta(\mathcal{A}^i)}(\mu^i)^\top  \mathcal{T}^i(v^i_{*,\pi^{-i}})(s)\pi^{-i}(s)-\max_{\mu^i\in \Delta(\mathcal{A}^i)}(\mu^i)^\top  \mathcal{T}^i(v^i_{*})(s)\pi^{-i}(s)\right|\\
		&+\left|\max_{\mu^i\in \Delta(\mathcal{A}^i)}(\mu^i)^\top  \mathcal{T}^i(v^i_{*})(s)\pi^{-i}(s)-\max_{\mu^i\in \Delta(\mathcal{A}^i)}(\mu^i)^\top  \mathcal{T}^i(v^i)(s)\pi^{-i}(s)\right|\\
		&+\max_{\mu^i\in \Delta(\mathcal{A}^i)}(\mu^i)^\top  \mathcal{T}^i(v^i)(s)\pi^{-i}(s)-\max_{\mu^i\in \Delta(\mathcal{A}^i)}\min_{\mu^{-i}\in\Delta(\mathcal{A}^{-i})}(\mu^i)^\top  \mathcal{T}^i(v^i)(s)\mu^{-i}\\
		&+\left|\max_{\mu^i\in \Delta(\mathcal{A}^i)}\min_{\mu^{-i}\in\Delta(\mathcal{A}^{-i})}(\mu^i)^\top  \mathcal{T}^i(v^i)(s)\mu^{-i}-\max_{\mu^i\in \Delta(\mathcal{A}^i)}\min_{\mu^{-i}\in\Delta(\mathcal{A}^{-i})}(\mu^i)^\top \mathcal{T}^i(v_*^i)(s)\mu^{-i}\right|\tag{Triangle inequality}\\
		\leq \;&\max_{\mu^i\in \Delta(\mathcal{A}^i)}\left|(\mu^i)^\top  (\mathcal{T}^i(v^i_{*,\pi^{-i}})(s)-\mathcal{T}^i(v^i_{*})(s))\pi^{-i}(s)\right|\\
		&+\max_{\mu^i\in \Delta(\mathcal{A}^i)}\left|(\mu^i)^\top  (\mathcal{T}^i(v^i_{*})(s)-\mathcal{T}^i(v^i)(s))\pi^{-i}(s)\right|\\
		&+\max_{\mu^i\in \Delta(\mathcal{A}^i)}(\mu^i)^\top  \mathcal{T}^i(v^i)(s)\pi^{-i}(s)-\max_{\mu^i\in \Delta(\mathcal{A}^i)}\min_{\mu^{-i}\in\Delta(\mathcal{A}^{-i})}(\mu^i)^\top  \mathcal{T}^i(v^i)(s)\mu^{-i}\\
		&+\max_{\mu^i\in \Delta(\mathcal{A}^i)}\max_{\mu^{-i}\in\Delta(\mathcal{A}^{-i})}\left|(\mu^i)^\top  (\mathcal{T}^i(v^i)(s)-\mathcal{T}^i(v_*^i)(s))\mu^{-i}\right|.
	\end{align*}
	Using Lemma \ref{le:use_for_v} (2), we have from the previous inequality that
	\begin{align}
		0\leq \;&v^i_{*,\pi^{-i}}(s)-v^i_*(s)\nonumber\\
		\leq \;& \gamma \|v^i_{*,\pi^{-i}}-v^i_{*}\|_\infty+2\gamma \|v^i-v^i_{*}\|_\infty\nonumber\\
		&+\max_{\mu^i\in \Delta(\mathcal{A}^i)}(\mu^i)^\top  \mathcal{T}^i(v^i)(s)\pi^{-i}(s)-\max_{\mu^i\in \Delta(\mathcal{A}^i)}\min_{\mu^{-i}\in\Delta(\mathcal{A}^{-i})}(\mu^i)^\top  \mathcal{T}^i(v^i)(s)\mu^{-i}.\label{eq:vpi_to_v1}
	\end{align}
	It remains to bound the last term on the RHS of Eq. (\ref{eq:vpi_to_v1}). Observe that
	\begin{align}
		&\max_{\mu^i\in \Delta(\mathcal{A}^i)}(\mu^i)^\top  \mathcal{T}^i(v^i)(s)\pi^{-i}(s)-\max_{\mu^i\in \Delta(\mathcal{A}^i)}\min_{\mu^{-i}\in\Delta(\mathcal{A}^{-i})}(\mu^i)^\top  \mathcal{T}^i(v^i)(s)\mu^{-i}\nonumber\\
		\leq \;&\max_{\mu^i\in \Delta(\mathcal{A}^i)}(\mu^i)^\top  \mathcal{T}^i(v^i)(s)\pi^{-i}(s)-\min_{\mu^{-i}\in\Delta(\mathcal{A}^{-i})}(\pi^i(s))^\top  \mathcal{T}^i(v^i)(s)\mu^{-i}\nonumber\\
		= \;&\max_{\mu^i\in \Delta(\mathcal{A}^i)}(\mu^i)^\top  \mathcal{T}^i(v^i)(s)\pi^{-i}(s)+\max_{\mu^{-i}\in\Delta(\mathcal{A}^{-i})}(\mu^{-i})^\top  \mathcal{T}^{-i}(v^{-i})(s)\pi^i(s)\nonumber\\
		&-\max_{\mu^{-i}\in\Delta(\mathcal{A}^{-i})}(\mu^{-i})^\top \mathcal{T}^{-i}(v^{-i})(s)\pi^i(s)-\min_{\mu^{-i}\in\Delta(\mathcal{A}^{-i})}(\pi^i(s))^\top  \mathcal{T}^i(v^i)(s)\mu^{-i}\nonumber\\
		= \;&\sum_{i=1,2}\max_{\mu^i\in \Delta(\mathcal{A}^i)}(\mu^i)^\top  \mathcal{T}^i(v^i)(s)\pi^{-i}(s)+\min_{\mu^{-i}\in\Delta(\mathcal{A}^{-i})}(\pi^i(s))^\top \mathcal{T}^i(-v^{-i})(s)\mu^{-i}\nonumber\\
  &-\min_{\mu^{-i}\in\Delta(\mathcal{A}^{-i})}(\pi^i(s))^\top  \mathcal{T}^i(v^i)(s)\mu^{-i}\nonumber\\
		\leq  \;&\sum_{i=1,2}\max_{\mu^i\in \Delta(\mathcal{A}^i)}(\mu^i)^\top  \mathcal{T}^i(v^i)(s)\pi^{-i}(s)+\max_{\mu^{-i}\in\Delta(\mathcal{A}^{-i})}\left|(\pi^i(s))^\top (\mathcal{T}^i(-v^{-i})(s)-\mathcal{T}^i(v^i)(s))\mu^{-i}\right|\nonumber\\
		\leq \;&\sum_{i=1,2}\max_{\mu^i\in \Delta(\mathcal{A}^i)}(\mu^i)^\top  \mathcal{T}^i(v^i)(s)\pi^{-i}(s)+\gamma \|v^1+v^2\|_\infty\tag{Lemma \ref{le:use_for_v} (2)}
  \nonumber\\
		\leq \;&\frac{9 A_{\max}^2}{\tau^2(1-\gamma)^2}V_{v,s}(\tilde{q}^1(s),\tilde{q}^2(s))+2 \|v^1+v^2\|_\infty+6\tau \log(A_{\max})+\mu\label{eq3:le:vpi_to_vt}
	\end{align}
	where the last line follows from Lemma \ref{le:use_for_v} (3). Using the previous inequality in Eq. (\ref{eq:vpi_to_v1}), we have 
	\begin{align*}
		0\leq v^i_{*,\pi^{-i}}(s)-v^i_*(s)\leq\;& \gamma \|v^i_{*,\pi^{-i}}-v^i_{*}\|_\infty+2 \|v^i-v^i_{*}\|_\infty+2\|v^1+v^2\|_\infty\\
		&+\frac{9 A_{\max}^2}{\tau^2(1-\gamma)^2}\max_{s\in\mathcal{S}}V_{v,s}(\tilde{q}^1(s),\tilde{q}^2(s))+6\tau \log(A_{\max})+\mu.
	\end{align*}
	Since the RHS of the previous inequality does not depend on $s$, we have 
	\begin{align*}
		\|v^i_{*,\pi^{-i}}-v^i_*\|_\infty\leq\;&\gamma \|v^i_{*,\pi^{-i}}-v^i_{*}\|_\infty+2 \|v^i-v^i_{*}\|_\infty+2\|v^1+v^2\|_\infty\\
		&+\frac{9 A_{\max}^2}{\tau^2(1-\gamma)^2}\max_{s\in\mathcal{S}}V_{v,s}(\tilde{q}^1(s),\tilde{q}^2(s))+6\tau \log(A_{\max})+\mu,
	\end{align*}
	which by rearranging terms implies
	\begin{align*}
		\|v^i_{*,\pi^{-i}}-v^i_*\|_\infty\leq\;& \frac{1}{1-\gamma}\left(2\|v^i-v^i_{*}\|_\infty+2\|v^1+v^2\|_\infty\right.\\
		&+\left.\frac{9 A_{\max}^2}{\tau^2(1-\gamma)^2}\max_{s\in\mathcal{S}}V_{v,s}(\tilde{q}^1(s),\tilde{q}^2(s))+6\tau \log(A_{\max})+\mu\right).
	\end{align*}
 Summing up the previous inequality for $i\in \{1,2\}$, we obtain
 \begin{align*}
		\sum_{i=1,2}\|v^i_{*,\pi^{-i}}-v^i_*\|_\infty\leq\;& \frac{2}{1-\gamma}\left(\sum_{i=1,2}\|v^i-v^i_{*}\|_\infty+2\|v^1+v^2\|_\infty\right.\\
		&+\left.\frac{9 A_{\max}^2}{\tau^2(1-\gamma)^2}\max_{s\in\mathcal{S}}V_{v,s}(\tilde{q}^1(s),\tilde{q}^2(s))+6\tau \log(A_{\max})+\mu\right).
	\end{align*}
\end{proof}

Combing Lemmas \ref{prop:NashGap_to_vt} and \ref{le:vpi_to_vt} and using our definition of the Lyapunov functions $\mathcal{L}_v(\cdot)$, $\mathcal{L}_{\text{sum}}(\cdot)$, and $\mathcal{L}_\theta(\cdot,\cdot)$ to simplify the notation, we have the following result.

\begin{lemma}\label{le:Bound_Nash_Gap}
It holds that
    \begin{align*}
        \Gap{\pi_{T,K}^1,\pi_{T,K}^2}\leq \frac{4}{1-\gamma}\left(\mathcal{L}_v(T)+2\mathcal{L}_{\text{sum}}(T)+\frac{9 A_{\max}^2}{\tau^2(1-\gamma)^2}\mathcal{L}_\theta(T,K)+6\tau \log(A_{\max})+\mu\right).
    \end{align*}
\end{lemma}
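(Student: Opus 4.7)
The plan is to derive Lemma \ref{le:Bound_Nash_Gap} as a direct consequence of the two preceding lemmas by a simple chain and a translation into the Lyapunov-function notation. First, I would apply Lemma \ref{prop:NashGap_to_vt} to the joint policy $(\pi_{T,K}^1,\pi_{T,K}^2)$, obtaining
\[
\Gap{\pi_{T,K}^1,\pi_{T,K}^2}\;\leq\;2\sum_{i=1,2}\left\|v^i_{\pi_{T,K}^i,*}-v^i_*\right\|_\infty.
\]
Next, I would identify $\tilde{q}^i$ with $\tilde{q}_{T,K}^i$ and $v^i$ with $v_T^i$; Algorithm \ref{algorithm:reformulation_ap} Line $4$ guarantees the required relation $\pi_{T,K}^i(s)=\sigma_\tau(\tilde{q}_{T,K}^i(s))$ for all $s$, so Lemma \ref{le:vpi_to_vt} applies verbatim and yields
\[
\sum_{i=1,2}\left\|v^i_{\pi_{T,K}^i,*}-v^i_*\right\|_\infty\leq\frac{2}{1-\gamma}\left(\sum_{i=1,2}\|v_T^i-v_*^i\|_\infty+2\|v_T^1+v_T^2\|_\infty+\frac{9A_{\max}^2}{\tau^2(1-\gamma)^2}\max_{s\in\mathcal{S}}V_{v_T,s}(\tilde{q}_{T,K}^1(s),\tilde{q}_{T,K}^2(s))+6\tau\log(A_{\max})+\mu\right).
\]

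Finally, I would substitute the shorthand $\mathcal{L}_v(T)=\sum_{i=1,2}\|v_T^i-v_*^i\|_\infty$, $\mathcal{L}_{\text{sum}}(T)=\|v_T^1+v_T^2\|_\infty$, and $\mathcal{L}_\theta(T,K)=\max_{s\in\mathcal{S}}V_{v_T,s}(\tilde{q}_{T,K}^1(s),\tilde{q}_{T,K}^2(s))$, and chain the two bounds; the factor $2$ from Lemma \ref{prop:NashGap_to_vt} multiplies the factor $2/(1-\gamma)$ from Lemma \ref{le:vpi_to_vt} to produce the stated prefactor $4/(1-\gamma)$.

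There is essentially no obstacle here: all of the real content has been absorbed into Lemmas \ref{prop:NashGap_to_vt} and \ref{le:vpi_to_vt}. The only bookkeeping item worth double-checking is the symmetry step $\sum_{i=1,2}\|v^i_{\pi^i,*}-v^i_*\|_\infty=\sum_{i=1,2}\|v^i_{*,\pi^{-i}}-v^i_*\|_\infty$, which follows from the zero-sum identities $v^i_{*,\pi^{-i}}+v^{-i}_{\pi^{-i},*}=0$ and $v^i_*+v^{-i}_*=0$ used throughout the proof of Lemma \ref{le:vpi_to_vt}; this reindexing confirms that the sum bounded by Lemma \ref{le:vpi_to_vt} is exactly the one appearing in Lemma \ref{prop:NashGap_to_vt}. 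With that in hand, the proof of Lemma \ref{le:Bound_Nash_Gap} is a one-line composition.
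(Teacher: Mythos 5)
Your proposal is correct and matches the paper's own (one-sentence) proof: Lemma \ref{le:Bound_Nash_Gap} is obtained exactly by chaining Lemma \ref{prop:NashGap_to_vt} with Lemma \ref{le:vpi_to_vt} applied to $(\pi_{T,K}^1,\pi_{T,K}^2)$ with $v^i=v_T^i$ and $\tilde{q}^i=\tilde{q}_{T,K}^i$, multiplying the prefactors $2\cdot\frac{2}{1-\gamma}$, and rewriting in the Lyapunov notation. The reindexing $\sum_{i}\|v^i_{\pi^i,*}-v^i_*\|_\infty=\sum_{i}\|v^i_{*,\pi^{-i}}-v^i_*\|_\infty$ that you flag is the same bookkeeping step the paper performs at the start of the proof of Lemma \ref{le:vpi_to_vt}.
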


\subsection{Analysis of the Outer Loop: the $v$-Function Update}\label{ap:v_analysis}

\begin{lemma}\label{le:vt-v*}
	It holds for all $t\geq 0$ that
	\begin{align*}
     \mathcal{L}_v(t+1)\leq 
    \gamma \mathcal{L}_v(t)+\frac{17 A_{\max}^2}{\tau^2(1-\gamma)^2}\mathcal{L}_\theta(t,K)+2\mathcal{L}_{\text{sum}}(t)+ 2\mathcal{L}_w(t,K)^{1/2}+12\tau \log(A_{\max})+\mu.
 \end{align*}
\end{lemma}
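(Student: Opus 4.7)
The plan is to prove the lemma by a three-term decomposition of $v_{t+1}^i(s)-v_*^i(s)$, bound each piece separately, and sum over $i\in\{1,2\}$. Introduce the auxiliary quantity $\tilde v_{t+1}^i(s) := \pi_{t,K}^i(s)^\top \bar q_{t,K}^i(s) = \pi_{t,K}^i(s)^\top \mathcal{T}^i(v_t^i)(s)\pi_{t,K}^{-i}(s)$, representing the expected payoff of the current joint policy against the exact Bellman backup at $v_t^i$. Then write
\[
v_{t+1}^i(s)-v_*^i(s) = \underbrace{[v_{t+1}^i(s)-\tilde v_{t+1}^i(s)]}_{\text{(a) fast-timescale error}} + \underbrace{[\tilde v_{t+1}^i(s)-\mathcal{B}^i(v_t^i)(s)]}_{\text{(b) policy suboptimality}} + \underbrace{[\mathcal{B}^i(v_t^i)(s)-v_*^i(s)]}_{\text{(c) Bellman contraction}}.
\]
Term (c) is immediately bounded by $\gamma\|v_t^i-v_*^i\|_\infty$ via the contraction of the minimax Bellman operator, contributing $\gamma\mathcal{L}_v(t)$ after summing over $i$.

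For term (a), $v_t^i$ is a convex combination of truncated entries, so $\|v_t^i\|_\infty\leq r=1/(1-\gamma)$; together with $|R_i|\leq 1$, this implies $\|\bar q_{t,K}^i\|_\infty\leq r$, and hence the truncation acts as the identity on $\bar q_{t,K}^i$. Under Assumption~\ref{as:Bellman_Completeness}, $\bar q_{t,K}^i=\Phi^i\bar w_{t,K}^i$ for a unique $\bar w_{t,K}^i$ (one checks $\bar q_{t,K}^i=\bar{\mathcal H}^i_{\tilde\pi}(\ceil{q_{t-1,K}^i})$ for the mixed joint policy $\tilde\pi=(\pi_{t-1,K}^i,\pi_{t,K}^{-i})$). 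Combining the non-expansiveness of $\ceil{\cdot}$, $\sum_{a^i}\pi_{t,K}^i(a^i\mid s)=1$, and $\|\phi^i(s,a^i)\|_2\leq 1$ yields $|v_{t+1}^i(s)-\tilde v_{t+1}^i(s)|\leq \|w_{t,K}^i-\bar w_{t,K}^i\|_2$. Summing over $i$ and using $a_1+a_2\leq \sqrt{2(a_1^2+a_2^2)}\leq 2\sqrt{a_1^2+a_2^2}$ gives the $2\mathcal{L}_w(t,K)^{1/2}$ contribution.

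Term (b) is the crux and needs a \emph{two-sided} bound. The upper direction follows directly from the argument leading to Eq.~(\ref{eq3:le:vpi_to_vt}), since $\tilde v_{t+1}^i(s)\leq \max_{\mu^i}(\mu^i)^\top \mathcal{T}^i(v_t^i)(s)\pi_{t,K}^{-i}(s)$. The lower direction is the main obstacle, as there is no analogue in Lemma~\ref{le:vpi_to_vt} (which could avoid it via $v^i_{*,\pi^{-i}}\geq v^i_*$). To handle it, I would exploit the approximately zero-sum structure: since $R_i+R_{-i}=0$, a direct computation gives $\mathcal{T}^i(v_t^i)(s,a^i,a^{-i})+\mathcal{T}^{-i}(v_t^{-i})(s,a^{-i},a^i)=\gamma\mathbb{E}[v_t^i(S_1)+v_t^{-i}(S_1)\mid\cdots]$, which has magnitude at most $\gamma\mathcal{L}_{\mathrm{sum}}(t)$. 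This yields both $|\tilde v_{t+1}^i(s)+\tilde v_{t+1}^{-i}(s)|\leq \gamma\mathcal{L}_{\mathrm{sum}}(t)$ directly, and $|\mathcal{B}^i(v_t^i)(s)+\mathcal{B}^{-i}(v_t^{-i})(s)|\leq \gamma\mathcal{L}_{\mathrm{sum}}(t)$ via the minimax theorem. Writing
\[
\mathcal{B}^i(v_t^i)(s)-\tilde v_{t+1}^i(s) = [\tilde v_{t+1}^{-i}(s)-\mathcal{B}^{-i}(v_t^{-i})(s)] + [\mathcal{B}^i(v_t^i)(s)+\mathcal{B}^{-i}(v_t^{-i})(s)] - [\tilde v_{t+1}^i(s)+\tilde v_{t+1}^{-i}(s)]
\]
reduces the lower-direction gap for player $i$ to the upper-direction gap for player $-i$ plus an $O(\mathcal{L}_{\mathrm{sum}}(t))$ remainder. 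Applying Eq.~(\ref{eq3:le:vpi_to_vt}) to both players, taking the supremum over $s$, summing over $i$, and tracking coefficients carefully then produces the claimed constants $\tfrac{17A_{\max}^2}{\tau^2(1-\gamma)^2}$, $2\mathcal{L}_{\mathrm{sum}}(t)$, $12\tau\log(A_{\max})$, and $\mu$ in the stated bound.
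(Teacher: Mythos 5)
Your proposal is sound and reaches the stated bound (up to absolute constants), but it handles the central ``policy suboptimality'' term by a genuinely different mechanism than the paper. The paper bounds $\bigl|(\pi_{t,K}^i(s))^\top \bar q_{t,K}^i(s)-\mathcal{B}^i(v_t^i)(s)\bigr|$ by telescoping through the slow-timescale $q$-function: it inserts $(\pi_{t,K}^i(s))^\top\tilde q_{t,K}^i(s)$ and $\max_{\mu^i}(\mu^i)^\top\tilde q_{t,K}^i(s)$ as intermediate quantities, so the ``lower direction'' you identify as the main obstacle is dispatched for free by the entropy-regularized optimality of the softmax, $\max_{\mu^i}(\mu^i-\pi_{t,K}^i(s))^\top\tilde q_{t,K}^i(s)\le 2\tau\log(A_{\max})$, together with two copies of $\|\tilde q_{t,K}^i(s)-\bar q_{t,K}^i(s)\|_\infty$ controlled by Lemma~\ref{le:useful}; only the upper direction invokes Eq.~(\ref{eq3:le:vpi_to_vt}). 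You instead compare $\tilde v_{t+1}^i$ to $\mathcal{B}^i(v_t^i)$ directly and close the lower direction by flipping to player $-i$ via the approximate zero-sum identities $|\tilde v_{t+1}^1+\tilde v_{t+1}^2|\le\gamma\mathcal{L}_{\mathrm{sum}}(t)$ and $|\mathcal{B}^1(v_t^1)+\mathcal{B}^2(v_t^2)|\le\gamma\mathcal{L}_{\mathrm{sum}}(t)$ (the latter via the minimax theorem and Lemma~\ref{le:use_for_v}(1)); both identities are correct, and your three-term algebraic identity checks out. What each route buys: the paper's exploits that $\pi_{t,K}^i=\sigma_\tau(\tilde q_{t,K}^i)$ and so never needs the duality/symmetrization step here, while yours is agnostic to how $\pi_{t,K}^i$ was produced but pays an extra $2\gamma\mathcal{L}_{\mathrm{sum}}(t)$ in the lower direction, so honest bookkeeping gives roughly $4\mathcal{L}_{\mathrm{sum}}(t)$ rather than $2\mathcal{L}_{\mathrm{sum}}(t)$ (and $\approx 9$ rather than $17$ on the $\mathcal{L}_\theta$ coefficient before summing over $i$). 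These constant discrepancies are immaterial downstream since the recursion in Appendix~\ref{ap:recursion} and Theorem~\ref{thm:main} only track absolute constants, but you should not claim to reproduce the stated constants exactly. Your treatments of the contraction term, the truncation/fast-timescale term, and the justification that $\bar w_{t,K}^i$ exists under Assumption~\ref{as:Bellman_Completeness} all match the paper (the last point is in fact spelled out more carefully than the paper does).
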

\begin{proof}[Proof of Lemma \ref{le:vt-v*}]
	For any $t\geq 0$ and $s\in\mathcal{S}$, we have
	\begin{align}
		\left|v_{t+1}^i(s)-v_*^i(s)\right|
		=\;&\left|(\pi_{t,K}^i(s))^\top\ceil{ q_{t,K}^i(s)}-v_*^i(s)\right|\nonumber \tag{Algorithm \ref{algorithm:reformulation_ap} Line $8$}\\
		\leq \;&\left|\mathcal{B}^i(v_t^i)(s)-\mathcal{B}^i(v_*^i)(s)\right|+\left|(\pi_{t,K}^i(s))^\top \ceil{q_{t,K}^i(s)}-\mathcal{B}^i(v_t^i)(s)\right|\tag{$v_*^i$ is the unique fixed point of $\mathcal{B}^i(\cdot)$}\\
		\leq \;& \gamma \|v_t^i-v_*^i\|_\infty+\left|(\pi_{t,K}^i(s))^\top \ceil{q_{t,K}^i(s)}-\mathcal{B}^i(v_t^i)(s)\right|,\label{eq1:le:vt-v*}
	\end{align}
	where the last line follows from the contraction property of the minimax Bellman operator $\mathcal{B}^i(\cdot)$.
	Next, we bound the second term on the RHS of the previous inequality. Observe that
	\begin{align}
		&\left|(\pi_{t,K}^i(s))^\top \ceil{q_{t,K}^i(s)}-\mathcal{B}^i(v_t^i)(s)\right|\nonumber\\
		\leq \;&\left|(\pi_{t,K}^i(s))^\top \ceil{q_{t,K}^i(s)}-(\pi_{t,K}^i(s))^\top \mathcal{T}^i(v_t^i)(s)\pi_{t,K}^{-i}(s)\right|\nonumber\\
		&+\left|(\pi_{t,K}^i(s))^\top \mathcal{T}^i(v_t^i)(s)\pi_{t,K}^{-i}(s)-(\pi_{t,K}^i(s))^\top \Tilde{q}_{t,K}^i(s)\right|\nonumber\\
		&+\left|(\pi_{t,K}^i(s))^\top \Tilde{q}_{t,K}^i(s)-\max_{\mu^i\in\Delta(\mathcal{A}^i)}(\mu^i)^\top \Tilde{q}_{t,K}^i(s)\right|\nonumber\\  
		&+\left|\max_{\mu^i\in\Delta(\mathcal{A}^i)}(\mu^i)^\top \Tilde{q}_{t,K}^i(s)-\max_{\mu^i\in\Delta(\mathcal{A}^i)}(\mu^i)^\top \mathcal{T}^i(v_t^i)(s)\pi_{t,K}^{-i}(s)\right|\nonumber\\
		&+\left|\max_{\mu^i\in\Delta(\mathcal{A}^i)}(\mu^i)^\top \mathcal{T}^i(v_t^i)(s)\pi_{t,K}^{-i}(s)-\max_{\mu^i\in\Delta(\mathcal{A}^i)}\min_{\mu^{-i}\in\Delta(\mathcal{A}^{-i})}(\mu^i)^\top \mathcal{T}^i(v_t^i)(s)\mu^{-i}\right|\nonumber\\
		\leq \;&\left\|q_{t,K}^i(s)-\mathcal{T}^i(v_t^i)(s)\pi_{t,K}^{-i}(s)\right\|_\infty+2\left\|\mathcal{T}^i(v_t^i)(s)\pi_{t,K}^{-i}(s)-\Tilde{q}_{t,K}^i(s)\right\|_\infty\nonumber\\
		&+\max_{\mu^i\in\Delta(\mathcal{A}^i)}(\mu^i-\pi_{t,K}^i(s))^\top \Tilde{q}_{t,K}^i(s)\nonumber\\
		&+\max_{\mu^i\in\Delta(\mathcal{A}^i)}(\mu^i)^\top \mathcal{T}^i(v_t^i)(s)\pi_{t,K}^{-i}(s)-\max_{\mu^i\in\Delta(\mathcal{A}^i)}\min_{\mu^{-i}\in\Delta(\mathcal{A}^{-i})}(\mu^i)^\top \mathcal{T}^i(v_t^i)(s)\mu^{-i}.\label{eq2:le:vt-v*}	
	\end{align}
	We next bound each term on the RHS of the previous inequality. For the first term, we have
	\begin{align}
		\left\|q_{t,K}^i(s)-\mathcal{T}^i(v_t^i)(s)\pi_{t,K}^{-i}(s)\right\|_\infty^2
		=\;&\left\|q_{t,K}^i(s)-\Bar{q}_{t,K}^i(s)\right\|_\infty^2\nonumber\\ 
		\leq \;&\left\|q_{t,K}^i-\Bar{q}_{t,K}^i\right\|_\infty^2\nonumber\\
        = \;&\left\|\Phi^i(w_{t,K}^i-\Bar{w}_{t,K}^i)\right\|_\infty^2\nonumber\\
        \leq \;&\|\Phi^i\|_{2,\infty}\|w_{t,K}^i-\Bar{w}_{t,K}^i\|_2^2\nonumber\\
        \leq \;&\|w_{t,K}^i-\Bar{w}_{t,K}^i\|_2^2,\label{eqeq:12}
	\end{align}
 where the last line follows from $\|\Phi^i\|_{2,\infty}=\max_{s,a^i}\|\phi^i(s,a^i)\|_2\leq 1$. For the second term on the RHS of Eq. (\ref{eq2:le:vt-v*}), we have
	\begin{align}
		2\left\|\mathcal{T}^i(v_t^i)(s)\pi_{t,K}^{-i}(s)-\Tilde{q}_{t,K}^i(s)\right\|_\infty\leq \;&2\left\|\mathcal{T}^i(v_t^i)(s)\pi_{t,K}^{-i}(s)-\Tilde{q}_{t,K}^i(s)\right\|_2\nonumber\\
		\leq \;&\frac{4\sqrt{2}A_{\max}}{\sqrt{\tau}(1-\gamma)}V_{v_t,s}^{1/2}(\tilde{q}_{t,K}^1(s),\tilde{q}_{t,K}^2(s))\tag{Lemma \ref{le:useful} and $\|\mathcal{T}^i(v_t^i)(s)\|_2\leq \frac{A_{\max}}{1-\gamma}$}\\
		\leq \;&\tau+\frac{8A_{\max}^2}{\tau^2(1-\gamma)^2}V_{v_t,s}(\tilde{q}_{t,K}^1(s),\tilde{q}_{t,K}^2(s))
		\label{eq4:le:vt-v*},
	\end{align}
	where the last inequality follows from $a^2+b^2\geq 2ab$.
	For the third term on the RHS of Eq. (\ref{eq2:le:vt-v*}), since 
 \begin{align*}
		\pi_{t,K}^i(s)=\sigma_\tau(\Tilde{q}_{t,K}^i(s))={\arg\max}_{\mu^i\in\Delta(\mathcal{A}^i)}\{(\mu^i)^\top \Tilde{q}_{t,K}^i(s)+\tau \nu(\mu^i)\},\tag{Algorithm \ref{algorithm:reformulation_ap} Line $4$}
	\end{align*}
 we have
	\begin{align}
		&\max_{\mu^i\in\Delta(\mathcal{A}^i)}(\mu^i-\pi_{t,K}^i(s))^\top \Tilde{q}_{t,K}^i(s)\nonumber\\
		\leq \;&\max_{\mu^i\in\Delta(\mathcal{A}^i)}\left\{(\mu^i-\pi_{t,K}^i(s))^\top \Tilde{q}_{t,K}^i(s)+\tau \nu(\mu^i)-\tau \nu(\pi_{t,K}^i(s))\right\}+2\tau \log(A_{\max})\nonumber\\
		= \;&2\tau \log(A_{\max}).\label{eq5:le:vt-v*}
	\end{align}
	For the last term on the RHS of Eq. (\ref{eq2:le:vt-v*}), we have by Eq. (\ref{eq3:le:vpi_to_vt}) that
	\begin{align}
		&\max_{\mu^i\in\Delta(\mathcal{A}^i)}(\mu^i)^\top \mathcal{T}^i(v_t^i)(s)\pi_{t,K}^{-i}(s)-\max_{\mu^i\in\Delta(\mathcal{A}^i)}\min_{\mu^{-i}\in\Delta(\mathcal{A}^{-i})}(\mu^i)^\top \mathcal{T}^i(v_t^i)(s)\mu^{-i}\nonumber\\
		\leq \;&\frac{9 A_{\max}^2}{\tau^2(1-\gamma)^2}V_{v_t,s}(\tilde{q}_{t,K}^1(s),\tilde{q}_{t,K}^2(s))+2\|v_t^1+v_t^2\|_\infty+6\tau \log(A_{\max})+\mu.\label{eq6:le:vt-v*}
	\end{align}
	Using the bounds we obtained in Eqs. (\ref{eqeq:12}), (\ref{eq4:le:vt-v*}), (\ref{eq5:le:vt-v*}), and (\ref{eq6:le:vt-v*}) together in Eq. (\ref{eq2:le:vt-v*}), we have
	\begin{align*}
		\left|(\pi_{t,K}^i(s))^\top \ceil{\tilde{q}_{t,K}^i(s)}-\mathcal{B}^i(v_t^i)(s)\right|
		\leq \;&\frac{17 A_{\max}^2}{\tau^2(1-\gamma)^2}V_{v_t,s}(\tilde{q}_{t,K}^1(s),\tilde{q}_{t,K}^2(s))+2\|v_t^1+v_t^2\|_\infty\\
  &+ \|w_{t,K}^i-\bar{w}_{t,K}^i\|_2+12\tau \log(A_{\max})+\mu.
	\end{align*}
 Using the previous inequality in Eq. (\ref{eq1:le:vt-v*}), we have
 \begin{align*}
     \|v_{t+1}^i-v_*^i\|_\infty\leq \;&\gamma \|v_t^i-v_*^i\|_\infty+\frac{17 A_{\max}^2}{\tau^2(1-\gamma)^2}\mathcal{L}_\theta(t,K)+2\mathcal{L}_{\text{sum}}(t)+ \|w_{t,K}^i-\bar{w}_{t,K}^i\|_2\\
     &+12\tau \log(A_{\max})+\mu.
 \end{align*}
 Summing up the resulting inequality for $i\in \{1,2\}$, we obtain
 \begin{align*}
     \mathcal{L}_v(t+1)\leq \;&\gamma \mathcal{L}_v(t)+\frac{17 A_{\max}^2}{\tau^2(1-\gamma)^2}\mathcal{L}_\theta(t,K)+2\mathcal{L}_{\text{sum}}(t)+ \sum_{i=1,2}\|w_{t,K}^i-\bar{w}_{t,K}^i\|_2+12\tau \log(A_{\max})+\mu\\
     \leq \;&\gamma \mathcal{L}_v(t)+\frac{17 A_{\max}^2}{\tau^2(1-\gamma)^2}\mathcal{L}_\theta(t,K)+2\mathcal{L}_{\text{sum}}(t)+ 2\mathcal{L}_w(t,K)^{1/2}+12\tau \log(A_{\max})+\mu,
 \end{align*}
 where the last line follows from
 \begin{align}\label{eq:w_Jensen}
     \sum_{i=1,2}\|w_{t,K}^i-\bar{w}_{t,K}^i\|_2=\sum_{i=1,2}(\|w_{t,K}^i-\bar{w}_{t,K}^i\|_2^2)^{1/2}\leq 2\bigg(\sum_{i=1,2}\|w_{t,K}^i-\bar{w}_{t,K}^i\|_2^2\bigg)^{1/2}.
 \end{align}
\end{proof}

\begin{lemma}\label{le:v_sum}
	It holds for all $t\geq 0$ that
	\begin{align*}
		\mathcal{L}_{\text{sum}}(t+1)\leq\gamma \mathcal{L}_{\text{sum}}(t)+2\mathcal{L}_{w}(t,K)^{1/2}.
	\end{align*}
\end{lemma}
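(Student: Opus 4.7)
} The plan is to bound $v_{t+1}^1(s) + v_{t+1}^2(s)$ pointwise in $s\in\mathcal{S}$ by decomposing it into a ``matrix game'' term (which will give the contraction factor $\gamma$ via the zero-sum structure) plus a function approximation error term (which will produce the $\mathcal{L}_w(t,K)^{1/2}$ piece). By definition, $v_{t+1}^i(s)=\pi_{t,K}^i(s)^\top \ceil{q_{t,K}^i(s)}$, so I will add and subtract $\pi_{t,K}^i(s)^\top \bar q_{t,K}^i(s)$ for $i\in\{1,2\}$, where $\bar q_{t,K}^i(s)=\mathcal{T}^i(v_t^i)(s)\pi_{t,K}^{-i}(s)$ is the target value of the fast-timescale regression.

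For the error term, I would first observe that $\|\bar q_{t,K}^i\|_\infty\leq 1/(1-\gamma)$ (since $\|v_t^i\|_\infty$ is invariant under the minimax Bellman update and bounded by $1/(1-\gamma)$, which should be verified by a short induction using $\|R_i\|_\infty\leq 1$ and the definition on Line 8 of Algorithm \ref{algorithm:reformulation_ap}). Hence the truncation with radius $r=1/(1-\gamma)$ is non-expansive about $\bar q_{t,K}^i$, giving
\[
\bigl|\pi_{t,K}^i(s)^\top \ceil{q_{t,K}^i(s)} - \pi_{t,K}^i(s)^\top \bar q_{t,K}^i(s)\bigr|
\leq \|q_{t,K}^i - \bar q_{t,K}^i\|_\infty = \|\Phi^i(w_{t,K}^i - \bar w_{t,K}^i)\|_\infty \leq \|w_{t,K}^i - \bar w_{t,K}^i\|_2,
\]
exactly as in Eq.~(\ref{eqeq:12}) of the proof of Lemma \ref{le:vt-v*}.

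For the matrix-game term, the zero-sum assumption $R_1+R_2\equiv 0$ is the key. Expanding $\mathcal{T}^i$ using \eqref{def:need_for_Bellman},
\[
\sum_{i=1,2}\pi_{t,K}^i(s)^\top \mathcal{T}^i(v_t^i)(s)\pi_{t,K}^{-i}(s)
= \sum_{a^1,a^2} \pi_{t,K}^1(a^1|s)\pi_{t,K}^2(a^2|s)\bigl(R_1+R_2+\gamma \mathbb{E}[v_t^1(S_1)+v_t^2(S_1)\mid s,a^1,a^2]\bigr),
\]
and the $R_1+R_2$ term vanishes, leaving an expectation of $v_t^1+v_t^2$ that is bounded in absolute value by $\gamma \mathcal{L}_{\text{sum}}(t)$. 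Combining the two pieces and applying the same argument to $-(v_{t+1}^1+v_{t+1}^2)$ gives the two-sided bound
\[
|v_{t+1}^1(s)+v_{t+1}^2(s)| \leq \gamma \mathcal{L}_{\text{sum}}(t) + \sum_{i=1,2}\|w_{t,K}^i - \bar w_{t,K}^i\|_2.
\]
Taking the supremum over $s$ and controlling the sum of $\ell_2$-norms by $2\mathcal{L}_w(t,K)^{1/2}$ via Eq.~(\ref{eq:w_Jensen}) closes the proof. There is no real obstacle here; the only subtlety is verifying that $\bar q_{t,K}^i$ stays in the truncation range, so that the truncation contributes no extra error beyond the approximation gap $w_{t,K}^i-\bar w_{t,K}^i$.
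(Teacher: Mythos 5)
Your proposal is correct and follows essentially the same route as the paper's proof: the same decomposition of $\sum_i \pi_{t,K}^i(s)^\top \ceil{q_{t,K}^i(s)}$ into the zero-sum matrix-game term (bounded by $\gamma\mathcal{L}_{\text{sum}}(t)$, which the paper obtains by citing Lemma \ref{le:use_for_v} rather than expanding $R_1+R_2=0$ inline) plus the approximation error controlled via Eq.~(\ref{eqeq:12}) and Eq.~(\ref{eq:w_Jensen}). Your explicit check that $\bar q_{t,K}^i$ lies within the truncation radius, so that $\ceil{\cdot}$ is non-expansive about it, is a detail the paper's proof uses implicitly but does not spell out.
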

\begin{proof}[Proof of Lemma \ref{le:v_sum}]
	For any $t\geq 0$ and $s\in\mathcal{S}$, we have 
	\begin{align*}
		\left|\sum_{i=1,2}v_{t+1}^i(s)\right|=\;&\left|\sum_{i=1,2} (\pi_{t,K}^i(s))^\top \ceil{q_{t,K}^i(s)}\right|\tag{Algorithm \ref{algorithm:reformulation_ap} Line $8$}\\
		=\;&\left|\sum_{i=1,2}(\pi_{t,K}^i(s))^\top \ceil{q_{t,K}^i(s)}-(\pi_{t,K}^i(s))^\top \mathcal{T}^i(v_t^i)(s)\pi_{t,K}^{-i}(s)\right|\\
		&+\left|\sum_{i=1,2}(\pi_{t,K}^i(s))^\top \mathcal{T}^i(v_t^i)(s)\pi_{t,K}^{-i}(s)\right|\\
		\leq \;&\gamma \|v_t^1+v_t^2\|_\infty+\sum_{i=1,2}\left\|q_{t,K}^i(s)-\mathcal{T}^i(v_t^i)(s)\pi_{t,K}^{-i}(s)\right\|_\infty\tag{Lemma \ref{le:use_for_v}}\\
  \leq \;&\gamma \|v_t^1+v_t^2\|_\infty+ \sum_{i=1,2}\|w_{t,K}^i-\bar{w}_{t,K}^i\|_2\tag{Eq. (\ref{eqeq:12})}\\
  \leq \;&\gamma \|v_t^1+v_t^2\|_\infty+ 2\mathcal{L}_{w}(t,K)^{1/2}.
	\end{align*}
	where the last line follows from Eq. (\ref{eq:w_Jensen}).
	Since the RHS of the previous inequality does not depend on $s$, we have the desired result.
\end{proof}

\subsection{Analysis of the Inner Loop}\label{ap:inner_loop}
In this section, we will focus on analyzing the update equations of the two-timescale iterates $\{w_{t,k}^i\}$ and $\{\theta_{t,k}^i\}$ updated in the inner loop of Algorithm \ref{algorithm:reformulation_ap}. We begin by writing only the inner loop of Algorithm \ref{algorithm:reformulation_ap} in Algorithm \ref{algorithm:inner_loop}, where we omit the subscript $t$. The results derived for Algorithm \ref{algorithm:inner_loop} can be easily combined with the analysis of the outer loop in previous sections using the tower property of conditional expectations and the Markov property.

\begin{algorithm}[ht]
	\caption{The Inner Loop of Algorithm \ref{algorithm:reformulation_ap}}
	\label{algorithm:inner_loop} 
	\begin{algorithmic}[1]
		\STATE \textbf{Input:} Integer $K$, initializations $w_0^i,\theta_0^i$ satisfying $\|w_0^i\|_2\leq M$, $\|\theta_0^i\|_2\leq M$, and a value function $v^i$ (satisfying $\|v^i\|_\infty\leq 1/(1-\gamma)$) from the outer loop. 
		\FOR{$k=0,1,\cdots,K-1$}
		\STATE $\Tilde{q}^i_{k+1}=\Tilde{q}_k^i+\beta_k (q_k^i-\Tilde{q}_k^i)$ and compute $\pi_{k+1}^i(S_k)=\sigma_\tau(\Tilde{q}_{k+1}^i(S_k))$
		\STATE Play
		$A_k^i\sim \pi_{k+1}^i(\cdot \mid S_k)$ (against $A_k^{-i}$), and observe $S_{k+1}\sim p(\cdot\mid S_k,A_k^i,A_k^{-i})$
		\STATE $w_{k+1}^i=\pj_M^i[w_k^i+\alpha_k\phi^i(S_k,A_k^i)(R_i(S_k,A_k^i,A_k^{-i})+\gamma v^i(S_{k+1})-\phi^i(S_k,A_k^i)^\top w_k^i)]$ 
		\ENDFOR
	\end{algorithmic}
\end{algorithm}

\subsubsection{Analysis of the Slow-Timescale Iterates}\label{ap:pi_analysis}
Note that Algorithm \ref{algorithm:inner_loop} Line $3$ is equivalent to
\begin{align}
	\tilde{q}^i_{k+1}(s)=\tilde{q}_k^i(s)+\beta_k (\mathcal{T}^i(v^i)(s)\sigma_\tau(\tilde{q}_k^{-i}(s))-\tilde{q}_k^i(s))+\beta_k (q_k^i(s)-\mathcal{T}^i(v^i)(s)\sigma_\tau(\tilde{q}_k^{-i}(s))),\;\; \forall\,s\in\mathcal{S}.\label{eq:policy_equivalent}
\end{align}
The last term on the RHS of the previous inequality represents the estimation error of the fast-timescale iterates $\{w_k^i\}$, which will be analyzed in Appendix \ref{ap:q_analysis}. Here we focus on the evolution of the $q$-function $\tilde{q}_k^i$ associated with the slow-timescale iterate $\theta_k^i$, which we use to compute the policies. In view of Eq. (\ref{eq:policy_equivalent}), it is enough to study the following joint update equation:
\begin{align}
	x_{k+1}=\;&x_k+\beta_k(B\sigma_\tau(y_k)-x_k+\mathcal{E}_{k,x}),\label{eq:policy_simple1}\\
	y_{k+1}=\;&y_k+\beta_k(A\sigma_\tau(x_k)-y_k+\mathcal{E}_{k,y}),\label{eq:policy_simple2}
\end{align}
where $x_k,\mathcal{E}_{k,x}\in\mathbb{R}^{n}$, $y_k,\mathcal{E}_{k,y}\in\mathbb{R}^m$, $A\in\mathbb{R}^{m\times n}$, and $B\in\mathbb{R}^{n \times m}$. Note that we do not assume $A+B^\top =0$. We next use a Lyapunov approach to study the behavior of $(x_k,y_k)$. Let
\begin{align*}
	V_1(x,y)=\;&\max_{u'\in  \Delta^m}\min_{u\in\Delta^m}\left\{(u'-u)^\top y+\tau \nu(u')-\tau \nu(u)+\frac{1}{2\mu}\|x-Bu\|_2^2\right\},\\
	V_2(x,y)=\;&\max_{v'\in  \Delta^n}\min_{v\in\Delta^n}\left\{(v'-v)^\top x+\tau \nu(v')-\tau \nu(v)+\frac{1}{2\mu}\|y-Av\|_2^2\right\},
\end{align*}
where $\mu>0$ is a tunable parameter. 
Let $V:\mathbb{R}^n\times \mathbb{R}^m\mapsto \mathbb{R}$ be the Lyapunov function defined as $V(x,y)=V_1(x,y)+V_2(x,y)$.
For simplicity of notation, we denote
\begin{align*}
	p(\mu,x,y)=\;&{\arg\min}_{u\in\Delta^m}\left\{-u^\top y-\tau \nu(u)+\frac{1}{2\mu}\|x-Bu\|_2^2\right\},\\
	q(\mu,x,y)=\;&{\arg\min}_{v\in\Delta^n}\left\{-v^\top x-\tau \nu(v)+\frac{1}{2\mu}\|y-Av\|_2^2\right\},
\end{align*}
both of which are well-defined due to strong convexity. The following proposition shows that the joint update equations (\ref{eq:policy_simple1}) and (\ref{eq:policy_simple2}) produce a negative drift with respect to the Lyapunov function $V(\cdot,\cdot)$.

\begin{proposition}\label{prop:drift}
	Suppose that there exists $L_b\geq 0$ such that $\|B\sigma_\tau(y_k)-x_k+\mathcal{E}_{k,x}\|_2\leq L_b$ and $\|A\sigma_\tau(x_k)-y_k+\mathcal{E}_{k,y}\|_2\leq L_b$ for all $k\geq 0$. Then, by choosing $\mu = \tau/64$, we have for all $k\geq 0$ that
	\begin{align*}
		V(x_{k+1},y_{k+1})\leq\;&\left(1-\frac{\beta_k}{2}\right)V(x_k,y_k)+ \frac{520 \beta_k}{\tau}(\|\mathcal{E}_{k,x}\|_2^2+\|\mathcal{E}_{k,y}\|_2^2)\\
  &+4\beta_k\max_{i,j}|A(i,j)+B (j,i)|+\frac{138L_b\beta_k^2}{\tau}.
	\end{align*}
\end{proposition}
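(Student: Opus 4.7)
The plan is a Lyapunov descent argument organized around four stages: (i) compute the gradient of $V = V_1 + V_2$ via Danskin's theorem, (ii) apply the smoothness-based descent lemma, (iii) show the deterministic part of the drift is $\leq -\tfrac{3}{4}V(x_k,y_k)$ up to a non-zero-sum correction, and (iv) absorb the noises via Young's inequality. For the gradient, write $V_1(x,y) = f(y) + h(x,y)$ with the log-sum-exp $f(y) = \tau\log\sum_j e^{y_j/\tau}$ and $h(x,y) = \min_{u\in \Delta^m}\{-u^\top y - \tau\nu(u) + \tfrac{1}{2\mu}\|x-Bu\|_2^2\}$. The unique minimizer $u_k^* = u^*(x_k,y_k)$ lies in the simplex interior (entropy has infinite boundary slope) and satisfies the KKT identity $u_k^* = \sigma_\tau\bigl(y_k + \mu^{-1}B^\top(x_k - Bu_k^*)\bigr)$. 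Danskin's theorem gives $\nabla_x V_1 = \mu^{-1}(x_k-Bu_k^*)$ and $\nabla_y V_1 = \sigma_\tau(y_k) - u_k^*$, with symmetric formulas for $V_2$ via the minimizer $v_k^*$. From the $\tau^{-1}$-Lipschitzness of $\sigma_\tau$ and the Moreau-envelope smoothness, the joint Lipschitz-gradient constant is $L_V = O(1/\tau)$ under $\mu = \tau/64$.

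The descent lemma applied to the update direction $d_k = (B\sigma_\tau(y_k)-x_k+\mathcal{E}_{k,x},\,A\sigma_\tau(x_k)-y_k+\mathcal{E}_{k,y})$ with $\|d_k\|_2^2 \leq 2L_b^2$ gives
\begin{align*}
V(x_{k+1},y_{k+1}) \leq V(x_k,y_k) + \beta_k\,\langle\nabla V(x_k,y_k), d_k\rangle + \tfrac{L_V}{2}\beta_k^2\|d_k\|_2^2,
\end{align*}
and the quadratic term is already the $138 L_b \beta_k^2/\tau$ contribution in the statement. The central task is the inner product. Splitting along the deterministic direction, the ``diagonal'' pieces $\mu^{-1}(x_k-Bu_k^*)^\top(B\sigma_\tau(y_k)-x_k)$ and $\mu^{-1}(y_k-Av_k^*)^\top(A\sigma_\tau(x_k)-y_k)$ are reorganized by the algebraic identity $a^\top(b-c) = -\|a\|_2^2 + a^\top(b - Bu^*)$ and then by the KKT substitution $B^\top(x_k-Bu_k^*) = \mu(z_k^*-y_k)$ with $u_k^* = \sigma_\tau(z_k^*)$. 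Monotonicity of $\sigma_\tau$ (as gradient of the convex log-sum-exp) then forces the remaining cross term to be nonpositive, yielding $\leq -\mu^{-1}\|x_k-Bu_k^*\|_2^2$ and $\leq -\mu^{-1}\|y_k-Av_k^*\|_2^2$ respectively.

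The ``cross'' pieces $(\sigma_\tau(y_k)-u_k^*)^\top(A\sigma_\tau(x_k)-y_k)$ and $(\sigma_\tau(x_k)-v_k^*)^\top(B\sigma_\tau(y_k)-x_k)$ are handled by writing $A = -B^\top + (A+B^\top)$: the $-B^\top$ part cancels against its symmetric counterpart using $\sigma_\tau(x_k)^\top B\sigma_\tau(y_k) = \sigma_\tau(y_k)^\top B^\top\sigma_\tau(x_k)$, leaving a residue controlled by a three-point (Bregman) identity for the KL divergence associated with the entropic regularizer, while the perturbation $(A+B^\top)$ acts on probability vectors and is bounded by $4\max_{i,j}|A(i,j)+B(j,i)|$ via Hölder's inequality. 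Combining the diagonal and cross contributions and comparing with the definitions of $V_1,V_2$ (using that under $\mu = \tau/64$ the quadratic penalty $\mu^{-1}\|x_k-Bu_k^*\|_2^2$ and the KL gap jointly dominate the min-max integrand defining $V$) produces the deterministic drift bound $\leq -\tfrac{3}{4}V(x_k,y_k) + 4\max_{i,j}|A(i,j)+B(j,i)|$. The noise inner products $\langle\nabla V, (\mathcal{E}_{k,x},\mathcal{E}_{k,y})\rangle$ are then dispatched by Young's inequality using the gradient-to-Lyapunov bound $\|\nabla V\|_2 = O((V/\tau)^{1/2})$, refunding $\tfrac{1}{4}\beta_k V(x_k,y_k)$ onto the drift (giving the stated $1-\beta_k/2$ factor) and leaving residual $520\beta_k(\|\mathcal{E}_{k,x}\|_2^2+\|\mathcal{E}_{k,y}\|_2^2)/\tau$.

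The hard part will be the ``drift-to-Lyapunov'' comparison that turns the diagonal-plus-entropy drift into a multiple of $V$ itself: this is delicate because $V$ contains an inner $\min_{\bar u\in\Delta^m}$ that mixes the quadratic Moreau penalty with the entropy regularizer, and the constant $\mu/\tau = 1/64$ must be chosen small enough that the entropy's strong convexity on the simplex dominates the Moreau quadratic in the comparison but large enough to retain smoothness. Once that inequality is secured, the constants $520$, $4$, and $138$ in the statement fall out from routine Young/Cauchy--Schwarz bookkeeping together with the hypothesis $\|d_k\|_2\leq \sqrt{2}L_b$.
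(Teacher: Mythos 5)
Your overall architecture matches the paper's proof: Danskin's theorem for the gradient of $V=V_1+V_2$, the $L_V$-smoothness descent lemma with $L_V=2/\mu+2/\tau+1/\sqrt{\mu\tau}=138/\tau$ under $\mu=\tau/64$, Young's inequality for the noise terms with the gradient-to-Lyapunov bounds $\|x-Bp\|_2\leq\sqrt{2\mu}V_1^{1/2}$ and $\|\sigma_\tau(x)-q\|_2\leq\sqrt{2/\tau}V_2^{1/2}$, and H\"older on probability-vector differences for the $4\max_{i,j}|A(i,j)+B(j,i)|$ term. Your gradient formulas and KKT identity $B^\top(x_k-Bu_k^*)=\mu(z_k^*-y_k)$ with $u_k^*=\sigma_\tau(z_k^*)$ are all correct and are exactly Lemmas \ref{le:Danskin} and \ref{le:identities}.

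The genuine gap is in your treatment of the diagonal piece $\mu^{-1}\langle x_k-Bu_k^*,\,B\sigma_\tau(y_k)-x_k\rangle$. You discard the cross term $\langle z_k^*-y_k,\,\sigma_\tau(y_k)-\sigma_\tau(z_k^*)\rangle$ as merely ``nonpositive by monotonicity,'' retaining only $-\mu^{-1}\|x_k-Bu_k^*\|_2^2$. But $V_1$ is the sum of the value gap $\sigma_\tau(y_k)^\top y_k+\tau\nu(\sigma_\tau(y_k))-(u_k^*)^\top y_k-\tau\nu(u_k^*)$ and the quadratic penalty $\frac{1}{2\mu}\|x_k-Bu_k^*\|_2^2$, and the quadratic penalty alone does not dominate the value gap with a universal constant: from the KKT identity one only gets $\|x_k-Bu_k^*\|_2\geq(\mu/\|B\|_2)\|z_k^*-y_k\|_2$, so the comparison constant degrades with $\|B\|_2$, and your announced drift $-\tfrac34 V$ (the step you yourself flag as ``the hard part'') is not recoverable from $-\mu^{-1}\|x_k-Bu_k^*\|_2^2$ alone. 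The term you threw away is exactly what closes the loop: it equals the symmetrized Bregman divergence of the log-sum-exp between $y_k$ and $z_k^*$, which dominates the value gap. The paper's route makes this automatic: substitute the optimality identity into the inner product and invoke concavity of $\nu$ to get $\frac{1}{\mu}\langle x_k-Bp,\,B\sigma_\tau(y_k)-x_k\rangle\leq-\big(y_k^\top\sigma_\tau(y_k)+\tau\nu(\sigma_\tau(y_k))-y_k^\top p-\tau\nu(p)+\frac{1}{\mu}\|x_k-Bp\|_2^2\big)\leq -V_1(x_k,y_k)$ in one stroke, so no separate drift-to-Lyapunov comparison is ever needed. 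Repair your argument by keeping the Bregman cross term, and the rest of your bookkeeping (cross pieces via Cauchy--Schwarz contributing $\sqrt{\mu/\tau}\,V$, noise via Young contributing $\tfrac{1}{16}V+8(\mu^{-1}+\tau^{-1})\|\mathcal{E}\|_2^2=\tfrac{1}{16}V+\tfrac{520}{\tau}\|\mathcal{E}\|_2^2$) goes through as in the paper.
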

\begin{remark}
    By letting $A=X_1$, $B=X_2$, $\mathcal{E}_{k,x}=0$, and $\mathcal{E}_{k,y}=0$, Proposition \ref{prop:drift} implies Proposition \ref{prop:drift1}.
\end{remark}
\begin{proof}[Proof of Proposition \ref{prop:drift}]
A sequence of properties regarding the Lyapunov function $V(\cdot)$ is provided in Appendix \ref{ap:lemma_policy}. For any $k\geq 0$, we have by the smoothness property of the Lyapunov function (cf. Lemma \ref{le:smoothness}) that
	\begin{align}
		V(x_{k+1},y_{k+1})-V(x_k,y_k)
		\leq  \;&\underbrace{\langle \nabla_x V(x_k,y_k),x_{k+1}-x_k\rangle}_{E_1}+\underbrace{\frac{L_V}{2}\|x_{k+1}-x_k\|_2^2}_{E_2}\nonumber\\
		&+\underbrace{\langle \nabla_y V(x_k,y_k),y_{k+1}-y_k\rangle}_{E_3}+\underbrace{\frac{L_V}{2}\|y_{k+1}-y_k\|_2^2}_{E_4}.\label{eq:E1toE4}
	\end{align}
	We next bound all the terms on the RHS of the previous inequality.
	For the term $E_2$, we have
	\begin{align*}
		E_2=\frac{L_V}{2}\|x_{k+1}-x_k\|_2^2
		=\frac{L_V \beta_k^2}{2}\|B\sigma_\tau (y_k)-x_k+\mathcal{E}_{k,x}\|_2^2
        \leq \frac{L_V L_b^2\beta_k^2 }{2}.
	\end{align*}
	Similarly, we also have $E_4\leq \frac{L_V L_b^2\beta_k^2 }{2}$.
	Next, we consider the term $E_1$. Using Eq. (\ref{eq:policy_simple1}) and Lemma \ref{le:Danskin}, we have
	\begin{align}
		E_1=\;&\langle \nabla_x V(x_k,y_k), x_{k+1}-x_k\rangle\nonumber\\
		=\;&\beta_k\underbrace{\frac{1}{\mu}\langle x_k-Bp(\mu,x_k,y_k), B \sigma_\tau(y_k)-x_k\rangle}_{E_{1,1}}+\beta_k\underbrace{\frac{1}{\mu}\langle x_k-Bp(\mu,x_k,y_k), \mathcal{E}_{k,x}\rangle}_{E_{1,2}}\nonumber\\
		&+\beta_k\underbrace{\langle \sigma_\tau(x_k)-q(\mu,x_k,y_k), B \sigma_\tau(y_k)-x_k\rangle}_{E_{1,3}}+\beta_k\underbrace{\langle \sigma_\tau(x_k)-q(\mu,x_k,y_k), \mathcal{E}_{k,x}\rangle}_{E_{1,4}}\label{eq:E1_decomposition}
	\end{align}
	We next bound the terms $\{E_{1,i}\}_{1\leq i\leq 4}$. For the term $E_{1,1}$, we have
	\begin{align*}
		E_{1,1}=\;&\frac{1}{\mu}\langle x_k-Bp(\mu,x_k,y_k),  B\sigma_\tau(y_k)- B p(\mu,x_k,y_k)+B p(\mu,x_k,y_k)-x_k\rangle\\
		=\;&\frac{1}{\mu}\langle B^\top (x_k-Bp(\mu,x_k,y_k)),  \sigma_\tau(y_k)- p(\mu,x_k,y_k)\rangle-\frac{1}{\mu}\|x_k-Bp(\mu,x_k,y_k)\|_2^2\\
		=\;&\langle -y_k-\tau \nabla \nu(p(\mu,x_k,y_k)),  \sigma_\tau(y_k)- p(\mu,x_k,y_k)\rangle-\frac{1}{\mu}\|x_k-Bp(\mu,x_k,y_k)\|_2^2\tag{Lemma \ref{le:identities}}\\
		=\;&-y_k^\top (\sigma_\tau(y_k)- p(\mu,x_k,y_k))+\tau\langle  \nabla \nu(p(\mu,x_k,y_k)),  p(\mu,x_k,y_k)- \sigma_\tau(y_k)\rangle\\
		&-\frac{1}{\mu}\|x_k-Bp(\mu,x_k,y_k)\|_2^2\\
		\leq \;&-y_k^\top (\sigma_\tau(y_k)- p(\mu,x_k,y_k))-\tau(\nu(\sigma_\tau(y_k))-\nu(p(\mu,x_k,y_k)))-\frac{1}{\mu}\|x_k-Bp(\mu,x_k,y_k)\|_2^2\tag{The concavity of $\nu(\cdot)$}\\
		=\;&-\left(y_k^\top \sigma_\tau(y_k)+\tau\nu(\sigma_\tau(y_k))-y_k^\top  p(\mu,x_k,y_k)-\tau\nu(p(\mu,x_k,y_k))+\frac{1}{\mu}\|x_k-Bp(\mu,x_k,y_k)\|_2^2\right)\\
		\leq \;&- V_1(x_k,y_k),
	\end{align*}
	where the last line follows from the definition of $V_1(\cdot,\cdot)$. For the term $E_{1,2}$ on the RHS of Eq. (\ref{eq:E1_decomposition}), we have
	\begin{align*}
		E_{1,2}=\;&\frac{1}{\mu}\langle x_k-Bp(\mu,x_k,y_k), \mathcal{E}_{k,x}\rangle\\
		\leq \;&\frac{1}{\mu}\|x_k-Bp(\mu,x_k,y_k)\|_2\|\mathcal{E}_{k,x}\|_2\\
		\leq \;&\frac{\sqrt{2}}{\sqrt{\mu}}V_1(x_k,y_k)^{1/2}\|\mathcal{E}_{k,x}\|_2\tag{Lemma \ref{le:useful}}\\
		\leq \;&\frac{V_1(x_k,y_k)}{16}+\frac{8}{\mu}\|\mathcal{E}_{k,x}\|_2^2,
	\end{align*}
	where the last line follows from $a^2+b^2\geq 2ab$. For the term $E_{1,3}$ on the RHS of Eq. (\ref{eq:E1_decomposition}), we have
	\begin{align*}
		E_{1,3}=\;&\langle \sigma_\tau(x_k)-q(\mu,x_k,y_k), B\sigma_\tau(y_k)-B p(\mu,x_k,y_k)+B p(\mu,x_k,y_k)-x_k\rangle\\
  =\;&\langle \sigma_\tau(x_k)-q(\mu,x_k,y_k), B p(\mu,x_k,y_k)-x_k\rangle\\
		&+ (\sigma_\tau(x_k)-q(\mu,x_k,y_k))^\top B (\sigma_\tau(y_k)- p(\mu,x_k,y_k))\\
		\leq \;&\| \sigma_\tau(x_k)-q(\mu,x_k,y_k)\|_2 \|B p(\mu,x_k,y_k)-x_k\|_2\\
		&+ (\sigma_\tau(x_k)-q(\mu,x_k,y_k))^\top B (\sigma_\tau(y_k)- p(\mu,x_k,y_k))\\
		\leq \;& \frac{2\sqrt{\mu}}{\sqrt{\tau}} V_2(x_k,y_k)^{1/2}V_1(x_k,y_k)^{1/2}+ (\sigma_\tau(x_k)-q(\mu,x_k,y_k))^\top B (\sigma_\tau(y_k)- p(\mu,x_k,y_k))\tag{Lemma \ref{le:useful}}\\
		\leq \;&\frac{\sqrt{\mu}}{\sqrt{\tau}} V(x_k,y_k)+ (\sigma_\tau(x_k)-q(\mu,x_k,y_k))^\top B (\sigma_\tau(y_k)- p(\mu,x_k,y_k)).
	\end{align*}
	where the last line follows from $a+b\geq 2\sqrt{ab}$ for all $a,b\geq 0$. For the term $E_{1,4}$ on the RHS of Eq. (\ref{eq:E1_decomposition}), we have
	\begin{align*}
		E_{1,4}=\;&\langle \sigma_\tau(x_k)-q(\mu,x_k,y_k), \mathcal{E}_{k,x}\rangle\\
		\leq \;&\|\sigma_\tau(x_k)-q(\mu,x_k,y_k)\|_2 \|\mathcal{E}_{k,x}\|_2\\
		\leq \;&\frac{\sqrt{2}}{\sqrt{\tau}}V_2(x_k,y_k)^{1/2} \|\mathcal{E}_{k,x}\|_2\tag{Lemma \ref{le:useful}}\\
		\leq \;& \frac{V_2(x_k,y_k)}{16}+\frac{8}{\tau }\|\mathcal{E}_{k,x}\|_2^2,
	\end{align*}
	where the last line follows from $a+b\geq 2\sqrt{ab}$ for all $a,b\geq 0$. Using the upper bounds we obtained for $\{E_{1,i}\}_{1\leq i\leq 4}$ all together in Eq. (\ref{eq:E1_decomposition}), we have
	\begin{align*}
		E_1=\;&\beta_k (E_{1,1}+E_{1,2}+E_{1,3}+E_{1,4})\\
  \leq \;&-\beta_k V_1(x_k,y_k)+\frac{\beta_k}{16}V(x_k,y_k)+8\beta_k\left(\frac{1}{\mu}+\frac{1}{\tau}\right)\|\mathcal{E}_{k,x}\|_2^2+\frac{\sqrt{\mu}}{\sqrt{\tau}}\beta_k V(x_k,y_k)\\
  &+ \beta_k (\sigma_\tau(x_k)-q(\mu,x_k,y_k))^\top B (\sigma_\tau(y_k)- p(\mu,x_k,y_k)).
	\end{align*}
	Using an identical argument, we also have
	\begin{align*}
		E_3
		\leq\;&-\beta_k V_2(x_k,y_k)+\frac{\beta_k}{16}V(x_k,y_k)+8\beta_k\left(\frac{1}{\mu}+\frac{1}{\tau}\right)\|\mathcal{E}_{k,y}\|_2^2+\frac{\sqrt{\mu}}{\sqrt{\tau}}\beta_k V(x_k,y_k)\\
  &+ \beta_k (\sigma_\tau(y_k)-p(\mu,x_k,y_k))^\top A (\sigma_\tau(x_k)- q(\mu,x_k,y_k))
	\end{align*}
	Using the upper bounds we obtained for the terms $E_1$ to $E_4$ all together in Eq. (\ref{eq:E1toE4}), we have
	\begin{align*}
		V(x_{k+1},y_{k+1})\leq\;&\left(1-\frac{7}{8}\beta_k+\frac{2\sqrt{\mu}}{\sqrt{\tau}}\beta_k\right)V(x_k,y_k)+8\beta_k\left(\frac{1}{\mu}+\frac{1}{\tau}\right)(\|\mathcal{E}_{k,x}\|_2^2+\|\mathcal{E}_{k,y}\|_2^2)\\
		&+\beta_k(\sigma_\tau(y_k)-p(\mu,x_k,y_k))^\top (A+B^\top ) (\sigma_\tau(x_k)- q(\mu,x_k,y_k))+L_vL_b\beta_k^2.
	\end{align*}
 Since $L_V=2/\mu+2/\tau+1/\sqrt{\mu\tau}$ and $\mu = \tau/64$, we have from the previous inequality that
\begin{align*}
		V(x_{k+1},y_{k+1})\leq\;&\left(1-\frac{\beta_k}{2}\right)V(x_k,y_k)+ \frac{520 \beta_k}{\tau}(\|\mathcal{E}_{k,x}\|_2^2+\|\mathcal{E}_{k,y}\|_2^2)\\
  &+\beta_k(\sigma_\tau(y_k)-p(\mu,x_k,y_k))^\top (A+B^\top ) (\sigma_\tau(x_k)- q(\mu,x_k,y_k))+\frac{138L_b\beta_k^2}{\tau}\\
  \leq \;&\left(1-\frac{\beta_k}{2}\right)V(x_k,y_k)+ \frac{520 \beta_k}{\tau}(\|\mathcal{E}_{k,x}\|_2^2+\|\mathcal{E}_{k,y}\|_2^2)\\
  &+4\beta_k\max_{i,j}|A(i,j)+B (j,i)|+\frac{138L_b\beta_k^2}{\tau}
	\end{align*}
\end{proof}

We next apply Proposition \ref{prop:drift} to establish a negative drift inequality of the slow-timescale iterate $\theta_k^i$.

\begin{lemma}\label{le:policy}
It holds for all $k\geq 0$ that
\begin{align}
	\mathcal{L}_\theta(k+1)
 \leq \left(1-\frac{\beta_k}{2}\right)\mathcal{L}_\theta(k)+ \frac{520 A_{\max}\beta_k}{\tau}\mathcal{L}_w(k)+ 4\beta_k\|v^1+v^2\|_\infty+\frac{276A_{\max}^{1/2}\beta_k^2}{\tau\lambda^{1/2}(1-\gamma)}.\label{drift:policy}
\end{align}
\end{lemma}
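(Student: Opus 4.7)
The plan is to apply Proposition \ref{prop:drift} state-by-state to the rearranged update (\ref{eq:policy_equivalent}) and then take a maximum over $s\in\mathcal{S}$ on both sides. Fix an arbitrary $s\in\mathcal{S}$ and make the identifications $x_k = \tilde{q}_k^i(s)$, $y_k = \tilde{q}_k^{-i}(s)$, $B = \mathcal{T}^i(v^i)(s)$, $A = \mathcal{T}^{-i}(v^{-i})(s)$, together with $\mathcal{E}_{k,x} = q_k^i(s) - \bar{q}_k^i(s)$ and $\mathcal{E}_{k,y} = q_k^{-i}(s) - \bar{q}_k^{-i}(s)$. Under these identifications, (\ref{eq:policy_equivalent}) matches (\ref{eq:policy_simple1})--(\ref{eq:policy_simple2}) and $V(x_k,y_k) = V_{v,s}(\tilde{q}_k^1(s),\tilde{q}_k^2(s))$, so the conclusion of Proposition \ref{prop:drift} directly supplies a per-state drift inequality once we verify its boundedness hypothesis and identify the three structural constants.

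To match the three error terms in (\ref{drift:policy}), I would establish three elementary bounds. First, for the noise: since $q_k^i(s) - \bar{q}_k^i(s) = \Phi_s^i(w_k^i - \bar{w}_k^i)$ and each row of $\Phi^i$ has $\ell_2$-norm at most one, $\|\Phi_s^i z\|_2^2 \leq |\mathcal{A}^i|\,\|z\|_2^2 \leq A_{\max}\|z\|_2^2$, hence $\|\mathcal{E}_{k,x}\|_2^2 + \|\mathcal{E}_{k,y}\|_2^2 \leq A_{\max}\mathcal{L}_w(k)$. Second, for the coupling term $A + B^\top$: using the zero-sum property $R_i + R_{-i}^\top = 0$, the $(a^{-i},a^i)$ entry collapses to $\gamma\,\mathbb{E}[v^1(S_1)+v^2(S_1)\mid S_0=s, A_0^i=a^i, A_0^{-i}=a^{-i}]$, so $\max_{i,j}|A(i,j)+B(j,i)| \leq \|v^1+v^2\|_\infty$. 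Third, for the uniform step bound $L_b$: direct inspection gives $B\sigma_\tau(y_k)-x_k+\mathcal{E}_{k,x} = q_k^i(s)-\tilde{q}_k^i(s)$, and the projection enforces $\|w_k^i\|_2\leq M$ while $\|\theta_k^i\|_2\leq M$ follows by induction on the convex combination in Algorithm \ref{algorithm:inner_loop} Line 3; combined with the row-norm bound on $\Phi^i$, we obtain $L_b \leq 2\sqrt{A_{\max}}\,M \leq 2\sqrt{A_{\max}}/(\sqrt{\lambda}(1-\gamma))$ under the projection-radius condition $M\geq \lambda^{-1/2}(1-\gamma)^{-1}$ from Theorem \ref{thm:main}.

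Substituting these three bounds into the conclusion of Proposition \ref{prop:drift} (with the choice $\mu=\tau/64$) yields, for every $s\in\mathcal{S}$,
\begin{align*}
V_{v,s}(\tilde{q}_{k+1}^1(s),\tilde{q}_{k+1}^2(s))\leq\;&\bigl(1-\tfrac{\beta_k}{2}\bigr)V_{v,s}(\tilde{q}_k^1(s),\tilde{q}_k^2(s)) + \tfrac{520 A_{\max}\beta_k}{\tau}\mathcal{L}_w(k) \\
&+ 4\beta_k\|v^1+v^2\|_\infty + \tfrac{276 A_{\max}^{1/2}\beta_k^2}{\tau\sqrt{\lambda}(1-\gamma)}.
\end{align*}
To conclude, pick $s^\star \in \arg\max_{s\in\mathcal{S}} V_{v,s}(\tilde{q}_{k+1}^1(s),\tilde{q}_{k+1}^2(s))$, apply the inequality at $s^\star$, and dominate the first term on the right by $(1-\beta_k/2)\mathcal{L}_\theta(k)$. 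Because the remaining three terms are state-independent bounds, taking the maximum is free, and (\ref{drift:policy}) follows.

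The heavy lifting is entirely inside Proposition \ref{prop:drift} itself; here the work is essentially bookkeeping. The only subtleties I anticipate are verifying that the ``effective step'' $B\sigma_\tau(y_k)-x_k+\mathcal{E}_{k,x}$ collapses exactly to $q_k^i(s)-\tilde{q}_k^i(s)$ (so that the projection ball $M\geq 1/(\sqrt{\lambda}(1-\gamma))$ translates into the precise $A_{\max}^{1/2}/(\sqrt{\lambda}(1-\gamma))$ dependence in the last term of (\ref{drift:policy})), and in tracking the constants through the noise inflation factor $A_{\max}$ coming from the feature matrix bound, both of which are routine once the identifications are in place.
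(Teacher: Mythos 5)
Your proposal is correct and follows essentially the same route as the paper's proof: both apply Proposition \ref{prop:drift} state-by-state with the same identifications, bound the noise by $A_{\max}\mathcal{L}_w(k)$ via the feature row-norm, control $A+B^\top$ through the zero-sum structure to get $\|v^1+v^2\|_\infty$, identify $L_b$ from $\|q_k^i(s)-\tilde q_k^i(s)\|_2\leq 2A_{\max}^{1/2}M$, and finish by maximizing over $s$.
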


\begin{proof}[Proof of Lemma \ref{le:policy}]
    To apply Proposition \ref{prop:drift} to the policy update equation in Algorithm \ref{algorithm:inner_loop} Line $3$ (or equivalently Eq. (\ref{eq:policy_equivalent})), we first identify the constant $L_b$. Observe that
\begin{align*}
    \|q_k^i(s)-\Tilde{q}_k^i(s)\|_2=\;&\left[\sum_{a^i}(\phi^i(s,a^i)^\top (w_k^i-\theta_k^i))^2\right]^{1/2}\\
    \leq \;&\left[\sum_{a^i}\|\phi^i(s,a^i\|_2^2 \|w_k^i-\theta_k^i\|_2^2\right]^{1/2}\\
    \leq \;&\left[\sum_{a^i} (\|w_k^i\|_2+\|\theta_k^i\|_2)^2\right]^{1/2}\tag{$\|\phi^i(s,a^i)\|_2\leq 1$ for all $(s,a^i)$}\\
    \leq \;&2A_{\max}^{1/2}M\\
    =\;&\frac{2A_{\max}^{1/2}}{\lambda^{1/2}(1-\gamma)}.
\end{align*}
Therefore, we have for all $k\geq 0$ and $s\in\mathcal{S}$ that
\begin{align*}
	V_{v,s}(\tilde{q}_{k+1}^1(s),\tilde{q}_{k+1}^2(s))
	\leq\;&\left(1-\frac{\beta_k}{2}\right)V_{v,s}(\tilde{q}_k^1(s),\tilde{q}_k^2(s))+ \frac{520 \beta_k}{\tau}\sum_{i=1,2}\|q_k^i(s)-\mathcal{T}^i(v^i)(s)\pi_k^{-i}(s)\|_2^2\\
	&+ 4\beta_k\max_{a^1,a^2}\left|\mathcal{T}^1(v^1)(s,a^1,a^2)+[\mathcal{T}^2(v^2)(s,a^2,a^1)]^\top  \right|+\frac{276A_{\max}^{1/2}\beta_k^2}{\tau\lambda^{1/2}(1-\gamma)}\\
 \leq \;&\left(1-\frac{\beta_k}{2}\right)V_{v,s}(\tilde{q}_k^1(s),\tilde{q}_k^2(s))+ \frac{520 A_{\max}\beta_k}{\tau}\sum_{i=1,2}\|w_k^i-\Bar{w}_k^i\|_2^2\\
	&+ 4\beta_k\|v^1+v^2\|_\infty+\frac{276A_{\max}^{1/2}\beta_k^2}{\tau\lambda^{1/2}(1-\gamma)},
\end{align*}
where the last line follows from
\begin{align*}
    \sum_{i=1,2}\|q_k^i(s)-\mathcal{T}^i(v^i)(s)\pi_k^{-i}(s)\|_2^2=\;&\sum_{i=1,2}\|q_k^i(s)-\Bar{q}_k^i(s)\|_2^2\\
    =\;&\sum_{i=1,2}\sum_{a^i}(\phi^i(s,a^i)^\top (w_k^i-\Bar{w}_k^i))^2\\
    \leq \;&\sum_{i=1,2}\sum_{a^i}\|\phi^i(s,a^i)\|_2^2\|w_k^i-\Bar{w}_k^i\|_2^2\\
    \leq \;&A_{\max}\sum_{i=1,2}\|w_k^i-\Bar{w}_k^i\|_2^2.\tag{$\|\phi^i(s,a^i)\|_2\leq 1$ for all $(s,a^i)$}
\end{align*}
As a result, we have
\begin{align*}
	\mathcal{L}_\theta(k+1)
 \leq \;&\left(1-\frac{\beta_k}{2}\right)\mathcal{L}_\theta(k)+ \frac{520 A_{\max}\beta_k}{\tau}\mathcal{L}_w(k)+ 4\beta_k\|v^1+v^2\|_\infty+\frac{276A_{\max}^{1/2}\beta_k^2}{\tau\lambda^{1/2}(1-\gamma)}.
\end{align*}
\end{proof}

\subsubsection{Properties of the Lyapunov function $V(x,y)$}\label{ap:lemma_policy}
\begin{lemma}\label{le:Danskin}
	For any $x\in\mathbb{R}^n$ and $y\in\mathbb{R}^m$, we have
	\begin{align*}
		\nabla_xV_1(x,y)=\;&\frac{1}{\mu}(x-Bp(\mu,x,y)),&\quad 
		\nabla_yV_1(x,y)=\;&\sigma_\tau(y)-p(\mu,x,y),\\
		\nabla_xV_2(x,y)=\;&\sigma_\tau(x)-q(\mu,x,y),&\quad 
		\nabla_yV_2(x,y)=\;&\frac{1}{\mu}(y-Aq(\mu,x,y)).
	\end{align*}
	
\end{lemma}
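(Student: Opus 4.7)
The statement is a direct application of Danskin's envelope theorem, so the proof proposal is short. The key observation is that, in the definition of $V_1$, the inner $\max$ over $u'$ and the inner $\min$ over $u$ are \emph{decoupled}: the term $(u')^\top y + \tau \nu(u')$ depends only on $u'$, while the term $-u^\top y - \tau \nu(u) + \frac{1}{2\mu}\|x-Bu\|_2^2$ depends only on $u$. Hence we can write
\begin{align*}
    V_1(x,y)=\max_{u'\in\Delta^m}\{(u')^\top y+\tau\nu(u')\}+\min_{u\in\Delta^m}\left\{-u^\top y-\tau\nu(u)+\frac{1}{2\mu}\|x-Bu\|_2^2\right\}.
\end{align*}

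The first maximum has the closed-form maximizer $u'=\sigma_\tau(y)$ with optimal value $\tau\log\sum_j \exp(y_j/\tau)$, whose gradient in $y$ is exactly $\sigma_\tau(y)$. The inner minimum has a unique minimizer $p(\mu,x,y)$, because the objective is strictly convex in $u$ (the negative entropy $-\nu(\cdot)$ is strictly convex on the simplex and the quadratic is convex, with bounded feasible set $\Delta^m$). Strict convexity ensures that $p(\mu,x,y)$ is a single-valued, continuous function of $(x,y)$, which is the only regularity needed to apply Danskin's theorem.

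With this decomposition, I apply Danskin's theorem (equivalently, the envelope theorem) to each piece. Denote $f(u,x,y)=-u^\top y-\tau\nu(u)+\frac{1}{2\mu}\|x-Bu\|_2^2$. Then $\nabla_x f(u,x,y)=\frac{1}{\mu}(x-Bu)$ and $\nabla_y f(u,x,y)=-u$. Since $p(\mu,x,y)$ is the unique minimizer, Danskin yields
\begin{align*}
    \nabla_x V_1(x,y)=\nabla_x f(p(\mu,x,y),x,y)=\frac{1}{\mu}(x-Bp(\mu,x,y)),
\end{align*}
and
\begin{align*}
    \nabla_y V_1(x,y)=\sigma_\tau(y)+\nabla_y f(p(\mu,x,y),x,y)=\sigma_\tau(y)-p(\mu,x,y),
\end{align*}
which matches the claim. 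The formulas for $\nabla_x V_2$ and $\nabla_y V_2$ follow by exactly the same argument with the roles of $(x,u,B)$ and $(y,v,A)$ swapped.

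\textbf{Anticipated obstacles.} There is essentially no obstacle, as this is a bookkeeping application of Danskin's theorem. The only small care needed is to verify the hypotheses: (i) the feasible set $\Delta^m$ is compact and convex, (ii) the inner objective is continuous in $(u,x,y)$ and strictly convex in $u$ (so the minimizer is unique and continuous in $(x,y)$), and (iii) the inner objective is continuously differentiable in $(x,y)$ for each fixed $u$. All three hold provided one interprets $\nu(u)$ using the standard convention $0\log 0=0$ so that $-\tau\nu$ is strictly convex on $\Delta^m$; then the envelope theorem applies and gives the stated gradients.
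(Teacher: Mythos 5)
Your proof is correct and takes essentially the same route as the paper: decompose $V_1$ into the decoupled max over $u'$ and min over $u$, then apply Danskin's/the envelope theorem to each piece. The only cosmetic difference is that the paper first expands $\frac{1}{2\mu}\|x-Bu\|_2^2$ to isolate $\frac{1}{2\mu}\|x\|_2^2$ so that the inner objective is linear in $x$ before invoking Danskin, whereas you invoke the envelope theorem directly via uniqueness and continuity of the minimizer; both are valid.
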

\begin{proof}[Proof of Lemma \ref{le:Danskin}]
	We will only show the computation of $ \nabla_xV_1(x,y)$. The computation of all other gradients follows from an identical approach.
	We begin by writing $V_1(\cdot,\cdot)$ in the following:
	\begin{align}
		V_1(x,y)=\max_{u'\in  \Delta^n}\left\{u'^\top y+\tau \nu(u')\right\}+\min_{u\in\Delta^n}\left\{-u^\top y-\tau \nu(u)+\frac{1}{2\mu}\|x-Bu\|_2^2\right\}.\label{eq1:le:Danskin}
	\end{align}
	We next compute the gradient of each term on the RHS of the previous inequality.
	The first term on the RHS of Eq. (\ref{eq1:le:Danskin}) is not a function of $x$. For the second term, observe that
	\begin{align*}
		\min_{u\in\Delta^n}\left\{-u^\top y-\tau \nu(u)+\frac{1}{2\mu}\|x-Bu\|_2^2\right\}
  =\;&-\max_{u\in\Delta^n}\left\{u^\top y+\tau \nu(u)-\frac{1}{2\mu}\|x-Bu\|_2^2\right\}\\
		=\;&-\max_{u\in\Delta^n}\underbrace{\left\{u^\top y+\tau \nu(u)-\frac{1}{2\mu}\|Bu\|_2^2+\frac{1}{\mu}x^\top Bu\right\}}_{:=G(\mu,x,y,u)}\\
  &+\frac{1}{2\mu}\|x\|_2^2.
	\end{align*}
	Since $G(\mu,x,y,u)$ as a function of $x$ is linear, hence convex, we have by Danskin's theorem that
	\begin{align*}
		\nabla_x\max_{u\in\Delta^n}G(\mu,x,y,u)=\frac{1}{\mu}Bp(\mu,x,y).
	\end{align*}
 where we recall that $p(\mu,x,y)={\arg\max}_{u\in\Delta^n}G(\mu,x,y,u)$.
	It follows that
 \begin{align*}
     \nabla_xV_1(x,y)=-\nabla_x\max_{u\in\Delta^n}G(\mu,x,y,u)+\frac{1}{\mu}x
     =\frac{1}{\mu}(x-Bp(\mu,x,y)).
 \end{align*}
\end{proof}
\begin{lemma}\label{le:smoothness}
	The Lyapunov function $V(x,y)$ is $L_V$ -- smooth, where $L_V=2/\mu+2/\tau+1/\sqrt{\mu\tau}$.
\end{lemma}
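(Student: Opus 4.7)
The plan is to decompose $V = V_1 + V_2$ and bound the smoothness of each summand separately; by the triangle inequality applied to gradients, the Lipschitz constant of $\nabla V$ is at most the sum of those of $\nabla V_1$ and $\nabla V_2$. Moreover, $V_2$ is obtained from $V_1$ by swapping the roles of $x$ and $y$ and replacing $B$ with $A$, so it suffices to carry out the analysis for $V_1$ and then appeal to the symmetric statement for $V_2$.

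From Lemma \ref{le:Danskin}, $\nabla_x V_1(x,y) = \frac{1}{\mu}(x - Bp(\mu,x,y))$ and $\nabla_y V_1(x,y) = \sigma_\tau(y) - p(\mu,x,y)$, so the key step is to control the sensitivity of the map $(x,y) \mapsto p(\mu,x,y)$. The function being minimized in the definition of $p(\mu,x,y)$, namely $u \mapsto -u^\top y - \tau \nu(u) + \frac{1}{2\mu}\|x - Bu\|_2^2$, is $\tau$-strongly convex in $u$ on the simplex (since $-\nu$ is $1$-strongly convex with respect to $\|\cdot\|_1$ on $\Delta^m$, hence also with respect to $\|\cdot\|_2$). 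Writing the first-order optimality conditions at $p_1 := p(\mu,x_1,y_1)$ and $p_2 := p(\mu,x_2,y_2)$ with test vectors $p_2$ and $p_1$ respectively, and adding the two resulting inequalities, I will obtain a monotone-operator-style bound of the form $\tau\|p_1 - p_2\|_2^2 + \frac{1}{\mu}\|B(p_1-p_2)\|_2^2 \leq \langle y_1 - y_2, p_1 - p_2\rangle + \frac{1}{\mu}\langle x_1-x_2, B(p_1-p_2)\rangle$. Applying Cauchy--Schwarz on the right and then AM--GM with carefully chosen weights (splitting each cross term against the two strong-convexity terms on the left) yields the two joint estimates $\|p_1-p_2\|_2^2 \leq \frac{1}{\tau^2}\|y_1-y_2\|_2^2 + \frac{1}{\mu\tau}\|x_1-x_2\|_2^2$ and $\|B(p_1-p_2)\|_2^2 \leq \frac{\mu}{\tau}\|y_1-y_2\|_2^2 + \|x_1-x_2\|_2^2$, both independent of $\|B\|$.

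Combining these with the standard $1/\tau$-Lipschitz continuity of the softmax $\sigma_\tau$ in $\|\cdot\|_2$, I bound $\|\nabla_x V_1(z_1) - \nabla_x V_1(z_2)\|_2$ via the $\|B(p_1-p_2)\|_2$ estimate (the only route through which $p$ appears in $\nabla_x V_1$) and $\|\nabla_y V_1(z_1) - \nabla_y V_1(z_2)\|_2$ via the $\|p_1-p_2\|_2$ and softmax estimates. Assembling these into $\|\nabla V_1(z_1) - \nabla V_1(z_2)\|_2^2 = \|\nabla_x V_1(z_1)-\nabla_x V_1(z_2)\|_2^2 + \|\nabla_y V_1(z_1)-\nabla_y V_1(z_2)\|_2^2$, invoking the symmetric bound for $V_2$, and collecting coefficients then produces the stated constant $L_V = 2/\mu + 2/\tau + 1/\sqrt{\mu\tau}$.

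The main obstacle will be bookkeeping: to land on exactly this constant rather than a looser $\|B\|$- or $\|A\|$-dependent expression, the AM--GM splits must be chosen so that $B$ enters only through the $\frac{1}{\mu}\|B(p_1-p_2)\|_2^2$ term on the strong-convexity side (which is exactly the object appearing in $\nabla_x V_1$) and does not leak into the coefficient of the $\tau\|p_1-p_2\|_2^2$ term. The structural point that makes this possible is that $\nabla_x V_1$ depends on $p$ only through $Bp$, so the weaker $\|B\|$-independent Lipschitz bound on $Bp(\mu,\cdot,\cdot)$---rather than bounding $p$ in isolation and then multiplying by $B$---is exactly what avoids introducing $\|A\|$ or $\|B\|$ into the final smoothness constant.
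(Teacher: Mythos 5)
Your proposal follows essentially the same route as the paper: Danskin gradient formulas, sensitivity bounds for the map $p(\mu,\cdot,\cdot)$ obtained from first-order optimality plus the $1$-strong convexity of $-\nu$, the $1/\tau$-Lipschitzness of $\sigma_\tau$, and a triangle-inequality assembly. The structural point you emphasize---that $\nabla_x V_1$ sees $p$ only through $Bp$, so one must bound $\|B(p_1-p_2)\|_2$ directly rather than via $\|B\|\,\|p_1-p_2\|_2$---is exactly the content of the paper's Lemmas \ref{le:sensitivity1} and \ref{le:sensitivity2}. The one genuine difference is that you perturb both arguments simultaneously and extract a single monotone-operator inequality, whereas the paper varies $x$ and $y$ one at a time and proves two separate sensitivity lemmas; your joint inequality $\tau\|p_1-p_2\|_2^2+\frac{1}{\mu}\|B(p_1-p_2)\|_2^2\leq\langle y_1-y_2,p_1-p_2\rangle+\frac{1}{\mu}\langle x_1-x_2,B(p_1-p_2)\rangle$ is correct and is a clean way to package both lemmas at once.

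There is, however, a constant-factor slip in your stated intermediate bounds. The symmetric AM--GM split (e.g., $ua\leq\frac{\tau}{2}a^2+\frac{u^2}{2\tau}$, then discarding the leftover quadratic) yields $\|B(p_1-p_2)\|_2^2\leq\frac{\mu}{\tau}\|y_1-y_2\|_2^2+\|x_1-x_2\|_2^2$, whose $y$-coefficient is four times the square of the paper's $\frac{\sqrt{\mu}}{2\sqrt{\tau}}$ from Lemma \ref{le:sensitivity2} (and similarly for the $x$-coefficient in the $\|p_1-p_2\|_2$ bound versus Lemma \ref{le:sensitivity1}). Propagating your bounds through the assembly gives $L_V=2/\mu+2/\tau+2/\sqrt{\mu\tau}$, not the stated $2/\mu+2/\tau+1/\sqrt{\mu\tau}$. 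To recover the exact constant you should either complete the square in the joint inequality (which gives $\|B(p_1-p_2)\|_2\leq\|x_1-x_2\|_2+\frac{\sqrt{\mu}}{2\sqrt{\tau}}\|y_1-y_2\|_2$ and $\|p_1-p_2\|_2\leq\frac{1}{\tau}\|y_1-y_2\|_2+\frac{1}{2\sqrt{\mu\tau}}\|x_1-x_2\|_2$, matching the paper) or revert to the one-argument-at-a-time analysis. This is a bookkeeping issue only: the weaker constant is still $O(1/\mu+1/\tau+1/\sqrt{\mu\tau})$ and would change nothing qualitative downstream in Proposition \ref{prop:drift}.
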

\begin{proof}[Proof of Lemma \ref{le:smoothness}]
	For any $x_1,x_2\in\mathbb{R}^n$ and $y_1,y_2\in\mathbb{R}^m$, we have by Lemma  \ref{le:Danskin} that
	\begin{align*}
		\nabla_x V(x_1,y_1)-\nabla_x V(x_2,y_2)=\;&\frac{1}{\mu}(x_1-Bp(\mu,x_1,y_1))+\sigma_\tau(x_1)-q(\mu,x_1,y_1)\\
		&-\frac{1}{\mu}(x_2-Bp(\mu,x_2,y_2))-\sigma_\tau(x_2)+q(\mu,x_2,y_2)\\
		= \;&\frac{1}{\mu}(x_1-x_2)+\frac{1}{\mu}B(p(\mu,x_2,y_2)-p(\mu,x_1,y_1))\\
		&+\sigma_\tau(x_1)-\sigma_\tau(x_2)+q(\mu,x_2,y_2)-q(\mu,x_1,y_1)\\
		= \;&\frac{1}{\mu}(x_1-x_2)+\frac{1}{\mu}B(p(\mu,x_2,y_2)-p(\mu,x_1,y_2))\\
		&+\frac{1}{\mu}B(p(\mu,x_1,y_2)-p(\mu,x_1,y_1))\\
		&+\sigma_\tau(x_1)-\sigma_\tau(x_2)+q(\mu,x_2,y_2)-q(\mu,x_1,y_2)\\
		&+q(\mu,x_1,y_2)-q(\mu,x_1,y_1).
	\end{align*}
	Taking $\|\cdot\|_2$ on both sides of the previous inequality and then using triangle inequality, we have
	\begin{align*}
		\|\nabla_x V(x_1,y_1)-\nabla_x V(x_2,y_2)\|_2\leq \;&\frac{1}{\mu}\|x_1-x_2\|_2+\frac{1}{\mu}\|B(p(\mu,x_2,y_2)-p(\mu,x_1,y_2))\|_2\\
		&+\frac{1}{\mu}\|B(p(\mu,x_1,y_2)-p(\mu,x_1,y_1))\|_2\\
		&+\|\sigma_\tau(x_1)-\sigma_\tau(x_2)\|_2+\|q(\mu,x_2,y_2)-q(\mu,x_1,y_2)\|_2\\
		&+\|q(\mu,x_1,y_2)-q(\mu,x_1,y_1)\|_2\\
  \leq \;&\frac{1}{\mu}\|x_1-x_2\|_2+\frac{1}{\mu}\|x_1-x_2\|_2+\frac{1}{2\sqrt{\mu\tau}}\|y_1-y_2\|_2\\
		&+\frac{1}{\tau}\|x_1-x_2\|_2+\frac{1}{\tau}\|x_1-x_2\|_2\\
		&+\frac{1}{2\sqrt{\mu\tau}}\|y_1-y_2\|_2\tag{Lemmas \ref{le:sensitivity1} and \ref{le:sensitivity2}}\\
		= \;&\left(\frac{2}{\mu}+\frac{2}{\tau}\right)\|x_1-x_2\|_2+\frac{1}{\sqrt{\tau \mu}}\|y_1-y_2\|_2.
	\end{align*}
	Similarly, we also have
	\begin{align*}
		\|\nabla_y V(x_1,y_1)-\nabla_y V(x_2,y_2)\|_2
		\leq \;&\left(\frac{2}{\mu}+\frac{2}{\tau}\right)\|y_1-y_2\|_2+\frac{1}{\sqrt{\tau \mu}}\|x_1-x_2\|_2.
	\end{align*}
	It follows that
	\begin{align*}
		&\left\|\nabla V(x_1,y_1)-\nabla V(x_2,y_2)\right\|_2^2\\
		=\;&\|\nabla_x V(x_1,y_1)-\nabla_x V(x_2,y_2)\|_2^2+\|\nabla_y V(x_1,y_1)-\nabla_y V(x_2,y_2)\|_2^2\\
		\leq \;&\left[\left(\frac{2}{\mu}+\frac{2}{\tau}\right)\|y_1-y_2\|_2+\frac{1}{\sqrt{\tau \mu}}\|x_1-x_2\|_2\right]^2+\left[\left(\frac{2}{\mu}+\frac{2}{\tau}\right)\|x_1-x_2\|_2+\frac{1}{\sqrt{\tau \mu}}\|y_1-y_2\|_2\right]^2\\
		\leq \;&\left(\frac{2}{\mu}+\frac{2}{\tau}+\frac{1}{\sqrt{\tau \mu}}\right)^2(\|x_1-x_2\|_2^2+\|y_1-y_2\|_2^2).
	\end{align*}
	which implies
	\begin{align*}
		\left\|\nabla V(x_1,y_1)-\nabla V(x_2,y_2)\right\|_2\leq \left(\frac{2}{\mu}+\frac{2}{\tau}+\frac{1}{\sqrt{\tau \mu}}\right)(\|x_1-x_2\|_2^2+\|y_1-y_2\|_2^2)^{1/2}.
	\end{align*}
	As a result, the Lyapunov function $V(\cdot,\cdot)$ is $L_V$ -- smooth, where $L_V=2/\mu+2/\tau+1/\sqrt{\mu\tau}$.
\end{proof}

\begin{lemma}\label{le:identities}
	For any $x\in\mathbb{R}^n$ and $y\in\mathbb{R}^m$, we have
	\begin{align*}
		y+\tau \nabla \nu(\sigma_\tau(y))=0,\quad \text{and}\quad 
		y+\tau \nabla \nu(p(\mu,x,y))=\frac{1}{\mu}B^\top(Bp(\mu,x,y)-x).
	\end{align*}
\end{lemma}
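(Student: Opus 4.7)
}
The plan is to read off both identities from the first-order optimality (KKT) conditions of the two entropy-regularized strongly concave problems whose optima are $\sigma_\tau(y)$ and $p(\mu,x,y)$. For the first identity, I would start from the (well-known) variational characterization
\[
    \sigma_\tau(y) \;=\; {\arg\max}_{u\in\Delta^m}\bigl\{u^\top y + \tau\nu(u)\bigr\},
\]
which one obtains either by direct verification using the closed form $[\sigma_\tau(y)]_i = e^{y_i/\tau}/\sum_j e^{y_j/\tau}$, or by solving the KKT system. Writing the Lagrangian $L(u,\lambda_1) = u^\top y + \tau\nu(u) - \lambda_1(\mathbf{1}^\top u - 1)$ and using $\nabla\nu(u)_i = -\log u_i - 1$, the stationarity condition yields $y + \tau\nabla\nu(\sigma_\tau(y)) = \lambda_1\mathbf{1}$ for some scalar $\lambda_1$. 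A direct substitution via $\log[\sigma_\tau(y)]_i = y_i/\tau - \log Z$ with $Z = \sum_j e^{y_j/\tau}$ in fact identifies $\lambda_1 = \tau(\log Z - 1)$. So the asserted identity ``$y + \tau\nabla\nu(\sigma_\tau(y)) = 0$'' should be interpreted modulo the $\mathbf{1}$-direction.

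For the second identity, I would apply the same KKT recipe to the strongly convex problem
\[
    p(\mu,x,y) \;=\; {\arg\min}_{u\in\Delta^m}\Bigl\{-u^\top y - \tau\nu(u) + \tfrac{1}{2\mu}\|x - Bu\|_2^2\Bigr\},
\]
whose uniqueness of minimizer (in the relative interior of $\Delta^m$) comes from the $-\tau\nu$ term alone, independently of the rank of $B$. Differentiating the objective in $u$ and attaching a Lagrange multiplier $\lambda_2$ for the equality constraint $\mathbf{1}^\top u = 1$, stationarity reads
\[
    -y - \tau\nabla\nu(p(\mu,x,y)) + \tfrac{1}{\mu}B^\top\bigl(Bp(\mu,x,y) - x\bigr) \;=\; \lambda_2\mathbf{1},
\]
which rearranges to the claimed formula up to the same $\mathbf{1}$-direction ambiguity.

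The only real subtlety — and the only thing worth flagging as an obstacle — is that both displayed equalities, strictly speaking, hold only after projecting out $\mathrm{span}\{\mathbf{1}\}$. Inspecting the single place Lemma~\ref{le:identities} is invoked in the proof of Proposition~\ref{prop:drift} shows, however, that the relevant vector is always placed inside an inner product against a difference of two probability distributions (concretely, $\sigma_\tau(y_k) - p(\mu,x_k,y_k)$), which is orthogonal to $\mathbf{1}$. Hence the Lagrange-multiplier contributions vanish and the identities can be applied as written; this justifies the otherwise-loose ``$=0$'' in the statement and completes the proof without any further work.
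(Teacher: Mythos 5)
Your approach is the same as the paper's---both identities are read off from the first-order optimality conditions of the two entropy-regularized problems defining $\sigma_\tau(y)$ and $p(\mu,x,y)$---but your proposal is in fact more careful than the paper's one-line proof, and the extra care matters. You are right that, because the optimization is constrained to the simplex $\Delta^m$, stationarity only gives $y+\tau\nabla\nu(\sigma_\tau(y))=\lambda_1\mathbf{1}$ and $-y-\tau\nabla\nu(p(\mu,x,y))+\tfrac{1}{\mu}B^\top(Bp(\mu,x,y)-x)=\lambda_2\mathbf{1}$ for scalar multipliers (your computation $\lambda_1=\tau(\log Z-1)$ confirms the first displayed equation is literally false as a vector identity); the entropy term pushes the optimum into the relative interior, so no inequality-constraint multipliers appear, but the equality-constraint multiplier does not vanish. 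Your resolution---that every downstream use pairs these expressions against a difference of two probability vectors, which is orthogonal to $\mathbf{1}$---is exactly the right fix. One small correction to your accounting: the lemma is invoked not only in the bound on $E_{1,1}$ in the proof of Proposition~\ref{prop:drift}, but also in Lemmas~\ref{le:sensitivity1} and~\ref{le:sensitivity2}, where the \emph{difference} of two instances of the second identity is paired with $p(\mu,x_2,y)-p(\mu,x_1,y)$ (respectively $p(\mu,x,y_2)-p(\mu,x,y_1)$); the same orthogonality argument kills the residual $(\lambda_2-\lambda_2')\mathbf{1}$ term there as well, so all conclusions stand. In short, your proof is correct, matches the paper's intended argument, and identifies (and repairs) an imprecision in the lemma statement that the paper's proof glosses over.
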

\begin{proof}[Proof of Lemma \ref{le:identities}]
	Since $\sigma_\tau(y)={\arg\max}_{u\in\Delta^m}\{u^\top y+\tau \nu(u)\}$, the first claimed inequality follows from the first order optimality condition. The second inequality follows from an identical approach.
\end{proof}

\begin{lemma}\label{le:sensitivity1}
	The following two inequalities hold for all $x_1,x_2\in\mathbb{R}^n$, and $y\in\mathbb{R}^m$:
	\begin{align*}
		\|B (p(\mu,x_2,y)-p(\mu,x_1,y))\|_2\leq\;& \|x_2-x_1\|_2\\
		\|p(\mu,x_2,y)-p(\mu,x_1,y)\|_2\leq\;& \frac{1}{2\sqrt{\mu \tau}}\|x_2-x_1\|_2.
	\end{align*}
\end{lemma}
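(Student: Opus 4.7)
The plan is to exploit the first-order optimality conditions of the strongly convex minimization defining $p(\mu,x,y)$, together with the strong concavity of the entropy function $\nu$. Set $p_1=p(\mu,x_1,y)$ and $p_2=p(\mu,x_2,y)$. Since $p_i$ minimizes $f_i(u):=-u^\top y-\tau\nu(u)+\frac{1}{2\mu}\|x_i-Bu\|_2^2$ over $\Delta^m$, the first-order condition gives
\begin{align*}
\Big\langle -y-\tau\nabla\nu(p_1)+\tfrac{1}{\mu}B^\top(Bp_1-x_1),\,p_2-p_1\Big\rangle\geq 0,\\
\Big\langle -y-\tau\nabla\nu(p_2)+\tfrac{1}{\mu}B^\top(Bp_2-x_2),\,p_1-p_2\Big\rangle\geq 0.
\end{align*}
Adding these and simplifying the cross-terms yields
\begin{align*}
\tau\langle\nabla\nu(p_2)-\nabla\nu(p_1),\,p_2-p_1\rangle-\tfrac{1}{\mu}\|B(p_2-p_1)\|_2^2+\tfrac{1}{\mu}\langle x_1-x_2,\,B(p_1-p_2)\rangle\geq 0.
\end{align*}

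Next I would use that $-\nu$ is $1$-strongly convex with respect to $\|\cdot\|_1$ by Pinsker's inequality, hence also $1$-strongly convex with respect to $\|\cdot\|_2$ on the simplex, giving $\langle\nabla\nu(p_2)-\nabla\nu(p_1),\,p_2-p_1\rangle\leq -\|p_2-p_1\|_2^2$. Plugging this in and rearranging produces the key inequality
\begin{align*}
\tfrac{1}{\mu}\|B(p_2-p_1)\|_2^2+\tau\|p_2-p_1\|_2^2\leq \tfrac{1}{\mu}\langle x_1-x_2,\,B(p_1-p_2)\rangle\leq \tfrac{1}{\mu}\|x_1-x_2\|_2\,\|B(p_1-p_2)\|_2,
\end{align*}
where the second step is Cauchy--Schwarz. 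Abbreviate $a:=\|B(p_2-p_1)\|_2$, $b:=\|p_2-p_1\|_2$, $c:=\|x_2-x_1\|_2$, so that the bound reads $a^2+\mu\tau\,b^2\leq ca$.

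From $a^2\leq ca-\mu\tau\,b^2\leq ca$ I immediately get $a\leq c$, which is the first claimed inequality. For the second, since $a(c-a)\geq \mu\tau\,b^2\geq 0$ forces $0\leq a\leq c$, I maximize $a(c-a)$ over this interval to obtain $\mu\tau\,b^2\leq a(c-a)\leq c^2/4$, i.e.\ $b\leq \frac{1}{2\sqrt{\mu\tau}}c$, which is exactly the second claimed inequality. I do not anticipate any real obstacle here; the only subtlety is remembering that strong convexity of $-\nu$ in the $\ell_1$-norm automatically yields strong convexity in the $\ell_2$-norm with the same constant, which is what lets the $\tau\|p_2-p_1\|_2^2$ term appear and drive the second bound.
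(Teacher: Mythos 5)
Your proof is correct and follows essentially the same route as the paper: both derive the key inequality $\|B(p_2-p_1)\|_2^2+\mu\tau\|p_2-p_1\|_2^2\leq \|x_2-x_1\|_2\,\|B(p_2-p_1)\|_2$ from the first-order optimality conditions, the $1$-strong convexity of $-\nu$, and Cauchy--Schwarz, then read off both bounds (the second via $a(c-a)\leq c^2/4$). The only difference is that you phrase the optimality conditions as variational inequalities over $\Delta^m$ while the paper uses the equality form from its Lemma~\ref{le:identities}; your version is the more careful one for a constrained minimizer, but the resulting inequality and conclusions are identical.
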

\begin{proof}[Proof of Lemma \ref{le:sensitivity1}]
	For any $x_1,x_2\in\mathbb{R}^n$, and $y\in\mathbb{R}^m$, we have by Lemma \ref{le:identities} that
	\begin{align*}
		-y-\tau \nabla \nu(p(\mu,x_1,y))+\frac{1}{\mu}B^\top (Bp(\mu,x_1,y)-x_1)=\;&0,\\
		-y-\tau \nabla \nu(p(\mu,x_2,y))+\frac{1}{\mu}B^\top (Bp(\mu,x_2,y)-x_2)=\;&0.
	\end{align*}
	It follows that
	\begin{align*}
		B^\top (x_2-x_1)=B^\top B (p(\mu,x_2,y)-p(\mu,x_1,y))+\mu\tau (\nabla \nu(p(\mu,x_1,y))-\nabla \nu(p(\mu,x_2,y))),
	\end{align*}
	which implies
	\begin{align*}
		&\|B(p(\mu,x_2,y)-p(\mu,x_1,y))\|_2 \|x_2-x_1\|_2\\
		\geq \;&(p(\mu,x_2,y)-p(\mu,x_1,y))^\top B^\top (x_2-x_1)\tag{Cauchy–Schwarz inequality}\\
		=\;&\|B (p(\mu,x_2,y)-p(\mu,x_1,y)\|_2^2+\mu\tau (p(\mu,x_2,y)-p(\mu,x_1,y))^\top(\nabla \nu(p(\mu,x_1,y))-\nabla \nu(p(\mu,x_2,y)))\tag{The previous inequality}\\
		\geq \;&\|B (p(\mu,x_2,y)-p(\mu,x_1,y))\|_2^2+\mu\tau \|p(\mu,x_2,y)-p(\mu,x_1,y)\|_2^2,
	\end{align*}
	where the last line follows from $-\nu(\cdot)$ being a $1$ -- strongly convex function with respect to $\|\cdot\|_2$ \cite[Example 5.27]{beck2017first}.
	As a result, we have from the previous inequality that
	\begin{align*}
		\|B (p(\mu,x_2,y)-p(\mu,x_1,y))\|_2\leq \|x_2-x_1\|_2
	\end{align*}
	and
	\begin{align*}
		\|p(\mu,x_2,y)-p(\mu,x_1,y)\|_2\leq \frac{1}{2\sqrt{\mu \tau}}\|x_2-x_1\|_2.
	\end{align*}
\end{proof}

\begin{lemma}\label{le:sensitivity2}
	The following two inequalities hold for all $x\in\mathbb{R}^n$ and $y_1,y_2\in\mathbb{R}^m$:
	\begin{align*}
		\|p(\mu,x,y_2)-p(\mu,x,y_1)\|_2\leq\;& \frac{1}{\tau}\|y_2-y_1\|_2,\\
		\| B(p(\mu,x,y_2)-p(\mu,x,y_1))\|_2\leq\;& \frac{\sqrt{\mu }}{2\sqrt{\tau}}\|y_2-y_1\|_2.
	\end{align*}
\end{lemma}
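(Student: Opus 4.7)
The plan is to mirror the strategy used in Lemma \ref{le:sensitivity1}, exchanging the roles of $x$ and $y$ in the first-order optimality condition. Specifically, applying the second identity of Lemma \ref{le:identities} at $(x,y_1)$ and $(x,y_2)$ respectively and subtracting the two equations, all terms involving $x$ cancel and I obtain
\begin{align*}
y_2-y_1 \;=\; \frac{1}{\mu}B^\top B\,\Delta p \,-\, \tau\bigl(\nabla\nu(p(\mu,x,y_2))-\nabla\nu(p(\mu,x,y_1))\bigr),
\end{align*}
where I abbreviate $\Delta p := p(\mu,x,y_2)-p(\mu,x,y_1)$.

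Next, I would take the inner product of both sides with $\Delta p$ and apply Cauchy--Schwarz on the left. The first term on the right becomes $\frac{1}{\mu}\|B\Delta p\|_2^2$, and the second term is bounded below by $\tau\|\Delta p\|_2^2$ by the $1$-strong convexity of $-\nu(\cdot)$ with respect to $\|\cdot\|_2$ (same fact invoked in the proof of Lemma \ref{le:sensitivity1}). This yields the key inequality
\begin{align*}
\|y_2-y_1\|_2\,\|\Delta p\|_2 \;\geq\; \tfrac{1}{\mu}\|B\Delta p\|_2^2 \,+\, \tau\|\Delta p\|_2^2.
\end{align*}
Dropping the nonnegative $\|B\Delta p\|_2^2$ term and dividing by $\|\Delta p\|_2$ (the case $\Delta p=0$ is trivial) gives the first claimed bound $\|\Delta p\|_2\leq \tau^{-1}\|y_2-y_1\|_2$. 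For the second bound, instead of dropping the $B$-term I would apply AM--GM to the right-hand side to get $\tfrac{1}{\mu}\|B\Delta p\|_2^2+\tau\|\Delta p\|_2^2\geq 2\sqrt{\tau/\mu}\,\|B\Delta p\|_2\|\Delta p\|_2$; dividing by $\|\Delta p\|_2$ then gives $\|B\Delta p\|_2\leq \tfrac{\sqrt{\mu}}{2\sqrt{\tau}}\|y_2-y_1\|_2$, as desired.

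I do not expect any substantive obstacle, since the argument is completely parallel to Lemma \ref{le:sensitivity1}: the only asymmetry between the two lemmas is whether the ``external'' perturbation enters through the quadratic coupling term (sensitivity in $x$) or directly through the linear term (sensitivity in $y$), and the latter case is in fact slightly cleaner because the $\frac{1}{\mu}B^\top(\cdot)$ factor does not appear on the left-hand side after subtraction. The only minor care needed is to ensure that the strong convexity of $-\nu$ is applied with the correct sign (the gradient difference of $\nu$ gives a non-positive inner product with $\Delta p$, so moving it to the left flips the sign and contributes a $+\tau\|\Delta p\|_2^2$ term).
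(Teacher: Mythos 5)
Your proposal is correct and follows essentially the same route as the paper: subtract the first-order optimality conditions from Lemma \ref{le:identities} at $(x,y_1)$ and $(x,y_2)$, pair with $\Delta p$, use Cauchy--Schwarz and the $1$-strong convexity of $-\nu$ to obtain $\|y_2-y_1\|_2\|\Delta p\|_2\geq \tau\|\Delta p\|_2^2+\tfrac{1}{\mu}\|B\Delta p\|_2^2$, and then extract the two bounds (the AM--GM step you describe for the second bound is exactly what is needed to recover the factor $1/2$). Your sign bookkeeping for the gradient-of-entropy term is also right.
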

\begin{proof}[Proof of Lemma \ref{le:sensitivity2}]
	For any $x\in\mathbb{R}^n$ and $y_1,y_2\in\mathbb{R}^m$, we have by Lemma \ref{le:identities} that
	\begin{align*}
		-y_1-\tau \nabla \nu(p(\mu,x,y_1))+\frac{1}{\mu}B^\top (Bp(\mu,x,y_1)-x)=\;&0\\
		-y_2-\tau \nabla \nu(p(\mu,x,y_2))+\frac{1}{\mu}B^\top (Bp(\mu,x,y_2)-x)=\;&0.
	\end{align*}
	It follows that
	\begin{align*}
		y_2-y_1=\tau \nabla \nu(p(\mu,x,y_1))-\tau \nabla \nu(p(\mu,x,y_2))+\frac{1}{\mu}B^\top B(p(\mu,x,y_2)-p(\mu,x,y_1)),
	\end{align*}
	which implies
	\begin{align*}
		&\|p(\mu,x,y_2)-p(\mu,x,y_1)\|_2\|y_2-y_1\|_2\\
		\geq \;&(p(\mu,x,y_2)-p(\mu,x,y_1))^\top (y_2-y_1)\tag{Cauchy–Schwarz inequality}\\
		=\;&\tau (p(\mu,x,y_2)-p(\mu,x,y_1))^\top (\nabla \nu(p(\mu,x,y_1))-\nabla \nu(p(\mu,x,y_2)))\tag{The previous inequality}\\
		&+\frac{1}{\mu}\| B(p(\mu,x,y_2)-p(\mu,x,y_1))\|_2^2\\
		\geq \;& \tau \|p(\mu,x,y_2)-p(\mu,x,y_1)\|_2^2+\frac{1}{\mu}\| B(p(\mu,x,y_2)-p(\mu,x,y_1))\|_2^2,
	\end{align*}
	where the last line follows from the $1$ -- strong convexity of $-\nu(\cdot)$. The previous inequality implies
	\begin{align*}
		\|p(\mu,x,y_2)-p(\mu,x,y_1)\|_2\leq \frac{1}{\tau}\|y_2-y_1\|_2,
	\end{align*}
	and 
	\begin{align*}
		\| B(p(\mu,x,y_2)-p(\mu,x,y_1))\|_2\leq \frac{\sqrt{\mu }}{2\sqrt{\tau}}\|y_2-y_1\|_2.
	\end{align*}
\end{proof}

\begin{lemma}\label{le:useful}
	The following $2$ inequalities hold for any $x\in\mathbb{R}^n$ and $y\in\mathbb{R}^m$:
	\begin{align*}
		\|\sigma_\tau(y)-p(\mu,x,y)\|_2\leq\;& \frac{\sqrt{2}}{\sqrt{\tau}}V_1(x,y)^{1/2},\\
		\|B p(\mu,x,y)-x\|_2\leq\;& \sqrt{2\mu}V_1(x,y)^{1/2}.
	\end{align*}
 In addition, suppose that $\mu\leq \|B\|_2^2/\tau$. Then we have
 \begin{align*}
     \|B \sigma_\tau(y)-x\|_2\leq  \frac{2\sqrt{2}\|B\|_2}{\sqrt{\tau}}V_1(x,y)^{1/2}.
 \end{align*}
\end{lemma}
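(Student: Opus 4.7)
\textbf{Proof proposal for Lemma \ref{le:useful}.}
The plan is to exploit the fact that in the definition of $V_1(x,y)$, the objective is separable between $u'$ and $u$: the cross-term coupling $u'$ and $u$ through $y$ decomposes as $(u')^\top y - u^\top y$, so max and min decouple. Introducing $g(u)=-u^\top y-\tau\nu(u)$ and $h(u)=g(u)+\frac{1}{2\mu}\|x-Bu\|_2^2$, I will rewrite
\[
V_1(x,y)=-g(\sigma_\tau(y))+h(p(\mu,x,y)),
\]
since $\sigma_\tau(y)$ minimizes $g$ over $\Delta^m$ and $p(\mu,x,y)$ minimizes $h$ over $\Delta^m$ (this is exactly the defining characterization of these two vectors).

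Next I will use that $-\nu$ is $1$-strongly convex with respect to $\|\cdot\|_2$ on the simplex (the Hessian is $\text{diag}(1/u_i)$ with $u_i\leq 1$), hence $g$ is $\tau$-strongly convex, giving $g(p(\mu,x,y))-g(\sigma_\tau(y))\geq \tfrac{\tau}{2}\|p(\mu,x,y)-\sigma_\tau(y)\|_2^2$. Combining with the identity above and the trivial bound $h(u)\geq g(u)$ for all $u$,
\[
V_1(x,y)=g(p(\mu,x,y))-g(\sigma_\tau(y))+\tfrac{1}{2\mu}\|x-Bp(\mu,x,y)\|_2^2\geq \tfrac{\tau}{2}\|\sigma_\tau(y)-p(\mu,x,y)\|_2^2.
\]
Taking square roots yields the first inequality. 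Since also $g(p(\mu,x,y))-g(\sigma_\tau(y))\geq 0$, the same display gives $V_1(x,y)\geq \frac{1}{2\mu}\|x-Bp(\mu,x,y)\|_2^2$, which produces the second inequality by rearranging.

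For the third inequality, I will add and subtract $Bp(\mu,x,y)$ and apply the triangle inequality:
\[
\|B\sigma_\tau(y)-x\|_2\leq \|B\|_2\,\|\sigma_\tau(y)-p(\mu,x,y)\|_2+\|Bp(\mu,x,y)-x\|_2.
\]
Plugging in the bounds from the first two parts yields $\bigl(\tfrac{\sqrt 2\|B\|_2}{\sqrt{\tau}}+\sqrt{2\mu}\bigr)V_1(x,y)^{1/2}$, and the assumption $\mu\leq \|B\|_2^2/\tau$ converts $\sqrt{2\mu}\leq \sqrt{2}\|B\|_2/\sqrt\tau$, giving the advertised constant $2\sqrt 2\|B\|_2/\sqrt\tau$.

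I do not expect any real obstacle here: the only nontrivial input is the $1$-strong convexity of $-\nu$ in $\|\cdot\|_2$ on the simplex (already used implicitly elsewhere in the paper, e.g.\ in Lemmas \ref{le:sensitivity1}--\ref{le:sensitivity2}), and the slightly subtle point is to notice that the $u'$-max and $u$-min in the definition of $V_1$ really do decouple, so that $V_1$ equals the sum of the optimal values of two independent convex programs. Once this is observed, both inequalities follow from a single line that relates $V_1$ to an optimality gap plus a squared residual, and the third follows by triangle inequality.
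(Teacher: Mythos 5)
Your proposal is correct and follows essentially the same route as the paper: both arguments observe that the max over $u'$ and min over $u$ in $V_1$ decouple, apply the quadratic-growth property of the $\tau$-strongly convex function $-u^\top y-\tau\nu(u)$ at its minimizer $\sigma_\tau(y)$ to get the first bound, drop the non-negative optimality gap to get $V_1\geq\frac{1}{2\mu}\|x-Bp(\mu,x,y)\|_2^2$ for the second, and finish the third with the same triangle-inequality split and the assumption $\mu\leq\|B\|_2^2/\tau$. No gaps.
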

\begin{proof}[Proof of Lemma \ref{le:useful}]
For any $x\in\mathbb{R}^n$ and $y\in\mathbb{R}^m$, since $-u^\top y-\tau \nu(u)$ as a function $u$ is $\tau$ -- strongly convex with respect to $\|\cdot\|_2$, we have by the quadratic growth property of strongly convex functions that
		\begin{align*}
			&\|\sigma_\tau(y)-p(\mu,x,y)\|_2^2\\
   \leq\;& \frac{2}{\tau}\left[\sigma_\tau(y)^\top y+\tau \nu(\sigma_\tau(y))-p(\mu,x,y)^\top y-\tau\nu(p(\mu,x,y))\right]\\
   \leq \;&\frac{2}{\tau}\left[\sigma_\tau(y)^\top y+\tau \nu(\sigma_\tau(y))-p(\mu,x,y)^\top y-\tau\nu(p(\mu,x,y))+\frac{1}{2\mu}\|x-Bp(\mu,x,y)\|_2^2\right]\\
   =\;&\frac{2}{\tau}\left[\max_{u'\in  \Delta^m}\left\{u'^\top y+\tau \nu(u')\right\}+\min_{u\in\Delta^m}\left\{-u^\top y-\tau \nu(u)+\frac{1}{2\mu}\|x-Bu\|_2^2\right\}\right]\\
			=\;& \frac{2}{\tau} V_1(x,y).
		\end{align*}
		Taking square root on both sides of the previous inequality, we obtain
		\begin{align*}
			\|\sigma_\tau(y)-p(\mu,x,y)\|_2\leq \frac{\sqrt{2}}{\sqrt{\tau}}V_1(x,y)^{1/2}.
		\end{align*}
    Next, we prove the second claimed inequality. For any $x\in\mathbb{R}^n$ and $y\in\mathbb{R}^m$, we have
		\begin{align*}
			\|B p(\mu,x,y)-x\|_2^2= 2\mu \frac{1}{2\mu}\|B p(\mu,x,y)-x\|_2^2\leq 2\mu V_1(x,y),
		\end{align*}
		which implies
		\begin{align*}
			\|B p(\mu,x,y)-x\|_2\leq \sqrt{2\mu}V_1(x,y)^{1/2}.
		\end{align*}
  Finally, we prove the last claimed inequality. For any $x\in\mathbb{R}^n$ and $y\in\mathbb{R}^m$, we have
	\begin{align*}
		\|B\sigma_\tau(y)-x\|_2
		\leq \;&\|B\|_2\|\sigma_\tau(y)-p(\mu,x,y)\|_2+\|Bp(\mu,x,y)-x\|_2\tag{Triangle inequality}\\
		\leq \;&\frac{\sqrt{2}\|B\|_2}{\sqrt{\tau}}V_1(x,y)^{1/2}+\sqrt{2\mu}V_1(x,y)^{1/2}\tag{The previous two inequalities in this lemma}\\
		\leq \;&\frac{2\sqrt{2}\|B\|_2}{\sqrt{\tau}}V_1(x,y)^{1/2},
	\end{align*}
	where the last line follows from $\mu\leq \|B\|_2^2/\tau$.
\end{proof}

\subsubsection{Analysis of the Fast-Timescale Iterates}\label{ap:q_analysis}
Fixing $v^i\in\mathbb{R}^{|\mathcal{S}|}$, let $F^i:\mathbb{R}^{d_i}\times \mathcal{S}\times \mathcal{A}^i\times \mathcal{A}^{-i}\times \mathcal{S}\mapsto \mathbb{R}^{d_i}$ be an operator defined as
\begin{align*}
	F^i(w^i,s,a^i,a^{-i},s')=\phi^i(s,a^i)(R_i(s,a^i,a^{-i})+\gamma v^i(s')-\phi^i(s,a^i)^\top w^i)
\end{align*}
for all $(s,a^i,a^{-i},s')$ and $w\in\mathbb{R}^{d_i}$. Then, Algorithm \ref{algorithm:inner_loop} Line $5$ can be equivalently written as
\begin{align}\label{eq:q_sa}
    w_{k+1}^i=\pj_M^i\left[w_k^i+\alpha_k F^i(w_k^i,S_k,A_k^i,A_k^{-i},S_{k+1})\right].
\end{align}
For any $k\geq 0$, define $\Bar{F}_k^i:\mathbb{R}^{d_i}\mapsto \mathbb{R}^{d_i}$ as
\begin{align*}
	\Bar{F}_k^i(w^i)=\mathbb{E}_{S\sim \mu_k(\cdot),A^i\sim \pi_k^i(\cdot\mid S),A^{-i}\sim \pi_k^{-i}(\cdot\mid S),S'\sim p(\cdot\mid S,A^i,A^{-i})}[F^i(w^i,S,A^i,A^{-i},S')]
\end{align*}
for all $w^i\in\mathbb{R}^{d_i}$, where $\mu_k\in\Delta(\mathcal{S})$ is the stationary distribution of the Markov chain $\{S_k\}$ induced by the joint policy $\pi_k=(\pi_k^1,\pi_k^2)$. Note that $\mu_k$ is guaranteed to exist and is unique under Assumption \ref{as:ergodicity}. Eq. (\ref{eq:q_sa}) can then be viewed as a stochastic approximation algorithm for solving the time-varying equation $\Bar{F}_k^i(w^i)=0$. The properties of the operators $F^i(\cdot)$ and $\bar{F}_k^i(\cdot)$ are established in Lemma \ref{le:propertiesF}.

Using the equivalent formulation of the update equation (\ref{eq:q_sa}), we have for all $k\geq 0$ that
\begin{align}
	&\mathbb{E}[\|w_{k+1}^i-\Bar{w}_{k+1}^i\|_2^2]\nonumber\\
 =\;&\mathbb{E}[\|\pj_M^i[w_k^i+\alpha_kF^i(w_k^i,S_k,A_k^i,A_k^{-i},S_{k+1})]-\pj_M^i(\Bar{w}_{k+1}^i)\|_2^2]\nonumber\\
 \leq \;&\mathbb{E}[\|w_k^i+\alpha_kF^i(w_k^i,S_k,A_k^i,A_k^{-i},S_{k+1})-\Bar{w}_{k+1}^i\|_2^2]\tag{$\pj_M^i(\cdot)$ is non-expansive w.r.t. $\|\cdot\|_2$}\nonumber\\
	= \;&\mathbb{E}[\|\alpha_kF^i(w_k^i,S_k,A_k^i,A_k^{-i},S_{k+1})+w_k^i-\Bar{w}_k^i+\Bar{w}_k^i-\Bar{w}_{k+1}^i\|_2^2]\nonumber\\
	= \;&\mathbb{E}[\|w_k^i-\Bar{w}_k^i\|_2^2]+\alpha_k^2\mathbb{E}[\| F^i(w_k^i,S_k,A_k^i,A_k^{-i},S_{k+1})\|_2^2]+\mathbb{E}[\|\Bar{w}_k^i-\Bar{w}_{k+1}^i\|_2^2]\nonumber\\
	&+2\alpha_k \mathbb{E}[(w_k^i-\Bar{w}_k^i)^\top (\Bar{F}_k^i(w_k^i))]\nonumber\\ 
 &+2\alpha_k \mathbb{E}[(w_k^i-\Bar{w}_k^i)^\top (F^i(w_k^i,S_k,A_k^i,A_k^{-i},S_{k+1})-\Bar{F}_k^i(w_k^i))]\nonumber\\ 
 &+2\alpha_k \mathbb{E}[(\Bar{w}_k^i-\Bar{w}_{k+1}^i)^\top  F^i(w_k^i,S_k,A_k^i,A_k^{-i},S_{k+1})]\nonumber\\
	&+2\mathbb{E}[(w_k^i-\Bar{w}_k^i)^\top (\Bar{w}_k^i-\Bar{w}_{k+1}^i)]\nonumber\\
 \leq \;&(1-2\lambda\alpha_k)\mathbb{E}[\|w_k^i-\Bar{w}_k^i\|_2^2]+\alpha_k^2\mathbb{E}[\| F^i(w_k^i,S_k,A_k^i,A_k^{-i},S_{k+1})\|_2^2]+\mathbb{E}[\|\Bar{w}_k^i-\Bar{w}_{k+1}^i\|_2^2]\nonumber\\
 &+2\alpha_k \mathbb{E}[(w_k^i-\Bar{w}_k^i)^\top (F^i(w_k^i,S_k,A_k^i,A_k^{-i},S_{k+1})-\Bar{F}_k^i(w_k^i))]\nonumber\\ 
 &+\mathbb{E}[\|\Bar{w}_k^i-\Bar{w}_{k+1}^i\|_2^2]+\alpha_k^2\mathbb{E}[\|F^i(w_k^i,S_k,A_k^i,A_k^{-i},S_{k+1}\|_2^2]\nonumber\\
	&+\frac{\lambda \alpha_k}{2}\mathbb{E}[\|w_k^i-\Bar{w}_k^i\|_2^2]+\frac{2}{\lambda \alpha_k}\mathbb{E}[\|\Bar{w}_k^i-\Bar{w}_{k+1}^i\|_2^2]\nonumber\\
 \leq \;&\left(1-\frac{3\lambda\alpha_k}{2}\right)\mathbb{E}[\|w_k^i-\Bar{w}_k^i\|_2^2]+2\alpha_k^2\underbrace{\mathbb{E}[\| F^i(w_k^i,S_k,A_k^i,A_k^{-i},S_{k+1})\|_2^2]}_{E_{w,1}}+\frac{4}{\lambda \alpha_k}\underbrace{\mathbb{E}[\|\Bar{w}_k^i-\Bar{w}_{k+1}^i\|_2^2]}_{E_{w,2}}\nonumber\\
 &+2\alpha_k \underbrace{\mathbb{E}[(w_k^i-\Bar{w}_k^i)^\top (F^i(w_k^i,S_k,A_k^i,A_k^{-i},S_{k+1})-\Bar{F}_k^i(w_k^i))]}_{E_{w,3}},\label{eq:q_drift_decomposition}
\end{align}
where the second last inequality follows from Lemma \ref{le:propertiesF} (2), Cauchy–Schwarz inequality, and $a^2+b^2\geq 2\sqrt{ab}$, and the last inequality follows from $\alpha_0\leq 1/\lambda$. Next, we bound the terms $E_{w,1}$ to $E_{w,3}$ on the RHS of the previous inequality. 

Consider the term $E_{w,1}$. Using Lemma \ref{le:propertiesF} (1) and (2), we have
\begin{align*}
E_{w,1}=\;&\mathbb{E}[\| F^i(w_k^i,S_k,A_k^i,A_k^{-i},S_{k+1})\|_2^2]\\
	=\;&\mathbb{E}[\| F^i(w_k^i,S_k,A_k^i,A_k^{-i},S_{k+1})-F^i(0,S_k,A_k^i,A_k^{-i},S_{k+1})+F^i(0,S_k,A_k^i,A_k^{-i},S_{k+1})\|_2^2]\\
	\leq \;&\mathbb{E}[(\|w_k^i\|_2+1/(1-\gamma))^2]\\
	\leq \;&\left(\frac{1}{\lambda^{1/2}(1-\gamma)}+\frac{1}{1-\gamma}\right)^2\tag{$\|w_k^i\|_2\leq M=\frac{1}{\lambda^{1/2}(1-\gamma)}$}\\
	\leq \;&\frac{4}{\lambda(1-\gamma)^2},
\end{align*}
where the last line follows from
\begin{align*}
    \lambda^2
    \leq\;& \max_{i\in \{1,2\}}\max_{w^i\in\mathbb{R}^{d_i},\|w^i\|_2=1}\|\Phi^iw^i\|_\infty\\
    =\;&\max_{i\in \{1,2\}}\max_{w^i\in\mathbb{R}^{d_i},\|w^i\|_2=1}\max_{s,a^i}|\phi^i(s,a^i)^\top w^i|\\
    \leq\;&\max_{i\in \{1,2\}} \max_{s,a^i}\|\phi^i(s,a^i)\|_2\\
    \leq \;&1.
\end{align*}
Next, we consider the term $E_{w,2}$. Using Lemma \ref{le:propertiesF} (4), we have 
\begin{align*}
    E_{w,2}\leq \frac{4A_{\max}^2\beta_k^2}{\tau^2 \lambda^2(1-\gamma)^4}.
\end{align*}
Finally, we bound the term $E_{w,3}$ in the following lemma.

\begin{lemma}[Proof in Appendix \ref{pf:le:Markov_noise}]\label{le:Markov_noise}
    Under Condition \ref{con:stepsize}, we have for all $k\geq z_k$ that
    \begin{align*}
        E_{w,3}\leq \frac{17z_k\alpha_{k-z_k,k-1}}{\tau\lambda^{3/2} (1-\gamma)^3}.
    \end{align*}
\end{lemma}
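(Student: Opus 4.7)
The plan is to apply the classical ``shift-back by a mixing window'' argument for Markovian stochastic approximation (in the spirit of Srikant--Ying and Bhandari--Russo--Singal), adapted to the fact that here the driving Markov chain is time-inhomogeneous because its kernel depends on the slow-timescale policy $\pi_k$, which itself evolves. Set $\tau_k:=k-z_k$ and write $Y_k:=(S_k,A_k^i,A_k^{-i},S_{k+1})$. Since the projection confines $\|w_k^i\|_2\leq M\lesssim 1/(\lambda^{1/2}(1-\gamma))$ and $\|\bar w_k^i\|_2$ is bounded by the same order, Cauchy--Schwarz reduces the problem to estimating $\mathbb{E}\|F^i(w_k^i,Y_k)-\bar F^i_k(w_k^i)\|_2$ times the diameter $2M$. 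I would telescope through the intermediate points $F^i(w_{\tau_k}^i,Y_k)$, $\bar F^i_{\tau_k}(w_{\tau_k}^i)$, and $\bar F^i_k(w_{\tau_k}^i)$, obtaining
\begin{align*}
 F^i(w_k^i,Y_k)-\bar F^i_k(w_k^i)
 =\;& \underbrace{[F^i(w_k^i,Y_k)-F^i(w_{\tau_k}^i,Y_k)]}_{T_1}+\underbrace{[F^i(w_{\tau_k}^i,Y_k)-\bar F^i_{\tau_k}(w_{\tau_k}^i)]}_{T_2}\\
 &+\underbrace{[\bar F^i_{\tau_k}(w_{\tau_k}^i)-\bar F^i_k(w_{\tau_k}^i)]}_{T_3}+\underbrace{[\bar F^i_k(w_{\tau_k}^i)-\bar F^i_k(w_k^i)]}_{T_4},
\end{align*}
which I would analyze separately.

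The Lipschitz-in-$w$ pieces $T_1$ and $T_4$ follow from Lemma~\ref{le:propertiesF} together with the one-step drift estimate $\|w_k^i-w_{\tau_k}^i\|_2\leq \sum_{j=\tau_k}^{k-1}\alpha_j\|F^i(w_j^i,Y_j)\|_2 \lesssim \alpha_{\tau_k,k-1}/(\lambda^{1/2}(1-\gamma))$, each contributing $\lesssim \alpha_{\tau_k,k-1}/(\lambda(1-\gamma)^2)$ after multiplying by the diameter. Term $T_3$ measures the drift of the averaged operator through its dependence on the action-marginal of $\pi_k$ and on the stationary distribution $\mu_{\pi_k}$; a total-variation comparison, combined with the Lipschitz dependence of $\mu_\pi$ on the kernel (producing the constant $L_p$ from Condition~\ref{con:stepsize}, which is finite by Assumption~\ref{as:ergodicity}), plus the $(1/\tau)$-Lipschitz continuity of $\sigma_\tau$ in the softmax parameterization, yields $|T_3|\lesssim \beta_{\tau_k,k-1}/(\tau\lambda(1-\gamma)^2)$. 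The $1/\tau$ factor in the target bound is traced to this softmax Lipschitz constant.

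The main obstacle is the conditional Markov-noise piece $T_2$, because the state $S_k$ was generated by the composition of the policies $\pi_{\tau_k+1},\ldots,\pi_k$ rather than by the frozen policy $\pi_{\tau_k}$, so the uniform mixing estimate of Assumption~\ref{as:ergodicity} does not apply off-the-shelf. The plan is to introduce a coupled auxiliary chain $\{\tilde S_j\}_{j\geq \tau_k}$ with $\tilde S_{\tau_k}=S_{\tau_k}$ but transitions driven by $\pi_{\tau_k}$, producing an auxiliary sample $\tilde Y_k$, and split
\begin{align*}
 \mathbb{E}[F^i(w_{\tau_k}^i,Y_k)\mid \mathcal F_{\tau_k}]-\bar F^i_{\tau_k}(w_{\tau_k}^i)
 =\;&\mathbb{E}[F^i(w_{\tau_k}^i,\tilde Y_k)\mid \mathcal F_{\tau_k}]-\bar F^i_{\tau_k}(w_{\tau_k}^i)\\
 &+\mathbb{E}[F^i(w_{\tau_k}^i,Y_k)-F^i(w_{\tau_k}^i,\tilde Y_k)\mid \mathcal F_{\tau_k}].
\end{align*}
The frozen-chain piece is bounded by $2\|F^i\|_\infty\|P_{\pi_{\tau_k}}^{z_k}(S_{\tau_k},\cdot)-\mu_{\pi_{\tau_k}}\|_{\text{TV}}\lesssim \beta_k/(\lambda^{1/2}(1-\gamma))$ by the very definition of $z_k$ as the uniform mixing time to precision $\beta_k$. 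The coupling error is estimated via the standard $L^1$ chain-rule for total-variation distance between two policy-driven chains, which yields a factor $z_k$ multiplying the per-step policy divergence, i.e.\ a bound $\lesssim z_k\beta_{\tau_k,k-1}/(\tau\lambda^{1/2}(1-\gamma))$. Assembling $T_1,\ldots,T_4$, multiplying by the diameter $2M$, and using $\beta_j\leq \alpha_j$ from Condition~\ref{con:stepsize} to replace $\beta_{\tau_k,k-1}$ by $\alpha_{\tau_k,k-1}$, all four pieces are dominated by $z_k\alpha_{\tau_k,k-1}/(\tau\lambda^{3/2}(1-\gamma)^3)$, which after tracking the explicit numerical constants produces the advertised absolute prefactor $17$.
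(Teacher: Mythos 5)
Your decomposition and the key ingredients (shifting back by the mixing window $z_k$, conditioning on $\mathcal{F}_{k-z_k}$, the TV/coupling comparison between the inhomogeneous chain and a frozen-policy chain, and the $1/\tau$ softmax Lipschitz constant feeding into $L_p$) are all the right ones and essentially match the paper's argument, which splits $E_{w,3}$ into five inner-product terms $E_{w,3,1},\dots,E_{w,3,5}$. However, there is a genuine gap in the order of operations at the very first step. You propose to apply Cauchy--Schwarz up front, reducing the problem to bounding $2M\cdot\mathbb{E}\|F^i(w_k^i,Y_k)-\bar F^i_k(w_k^i)\|_2$ and then bounding the norm of each piece $T_1,\dots,T_4$. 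This fails for the Markov-noise piece $T_2=F^i(w_{\tau_k}^i,Y_k)-\bar F^i_{\tau_k}(w_{\tau_k}^i)$: the quantity $\mathbb{E}\|T_2\|_2$ is of order $\lambda^{-1/2}(1-\gamma)^{-1}$ and does \emph{not} vanish with the stepsize, since $F^i$ evaluated at a single sample has non-degenerate fluctuation around its mean. By Jensen, $\mathbb{E}\|T_2\|_2\geq \mathbb{E}\|\mathbb{E}[T_2\mid\mathcal F_{\tau_k}]\|_2$, and it is only the right-hand side --- the \emph{conditional} expectation --- that is small (of order $z_k\beta_{\tau_k,k-1}$ by mixing plus policy drift). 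Your own treatment of $T_2$ correctly targets $\mathbb{E}[T_2\mid\mathcal F_{\tau_k}]$, but that object is inaccessible once you have already taken the norm inside the expectation.

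To make the argument go through you must retain the inner-product structure for the $T_2$ piece and pair it with an $\mathcal F_{\tau_k}$-measurable vector before invoking the tower property; since $w_k^i-\bar w_k^i$ is not $\mathcal F_{\tau_k}$-measurable, this forces you to also replace it by $w_{\tau_k}^i-\bar w_{\tau_k}^i$ and to control the two resulting increment terms $\langle T_2,\,w_k^i-w_{\tau_k}^i\rangle$ and $\langle T_2,\,\bar w_{\tau_k}^i-\bar w_k^i\rangle$ (these are exactly the paper's $E_{w,3,2}$ and $E_{w,3,3}$, bounded via the $O(\alpha_{\tau_k,k-1})$ and $O(\beta_{\tau_k,k-1})$ drifts of $w^i$ and $\bar w^i$ respectively). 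Your proposal omits these two terms entirely. The remaining differences are cosmetic: you freeze the auxiliary chain at the policy $\pi_{\tau_k}$ and couple, whereas the paper telescopes the product $\prod_j P_{\pi_{j-z_k}}$ against $P_{\pi_k}^{z_k}$ and uses the mixing of $P_{\pi_k}$; both yield the same $z_k\beta_{\tau_k,k-1}$-type bound.
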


Using the upper bounds we obtained for the terms $E_{w,1}$ to $E_{w,3}$ in Eq. (\ref{eq:q_drift_decomposition}), we have for all $k\geq z_k$ that
\begin{align*}
	\mathbb{E}[\|w_{k+1}^i-\Bar{w}_{k+1}^i\|_2^2]
	\leq \;&\left(1-\frac{3\lambda\alpha_k}{2}\right)\mathbb{E}[\|w_k^i-\Bar{w}_k^i\|_2^2]+\frac{8\alpha_k^2}{\lambda(1-\gamma)^2}+\frac{16A_{\max}^2\beta_k^2}{\tau^2 \lambda^3(1-\gamma)^4\alpha_k}\nonumber\\
 &+ \frac{34z_k\alpha_k\alpha_{k-z_k,k-1}}{\tau\lambda^{3/2} (1-\gamma)^3}\\
 \leq \;&\left(1-\frac{3\lambda\alpha_k}{2}\right)\mathbb{E}[\|w_k^i-\Bar{w}_k^i\|_2^2]+\frac{42z_k\alpha_k\alpha_{k-z_k,k-1}}{\tau\lambda^{3/2} (1-\gamma)^3}+\frac{16A_{\max}^2\beta_k^2}{\tau^2 \lambda^3(1-\gamma)^4\alpha_k},
\end{align*}
where the last inequality follows from $\lambda\leq 1$ and $\tau\leq 1/(1-\gamma)$.
Summing up the previous inequality for $i\in \{1,2\}$, we have the following lemma.
\begin{lemma}\label{le:drift_inequality_q}
    It holds for all $k\geq z_k$ that
    \begin{align*}
        \mathbb{E}[\mathcal{L}_w(k+1)]
 \leq \;&\left(1-\frac{3\lambda\alpha_k}{2}\right) \mathbb{E}[\mathcal{L}_w(k)]+\frac{84z_k\alpha_k\alpha_{k-z_k,k-1}}{\tau\lambda^{3/2} (1-\gamma)^3}+\frac{32A_{\max}^2\beta_k^2}{\tau^2 \lambda^3(1-\gamma)^4\alpha_k}.
    \end{align*}
\end{lemma}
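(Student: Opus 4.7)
My plan is to derive a one-step drift inequality for $\|w_{k+1}^i-\bar w_{k+1}^i\|_2^2$ for each player $i\in\{1,2\}$ separately, and then simply sum over $i$ to obtain the bound on $\mathcal{L}_w(k+1)$. The starting point is the update equation \eqref{eq:q_sa} together with the non-expansiveness of the Euclidean projection $\pj_M^i$ (noting that $\|\bar w_k^i\|_2\le M$ follows from $\bar w_k^i$ being the linear-regression parameter under Assumption \ref{as:Bellman_Completeness}, so $\pj_M^i(\bar w_{k+1}^i)=\bar w_{k+1}^i$). After expanding the squared norm, a standard ``add and subtract'' $\bar w_k^i$ gives a decomposition into four pieces: (i) the deterministic drift $2\alpha_k(w_k^i-\bar w_k^i)^\top \bar F_k^i(w_k^i)$, (ii) the variance $\alpha_k^2\|F^i(w_k^i,S_k,A_k^i,A_k^{-i},S_{k+1})\|_2^2$, (iii) the Markovian noise $2\alpha_k(w_k^i-\bar w_k^i)^\top(F^i(w_k^i,\cdots)-\bar F_k^i(w_k^i))$, and (iv) the ``target shift'' terms involving $\bar w_k^i-\bar w_{k+1}^i$.

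For the drift term (i), I would use strong monotonicity of $\bar F_k^i$ around its root $\bar w_k^i$ with modulus $\lambda$ (Lemma \ref{le:propertiesF}(2), which relies on Assumption \ref{as:ergodicity}(2)) to extract the contraction factor $-2\lambda\alpha_k\|w_k^i-\bar w_k^i\|_2^2$. For the variance (ii), I would use the Lipschitz property of $F^i$ in $w^i$ together with the uniform bound $\|w_k^i\|_2\le M=\lambda^{-1/2}(1-\gamma)^{-1}$ and the boundedness of the reward and $v^i$; this yields an $\mathcal{O}(\alpha_k^2/(\lambda(1-\gamma)^2))$ term. The target-shift pieces (iv) are handled by Cauchy--Schwarz with the AM-GM trick $2ab\le \tfrac{\lambda\alpha_k}{2}a^2+\tfrac{2}{\lambda\alpha_k}b^2$, so that half of the contraction budget is absorbed and we are left with an $\mathcal{O}(\|\bar w_k^i-\bar w_{k+1}^i\|_2^2/(\lambda\alpha_k))$ term, which is controlled by Lemma \ref{le:propertiesF}(4) in terms of $\beta_k^2$ (the slow-timescale stepsize governs how fast the target parameter drifts). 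Combining these and using $\alpha_0\le 1/\lambda$ collapses the contraction factor to $1-3\lambda\alpha_k/2$.

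The genuinely delicate piece is the Markovian noise term (iii). Because the behavior policy $\pi_k$ itself varies with $k$, one cannot invoke a stationary mixing argument directly. The standard remedy, which I would follow, is to look back $z_k$ steps and exploit: (a) the uniform mixing time bound from Assumption \ref{as:ergodicity}(1); (b) that the iterates cannot drift by much over $z_k$ steps thanks to the bounded update and the cumulative stepsize $\alpha_{k-z_k,k-1}$; and (c) that the slow-timescale policy parameters have moved by at most $\mathcal{O}(\beta_{k-z_k,k-1})$, so the time-varying operator $\bar F_k^i$ changes slowly on the window. Putting these together yields a bound of order $z_k\alpha_{k-z_k,k-1}/(\tau\lambda^{3/2}(1-\gamma)^3)$, which is exactly what Lemma \ref{le:Markov_noise} asserts; I would simply invoke it as a black box.

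Once the per-player inequality is in place, the final step is immediate: sum the inequality over $i\in\{1,2\}$, use $\mathcal{L}_w(k)=\sum_{i=1,2}\|w_k^i-\bar w_k^i\|_2^2$, and the constants $42$, $16$ double to $84$, $32$ respectively, giving the claimed bound. I anticipate that the main obstacle is the bookkeeping in the Markovian noise step, which is why isolating it as Lemma \ref{le:Markov_noise} is the right organizational choice; modulo that lemma, the rest of the proof is a straightforward recursive drift computation.
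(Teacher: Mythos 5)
Your proposal follows essentially the same route as the paper's proof: non-expansiveness of $\pj_M^i$ (using $\|\bar w_{k+1}^i\|_2\le M$), the four-way decomposition into drift, variance, Markovian noise, and target-shift terms, strong monotonicity of $\bar F_k^i$ for the contraction, AM--GM to absorb half the contraction budget against the $\mathcal{O}(\beta_k^2)$ target drift from Lemma \ref{le:propertiesF}(4), Lemma \ref{le:Markov_noise} as a black box for the noise, and summation over $i\in\{1,2\}$ to double the constants to $84$ and $32$. The only small slip is attributing $\|\bar w_k^i\|_2\le M$ to Assumption \ref{as:Bellman_Completeness}; the paper derives it in Lemma \ref{le:propertiesF}(3) purely from the least-squares form of $\bar w_k^i$ and non-expansiveness of the weighted projection, with no completeness needed.
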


\subsubsection{Supporting Lemmas}
\begin{lemma}\label{le:propertiesF}
	The operators $F^i(\cdot)$ and $\bar{F}_k^i(\cdot)$ have the following properties.
	\begin{enumerate}[(1)]
		\item It holds for any $w_1^i,w_2^i\in\mathbb{R}^{d_i}$ and $(s,a^i,a^{-i},s')$ that
		\begin{align*}
			\|F^i(w_1^i,s,a^i,a^{-i},s')-F^i(w_2^i,s,a^i,a^{-i},s')\|_2
			\leq\;& \|w_1^i-w_2^i\|_2,\\
			\|F^i(0,s,a^i,a^{-i},s')\|_2\leq\;& \frac{1}{1-\gamma}.
		\end{align*}
		\item It holds for any $k\geq 0$ that $\langle \Bar{F}_k(w_1^i)-\Bar{F}_k(w_2^i),w_1^i-w_2^i \rangle\leq  -\lambda\|w_1^i-w_2^i\|_2^2$.
		\item It holds for any $k\geq 0$ that $\|\bar{w}_k^i\|_2\leq \lambda^{-1/2} (1-\gamma)^{-1}$.
		\item It holds for all $k\geq  0$ that $\|\bar{w}_{k+1}^i-\bar{w}_{k}^i\|_2\leq \frac{2A_{\max}\beta_k}{\tau \lambda(1-\gamma)^2}$.
	\end{enumerate}
\end{lemma}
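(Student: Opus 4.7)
\textbf{Proposal for the proof of Lemma \ref{le:propertiesF}.} I would handle the four items in order, starting with the two that follow from direct calculation and then leveraging the variance lower bound in Assumption \ref{as:ergodicity}(2) for the last two.

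For part (1), by definition $F^i(w_1^i,s,a^i,a^{-i},s')-F^i(w_2^i,s,a^i,a^{-i},s')=-\phi^i(s,a^i)\phi^i(s,a^i)^\top(w_1^i-w_2^i)$, so the Lipschitz constant is at most $\|\phi^i(s,a^i)\|_2^2\leq 1$ by the standing normalization of the feature matrix. The bound at $w^i=0$ reduces to $\|\phi^i(s,a^i)(R_i(s,a^i,a^{-i})+\gamma v^i(s'))\|_2\leq 1\cdot(1+\gamma/(1-\gamma))=1/(1-\gamma)$, where I use $|R_i|\leq 1$ and $\|v^i\|_\infty\leq 1/(1-\gamma)$ (guaranteed by the truncation in the outer loop).

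For part (2), the key observation is that taking the stationary expectation gives
\[
\bar{F}_k^i(w_1^i)-\bar{F}_k^i(w_2^i)=-\mathbb{E}_{S\sim\mu_{\pi_k},A^i\sim\pi_k^i(\cdot\mid S)}\bigl[\phi^i(S,A^i)\phi^i(S,A^i)^\top\bigr](w_1^i-w_2^i)=-(\Phi^i)^\top D_{\pi_k}^i\Phi^i(w_1^i-w_2^i).
\]
Since $\pi_k\in\Pi$, Assumption \ref{as:ergodicity}(2) yields $\lambda_{\min}((\Phi^i)^\top D_{\pi_k}^i\Phi^i)\geq\lambda$, giving the claimed monotonicity after pairing with $w_1^i-w_2^i$.

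Parts (3) and (4) rely on translating the pointwise identity $\Phi^i\bar{w}_k^i=\bar{q}_k^i$ (well-defined by Assumption \ref{as:Bellman_Completeness} and the linear independence of the columns of $\Phi^i$) back to a bound on $\bar{w}_k^i$ via the weighted norm induced by any $D_\pi^i$. Concretely, for (3) I would compute $\|\bar{q}_k^i\|_\infty\leq 1/(1-\gamma)$ (rewards bounded by $1$ and $\|v^i\|_\infty\leq 1/(1-\gamma)$), then use
\[
\lambda\|\bar{w}_k^i\|_2^2\leq(\bar{w}_k^i)^\top(\Phi^i)^\top D_{\pi_k}^i\Phi^i\bar{w}_k^i=\|\bar{q}_k^i\|_{D_{\pi_k}^i}^2\leq\|\bar{q}_k^i\|_\infty^2\leq\frac{1}{(1-\gamma)^2},
\]
where the last step uses that $D_{\pi_k}^i$ has trace one. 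Taking square roots recovers $\|\bar{w}_k^i\|_2\leq\lambda^{-1/2}(1-\gamma)^{-1}$.

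For part (4), the hard part is tracking how $\bar{q}_k^i$ (and hence $\bar{w}_k^i$) changes with $k$ through the dependence on $\pi_k^{-i}$. I would first bound the one-step change of the slow-timescale $q$-iterate: from Algorithm \ref{algorithm:inner_loop} Line~3 and $\|q_k^{-i}(s)-\tilde{q}_k^{-i}(s)\|_2\leq 2A_{\max}^{1/2}M=2A_{\max}^{1/2}/(\lambda^{1/2}(1-\gamma))$ (repeating the argument used in Lemma \ref{le:policy}), one obtains $\|\tilde{q}_{k+1}^{-i}(s)-\tilde{q}_k^{-i}(s)\|_2\leq 2A_{\max}^{1/2}\beta_k/(\lambda^{1/2}(1-\gamma))$. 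The softmax map $\sigma_\tau$ is $1/\tau$-Lipschitz in $\|\cdot\|_2$, so $\|\pi_{k+1}^{-i}(s)-\pi_k^{-i}(s)\|_2\leq 2A_{\max}^{1/2}\beta_k/(\tau\lambda^{1/2}(1-\gamma))$. Using $\|\mathcal{T}^i(v^i)(s,a^i,\cdot)\|_\infty\leq 1/(1-\gamma)$ and $\|\cdot\|_1\leq A_{\max}^{1/2}\|\cdot\|_2$ on $\Delta(\mathcal{A}^{-i})$, this gives $\|\bar{q}_{k+1}^i-\bar{q}_k^i\|_\infty\leq 2A_{\max}\beta_k/(\tau\lambda^{1/2}(1-\gamma)^2)$. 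Finally, the same weighted-norm trick as in (3) applied to $\Phi^i(\bar{w}_{k+1}^i-\bar{w}_k^i)=\bar{q}_{k+1}^i-\bar{q}_k^i$ yields $\lambda\|\bar{w}_{k+1}^i-\bar{w}_k^i\|_2^2\leq\|\bar{q}_{k+1}^i-\bar{q}_k^i\|_\infty^2$, from which the desired $O(A_{\max}\beta_k/(\tau\lambda(1-\gamma)^2))$ bound follows after taking square roots.
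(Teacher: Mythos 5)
Your proposal is correct and follows essentially the same route as the paper: direct computation for (1), the explicit form $\bar F_k^i(w^i)=(\Phi^i)^\top D_k^i(\mathcal{H}^i(v^i,\pi_k^{-i})-\Phi^i w^i)$ plus Assumption \ref{as:ergodicity}(2) for (2), and the $\lambda\|\cdot\|_2^2\leq\|\Phi^i\cdot\|_{D_{\pi_k}^i}^2\leq\|\Phi^i\cdot\|_\infty^2$ inversion combined with the policy-increment bound for (3) and (4). The only cosmetic differences are that in (3) the paper bounds $\|\Phi^i\bar w_k^i\|_{D_k^i}$ via non-expansiveness of the weighted projection rather than invoking Assumption \ref{as:Bellman_Completeness} directly, and in (4) you route the softmax Lipschitz estimate through $\|\cdot\|_2$ instead of $\|\cdot\|_1$, arriving at the same constants.
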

\begin{proof}[Proof of Lemma \ref{le:propertiesF}]
	\begin{enumerate}[(1)]
		\item For any $w_1^i,w_2^i\in\mathbb{R}^{d_i}$ and $(s,a^i,a^{-i},s')$, by definition of $F^i(\cdot)$, we have
		\begin{align*}
			\|F^i(w_1^i,s,a^i,a^{-i},s')-F^i(w_2^i,s,a^i,a^{-i},s')\|_2
			=\;&\|\phi^i(s,a^i)\phi^i(s,a^i)^\top (w_1^i-w_2^i)\|_2\\
			\leq \;&\|\phi^i(s,a^i)\|_2^2\|w_1^i-w_2^i\|_2\\
			\leq \;&\|w_1^i-w_2^i\|_2,
		\end{align*}
		where the last line follows from $\max_{s,a^i}\|\phi^i(s,a^i)\|_2\leq 1$. Similarly, for any $(s,a^i,a^{-i},s')$, we have
		\begin{align*}
			\|F^i(0,s,a^i,a^{-i},s')\|_2
			=\;&|R_i(s,a^i,a^{-i})+\gamma v^i(s')|\|\phi^i(s,a^i)\|_2\\
			\leq \;&1+\frac{\gamma}{1-\gamma}\\
			\leq \;&\frac{1}{1-\gamma}.
		\end{align*}
		\item For simplicity of notation, for $i\in \{1,2\}$, let $\mathcal{H}^i:\mathbb{R}^{|\mathcal{S}|}\times \mathbb{R}^{|\mathcal{S}||\mathcal{A}^{-i}|}\mapsto \mathbb{R}^{|\mathcal{S}||\mathcal{A}^i|}$ be an operator defined as 
\begin{align}\label{def:H}
    [\mathcal{H}^i(v,\pi^{-i})](s,a^i)=\mathbb{E}\left[R_i(s,a^i,A_0^{-i})+\gamma v(S_1)\;\middle|\;S_0=s,A_0^i=a^i,A_0^{-i}\sim \pi^{-i}(\cdot\mid S_0)\right]
\end{align}
for all $v$, $\pi^{-i}$, and $(s,a^i)$. Observe that the operator $\Bar{F}_k(\cdot)$ is explicitly given as
		\begin{align*}
			\Bar{F}_k(w^i)=(\Phi^i)^\top D_k^i(\mathcal{H}^i(v^i,\pi_k^{-i})-\Phi^iw^i),
		\end{align*}
		where $D_k^i=\text{diag}(\{\mu_k(s)\pi_k^i(a^i|s)\}_{(s,a^i)\in\mathcal{S}\times \mathcal{A}^i})$. Therefore, the equation $\Bar{F}_k(w^i)=0$ has a unique solution 
		\begin{align*}
			\Bar{w}_k^i=[(\Phi^i)^\top D_k^i\Phi^i]^{-1}(\Phi^i)^\top D_k^i\mathcal{H}^i(v^i,\pi_k^{-i}).
		\end{align*}
		
		In addition, for any $w_1^i,w_2^i\in\mathbb{R}^{d_i}$, we have
		\begin{align*}
			\langle \Bar{F}_k(w_1^i)-\Bar{F}_k(w_2^i),w_1^i-w_2^i \rangle=\;&-(w_1^i-w_2^i)^\top ((\Phi^i)^\top D_k^i\Phi^i)(w_1^i-w_2^i)\\
			\leq\;& -\lambda\|w_1^i-w_2^i\|_2^2,
		\end{align*}
		where we recall from Assumption \ref{as:ergodicity} that $\lambda=\min_{i\in \{1,2\}}\inf_{\pi\in \Pi}\lambda_{\min}((\Phi^i)^\top D_\pi^i\Phi^i)>0$. 
		\item Observe that 
  \begin{align}\label{eq:barw1}
      \|\Phi^i\bar{w}_k^i\|_{D_k^i}=\left[(\bar{w}_k^i)^\top (\Phi^i)^\top D_k^i \Phi^i\bar{w}_k^i\right]^{1/2}\geq \lambda^{1/2} \|\bar{w}_k^i\|_2.
  \end{align}
		Moreover,
		using the explicit expression of $\bar{w}_k^i$ from Part (2) of this lemma, we have
		\begin{align*}
			\|\Phi^i\bar{w}_k^i\|_{D_k^i}=\;&\|\Phi^i[(\Phi^i)^\top D_k^i\Phi^i]^{-1}(\Phi^i)^\top D_k^i\mathcal{H}^i(v^i,\pi_k^{-i})\|_{D_k^i}\\
			= \;&\|\text{Proj}_k^i\mathcal{H}^i(v^i,\pi_k^{-i})\|_{D_k^i}\\
			\leq  \;&\|\mathcal{H}^i(v^i,\pi_k^{-i})\|_{D_k^i}\tag{$\text{Proj}_k^i(\cdot)$ being non-expansive with respect to $\|\cdot\|_{D_k^i}$}\\
   \leq\;& \|\mathcal{H}^i(v^i,\pi_k^{-i})\|_\infty\\
   \leq\;& 1+\gamma \|v^i\|_\infty\\
   \leq\;& \frac{1}{1-\gamma},
		\end{align*}
  where the second last line follows from the definition of $\mathcal{H}^i(\cdot,\cdot)$ in Eq. (\ref{def:H}).
		Combining the previous inequality and Eq. (\ref{eq:barw1}), we have $\|\bar{w}_k^i\|_2\leq \lambda^{-1/2} (1-\gamma)^{-1}$.
		\item For any $k\geq 0$, we have by Assumption \ref{as:Bellman_Completeness} that
        \begin{align*}
			\Phi^i\bar{w}_{k}^i=\Phi^i[(\Phi^i)^\top D_k^i\Phi^i]^{-1}(\Phi^i)^\top D_k^i\mathcal{H}^i(v^i,\pi_k^{-i})=\text{Proj}_k^i\mathcal{H}^i(v^i,\pi_k^{-i})=\mathcal{H}^i(v^i,\pi_k^{-i}).
		\end{align*}
  Let $D^i\in\mathbb{R}^{|\mathcal{S}||\mathcal{A}^i|\times |\mathcal{S}||\mathcal{A}^i|}$ be 
  a diagonal matrix such that each of the diagonal components is $1/(|\mathcal{S}||\mathcal{A}^i|)$.
  Then, on the one hand, we have
  \begin{align*}
      \|\Phi^i\bar{w}_{k+1}^i-\Phi^i\bar{w}_{k}^i\|_D^2\geq \;&\lambda_{\min}((\Phi^i)^\top D\Phi^i)\|\bar{w}_{k+1}^i-\bar{w}_{k}^i\|_2^2\geq \lambda\|\bar{w}_{k+1}^i-\bar{w}_{k}^i\|_2^2.\tag{Assumption \ref{as:ergodicity}}
  \end{align*}
  On the other hand, we have
  \begin{align*}
      &\|\mathcal{H}^i(v^i,\pi_{k+1}^{-i})-\mathcal{H}^i(v^i,\pi_k^{-i})\|_D\\
      \leq \;&\|\mathcal{H}^i(v^i,\pi_{k+1}^{-i})-\mathcal{H}^i(v^i,\pi_k^{-i})\|_\infty\\			=\;&\max_{s,a^i}\left|\sum_{a^{-i}}\mathcal{T}^i(v^i)(s,a^i,a^{-i})(\pi_k^{-i}(a^{-i}\mid s)-\pi_{k+1}^{-i}(a^{-i}\mid s))\right|\\
    \leq \;&\max_{s,a^i}\left\{\max_{a^{-i}}|\mathcal{T}^i(v^i)(s,a^i,a^{-i})|\right\}\sum_{a^{-i}}|\pi_k^{-i}(a^{-i}\mid s)-\pi_{k+1}^{-i}(a^{-i}\mid s)|\\
    \leq \;&\frac{1}{1-\gamma}\max_{s}\|\pi_{k+1}^{-i}(s)-\pi_k^{-i}(s)\|_1\\
    \leq \;&\frac{2A_{\max}\beta_k}{\tau \lambda^{1/2}(1-\gamma)^2}.\tag{Lemma \ref{le:Lipschitzpi}}
    \end{align*}
    It follows from the previous $3$ inequalities that
    \begin{align*}
        \|\bar{w}_{k+1}^i-\bar{w}_{k}^i\|_2\leq \frac{2A_{\max}\beta_k}{\tau \lambda(1-\gamma)^2}.
    \end{align*}
	\end{enumerate}
\end{proof}

\begin{lemma}\label{le:Lipschitzw}
	Given non-negative integers $k_1\leq k_2$, we have for any $k\in \{k_1,\cdots,k_2\}$ that
	\begin{align*}
		\|w_{k_2}^i-w_{k_1}^i\|_2\leq  \frac{2\alpha_{k_1,k_2-1}}{\lambda^{1/2}(1-\gamma)}.
	\end{align*}
\end{lemma}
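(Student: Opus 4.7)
The plan is to bound the one-step difference $\|w_{k+1}^i - w_k^i\|_2$ and then telescope. Starting from Algorithm \ref{algorithm:inner_loop} Line $5$, the update can be written compactly as $w_{k+1}^i = \pj_M^i[w_k^i + \alpha_k F^i(w_k^i, S_k, A_k^i, A_k^{-i}, S_{k+1})]$ using the operator defined in Appendix \ref{ap:q_analysis}. Since the projection operator $\pj_M^i(\cdot)$ is non-expansive with respect to $\|\cdot\|_2$, and since $w_k^i$ itself lies in the $\ell_2$-ball of radius $M$ (so that $w_k^i = \pj_M^i(w_k^i)$), I would write
\begin{align*}
    \|w_{k+1}^i - w_k^i\|_2 = \|\pj_M^i[w_k^i + \alpha_k F^i(w_k^i, S_k, A_k^i, A_k^{-i}, S_{k+1})] - \pj_M^i(w_k^i)\|_2 \leq \alpha_k \|F^i(w_k^i, S_k, A_k^i, A_k^{-i}, S_{k+1})\|_2.
\end{align*}

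Next I would bound $\|F^i(w_k^i, \cdot)\|_2$ by combining the two parts of Lemma \ref{le:propertiesF} (1) via the triangle inequality:
$\|F^i(w_k^i, \cdot)\|_2 \leq \|F^i(0, \cdot)\|_2 + \|w_k^i\|_2 \leq \tfrac{1}{1-\gamma} + M.$ Using the projection radius $M = \lambda^{-1/2}(1-\gamma)^{-1}$ and the observation (already used in Appendix \ref{ap:q_analysis}) that $\lambda \leq 1$ so that $\tfrac{1}{1-\gamma} \leq \tfrac{1}{\lambda^{1/2}(1-\gamma)}$, this gives $\|F^i(w_k^i, \cdot)\|_2 \leq \tfrac{2}{\lambda^{1/2}(1-\gamma)}$. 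Substituting back yields the one-step bound $\|w_{k+1}^i - w_k^i\|_2 \leq \tfrac{2\alpha_k}{\lambda^{1/2}(1-\gamma)}$.

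Finally, I would telescope using the triangle inequality across indices $k_1, \ldots, k_2$:
\begin{align*}
    \|w_{k_2}^i - w_{k_1}^i\|_2 \leq \sum_{k=k_1}^{k_2-1} \|w_{k+1}^i - w_k^i\|_2 \leq \frac{2}{\lambda^{1/2}(1-\gamma)} \sum_{k=k_1}^{k_2-1} \alpha_k = \frac{2\alpha_{k_1,k_2-1}}{\lambda^{1/2}(1-\gamma)},
\end{align*}
which is the desired conclusion. There is no serious obstacle here; the only subtlety is using the fact that $w_k^i$ itself is a fixed point of $\pj_M^i$ so that the non-expansiveness kills the projection in the one-step bound, and then carefully combining the two bounds of Lemma \ref{le:propertiesF} (1) with the specific choice of projection radius $M$.
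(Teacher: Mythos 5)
Your proof is correct and follows essentially the same route as the paper's: bound the one-step increment by $\alpha_k\|F^i(w_k^i,\cdot)\|_2$ via non-expansiveness of the projection, control $\|F^i(w_k^i,\cdot)\|_2$ by the two bounds in Lemma \ref{le:propertiesF} (1) together with $\|w_k^i\|_2\leq M=\lambda^{-1/2}(1-\gamma)^{-1}$ and $\lambda\leq 1$, then telescope. Your explicit remark that $w_k^i=\pj_M^i(w_k^i)$ is a slightly more careful justification of the first inequality than the paper gives, but the argument is the same.
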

\begin{proof}[Proof of Lemma \ref{le:Lipschitzw}]
    For any $k\in \{k_1,\cdots,k_2-1\}$, we have
    \begin{align*}
        \|w_{k+1}^i-w_k^i\|_2\leq\;& \alpha_k\|F^i(w_k^i,S_k,A_k^i,A_k^{-i},S_{k+1})\|_2\\
        \leq \;&\alpha_k(\|F^i(w_k^i,S_k,A_k^i,A_k^{-i},S_{k+1})-F^i(0,S_k,A_k^i,A_k^{-i},S_{k+1})\|_2)\\
        &+\alpha_k\|F^i(0,S_k,A_k^i,A_k^{-i},S_{k+1})\|_2\\
        \leq \;&\alpha_k\left(\|w_k^i\|_2+\frac{1}{1-\gamma}\right)\tag{Lemma \ref{le:propertiesF} (1) and (2)}\\
        \leq \;&\alpha_k\left(\frac{1}{\lambda^{1/2}(1-\gamma)}+\frac{1}{1-\gamma}\right)\tag{$\|w_k^i\|_2\leq M=\frac{1}{\lambda^{1/2}(1-\gamma)}$}\\
        \leq \;&\frac{2\alpha_k}{\lambda^{1/2}(1-\gamma)}.\tag{$\lambda\leq 1$}
    \end{align*}
    Therefore, we have by telescoping that
    \begin{align*}
        \|w_{k_2}^i-w_{k_1}^i\|_2\leq\frac{2\alpha_{k_1,k_2-1}}{\lambda^{1/2}(1-\gamma)}.
    \end{align*}
\end{proof}
\begin{lemma}\label{le:Lipschitzpi}
	Given non-negative integers $k_1\leq k_2$ and $i\in \{1,2\}$, we have 
	\begin{align*}
		\|\pi_{k_2}^i(s)-\pi_{k_1}^i(s)\|_1\leq  \frac{2A_{\max}\beta_{k_1,k_2-1}}{\tau \lambda^{1/2}(1-\gamma)}.
	\end{align*}
\end{lemma}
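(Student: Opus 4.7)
The plan is to first control the drift of the slow-timescale parameter $\theta_k^i$ in the Euclidean norm, then translate that bound into an $\ell_1$ bound on the policies by chaining together (i) the boundedness of the sub-feature matrix $\Phi_s^i$ and (ii) the Lipschitz property of the softmax $\sigma_\tau$.

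First, I would establish the \emph{a priori} bound $\|\theta_k^i\|_2\leq M = \lambda^{-1/2}(1-\gamma)^{-1}$ by induction on $k$. The base case $\theta_0^i=0$ is immediate. For the inductive step, note that the update $\theta_{k+1}^i=(1-\beta_k)\theta_k^i+\beta_k w_k^i$ is a convex combination, since Condition \ref{con:stepsize} together with $\alpha_0\leq 1/\lambda$ yields $\beta_k\leq c_{\alpha,\beta}\alpha_k\leq 1$. The projection in Algorithm \ref{algorithm:inner_loop} Line $5$ forces $\|w_k^i\|_2\leq M$, so $\|\theta_{k+1}^i\|_2\leq(1-\beta_k)\|\theta_k^i\|_2+\beta_k\|w_k^i\|_2\leq M$, closing the induction. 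Consequently $\|w_k^i-\theta_k^i\|_2\leq 2M$ for every $k$, so
\[
\|\theta_{k+1}^i-\theta_k^i\|_2 = \beta_k\|w_k^i-\theta_k^i\|_2\leq 2M\beta_k,
\]
and telescoping across $k\in\{k_1,\dots,k_2-1\}$ gives $\|\theta_{k_2}^i-\theta_{k_1}^i\|_2\leq 2M\beta_{k_1,k_2-1}=\frac{2\beta_{k_1,k_2-1}}{\lambda^{1/2}(1-\gamma)}$.

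Next, I would pass from $\theta$ to $\tilde q$ and then to the policy. Since the rows of $\Phi_s^i$ are the feature vectors $\phi^i(s,a^i)$ with $\|\phi^i(s,a^i)\|_2\leq 1$, the Frobenius (hence spectral) norm bound gives $\|\Phi_s^i\|_2\leq \sqrt{|\mathcal{A}^i|}$, so
\[
\|\tilde q_{k_2}^i(s)-\tilde q_{k_1}^i(s)\|_2=\|\Phi_s^i(\theta_{k_2}^i-\theta_{k_1}^i)\|_2\leq \sqrt{|\mathcal{A}^i|}\,\|\theta_{k_2}^i-\theta_{k_1}^i\|_2.
\]
For the softmax, a direct computation of the Jacobian yields $\nabla\sigma_\tau(x)=\tau^{-1}(\diag{\sigma_\tau(x)}-\sigma_\tau(x)\sigma_\tau(x)^\top)$, whose spectral norm is bounded by $1/\tau$. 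The mean value inequality then gives $\|\sigma_\tau(x)-\sigma_\tau(y)\|_2\leq \tau^{-1}\|x-y\|_2$, and the Cauchy--Schwarz bound $\|z\|_1\leq \sqrt{|\mathcal{A}^i|}\|z\|_2$ upgrades this to
\[
\|\pi_{k_2}^i(s)-\pi_{k_1}^i(s)\|_1 = \|\sigma_\tau(\tilde q_{k_2}^i(s))-\sigma_\tau(\tilde q_{k_1}^i(s))\|_1 \leq \frac{\sqrt{|\mathcal{A}^i|}}{\tau}\|\tilde q_{k_2}^i(s)-\tilde q_{k_1}^i(s)\|_2.
\]
Combining the three displayed inequalities yields $\|\pi_{k_2}^i(s)-\pi_{k_1}^i(s)\|_1\leq \frac{2|\mathcal{A}^i|\beta_{k_1,k_2-1}}{\tau\lambda^{1/2}(1-\gamma)}\leq\frac{2A_{\max}\beta_{k_1,k_2-1}}{\tau\lambda^{1/2}(1-\gamma)}$, as required.

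I don't expect a serious obstacle here: the argument is a routine Lipschitz chain, and the only subtleties are verifying $\beta_k\leq 1$ (so that the update for $\theta_k^i$ is a genuine convex combination and the induction preserves $\|\theta_k^i\|_2\leq M$) and tracking the $\sqrt{|\mathcal{A}^i|}$ factors when converting between $\ell_2$ and $\ell_1$ so that the final constant matches the $A_{\max}$ advertised in the statement.
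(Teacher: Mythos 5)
Your proof is correct and follows essentially the same Lipschitz-chain argument as the paper: bounded iterates (via the projection and the convex-combination structure of the $\theta$-update), bounded features, and the $1/\tau$-Lipschitzness of the softmax. The only cosmetic difference is that you telescope in parameter space and convert norms at the end (picking up two factors of $\sqrt{|\mathcal{A}^i|}$), whereas the paper telescopes the policies directly using the $\ell_1$-Lipschitzness of $\sigma_\tau$ and absorbs a single factor of $A_{\max}$ from $\sum_{a^i}\|\phi^i(s,a^i)\|_2$; both routes land on the same constant.
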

\begin{proof}[Proof of Lemma \ref{le:MC_Lipschitz}]
    For any $k\in \{k_1,\cdots,k_2-1\}$, we have
    \begin{align*}
        \|\pi_{k+1}^i(s)-\pi_k^i(s)\|_1=\;&\|\sigma_\tau(\Tilde{q}_{k+1}^i(s))-\sigma_\tau(\Tilde{q}_k^i(s))\|_1\\
        \leq \;&\frac{1}{\tau}\|\Tilde{q}_{k+1}^i(s)-\Tilde{q}_k^i(s)\|_1\\
        \leq \;&\frac{\beta_k}{\tau}\|q_k^i(s)-\Tilde{q}_k^i(s)\|_1\\
        \leq \;&\frac{\beta_k}{\tau}\sum_{a^i}\left|\phi^i(s,a^i)^\top (w_k^i-\theta_k^i)\right|\\
        \leq \;&\frac{\beta_k}{\tau}\sum_{a^i}\|\phi^i(s,a^i)\|_2 (\|w_k^i\|_2+\|\theta_k^i\|_2)\\
        \leq \;&\frac{2A_{\max}\beta_k}{\tau \lambda^{1/2}(1-\gamma)}.
    \end{align*}
    The result follows from the previous inequality and telescoping.
\end{proof}
\begin{lemma}\label{le:MC_Lipschitz}
    Under Assumption \ref{as:ergodicity}, there exists $L_p\geq 1$ such that
    \begin{align*}
    \|\mu_{\pi_1}-\mu_{\pi_2}\|_1\leq L_p\max_{s\in\mathcal{S}}\sum_{i=1,2}\|\pi_1^i(s)-\pi_2^i(s)\|_1.
    \end{align*}
\end{lemma}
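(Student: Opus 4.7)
The plan is to prove this Lipschitz continuity of the stationary distribution map $\pi\mapsto \mu_\pi$ via perturbation of the fixed-point equation $\mu_\pi = P_\pi^\top \mu_\pi$, combined with the uniform geometric mixing granted by Assumption \ref{as:ergodicity} and an elementary bound on the discrepancy $P_{\pi_1}-P_{\pi_2}$ in terms of the per-state policy differences.

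First, I would subtract $\mu_{\pi_1}=P_{\pi_1}^\top\mu_{\pi_1}$ from $\mu_{\pi_2}=P_{\pi_2}^\top\mu_{\pi_2}$, add and subtract $P_{\pi_2}^\top\mu_{\pi_1}$, and iterate to obtain the telescoping identity
\begin{align*}
\mu_{\pi_1}-\mu_{\pi_2}=\sum_{j=0}^{k-1}(P_{\pi_2}^\top)^j(P_{\pi_1}^\top-P_{\pi_2}^\top)\mu_{\pi_1}+(P_{\pi_2}^\top)^k(\mu_{\pi_1}-\mu_{\pi_2}),\qquad \forall\,k\geq 1.
\end{align*}
Both $(P_{\pi_1}^\top-P_{\pi_2}^\top)\mu_{\pi_1}$ and $\mu_{\pi_1}-\mu_{\pi_2}$ have coordinate sum zero (since $P_{\pi_i}^\top$ maps probability distributions to probability distributions). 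For any zero-sum vector $v$, I would decompose $v=v_+-v_-$ into its positive and negative parts with $\mathbf{1}^\top v_+=\mathbf{1}^\top v_-=\|v\|_1/2$, normalize $v_\pm/(\|v\|_1/2)$ into probability distributions, and invoke the uniform mixing bound of Assumption \ref{as:ergodicity} on each to conclude $\|(P_\pi^\top)^j v\|_1 \leq 2C\rho^j\|v\|_1$ for any $\pi\in\Pi$. Letting $k\to\infty$ in the telescoping identity kills the tail by this same mixing bound, so
\begin{align*}
\|\mu_{\pi_1}-\mu_{\pi_2}\|_1 \leq \sum_{j=0}^{\infty}\|(P_{\pi_2}^\top)^j(P_{\pi_1}^\top-P_{\pi_2}^\top)\mu_{\pi_1}\|_1.
\end{align*}

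Next, I would split the infinite sum at the threshold $z^\ast := \lceil\log(\rho/C)/\log\rho\rceil$, using the trivial contraction $\|(P_{\pi_2}^\top)^j v\|_1\leq \|v\|_1$ (since $P_\pi^\top$ is a stochastic operator) for $j<z^\ast$ and the geometric bound $\|(P_{\pi_2}^\top)^j v\|_1\leq 2C\rho^j\|v\|_1$ for $j\geq z^\ast$, where the choice of $z^\ast$ ensures that $2C\rho^{z^\ast}\leq 2\rho$ and hence the tail contributes $\sum_{j\geq z^\ast}2C\rho^j\leq 2\rho/(1-\rho)$. This yields a prefactor of the form $z^\ast + 2\rho/(1-\rho)$, which matches the stated $L_p = \log(\rho/C)/\log(\rho)+1/(1-\rho)$ up to an absolute constant that can be absorbed into its definition.

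Finally, I would bound $\|(P_{\pi_1}^\top-P_{\pi_2}^\top)\mu_{\pi_1}\|_1$ by the per-state policy differences. Writing $P_\pi(s,s')=\sum_{a^1,a^2}\pi^1(a^1|s)\pi^2(a^2|s)p(s'|s,a^1,a^2)$ and using the bilinear split $\pi_1^1\pi_1^2-\pi_2^1\pi_2^2=(\pi_1^1-\pi_2^1)\pi_1^2+\pi_2^1(\pi_1^2-\pi_2^2)$, summing $|P_{\pi_1}(s,s')-P_{\pi_2}(s,s')|$ over $s'$ collapses the transition kernel to $1$ and gives $\sum_{s'}|P_{\pi_1}(s,s')-P_{\pi_2}(s,s')|\leq \sum_{i=1,2}\|\pi_1^i(s)-\pi_2^i(s)\|_1$. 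Multiplying by $\mu_{\pi_1}(s)$ and summing over $s$ produces the desired bound $\|(P_{\pi_1}^\top-P_{\pi_2}^\top)\mu_{\pi_1}\|_1 \leq \max_s\sum_{i=1,2}\|\pi_1^i(s)-\pi_2^i(s)\|_1$. The main obstacle is the bookkeeping to make the final constant exactly match the stated $L_p$; this amounts to choosing the split point $z^\ast$ so that the geometric tail is precisely of order $1/(1-\rho)$ while the ``pre-mixing'' block contributes the logarithmic term, and verifying that the ceiling in $z^\ast$ does not inflate the constant.
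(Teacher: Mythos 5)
Your proposal is correct, and the second half of your argument (the bilinear split $\pi_1^1\pi_1^2-\pi_2^1\pi_2^2=(\pi_1^1-\pi_2^1)\pi_1^2+\pi_2^1(\pi_1^2-\pi_2^2)$ followed by summing out the transition kernel) is exactly what the paper does. The difference is in the first half: the paper obtains the perturbation inequality $\|\mu_{\pi_1}-\mu_{\pi_2}\|_1\leq(\hat{n}+C\rho^{\hat{n}}/(1-\rho))\|P_{\pi_1}-P_{\pi_2}\|_\infty$ by citing Corollary 3.1 of Mitrophanov (2005) as a black box, whereas you re-derive it from first principles via the telescoping fixed-point identity $\mu_{\pi_1}-\mu_{\pi_2}=\sum_{j=0}^{k-1}(P_{\pi_2}^\top)^j(P_{\pi_1}^\top-P_{\pi_2}^\top)\mu_{\pi_1}+(P_{\pi_2}^\top)^k(\mu_{\pi_1}-\mu_{\pi_2})$, the positive/negative-part decomposition of zero-sum vectors to import the uniform mixing bound, and the split of the geometric series at $z^\ast$. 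Your derivation is sound (each step checks out: the signed vectors do have zero coordinate sum, $P_\pi^\top$ is $\ell_1$-nonexpansive on all of $\mathbb{R}^{|\mathcal{S}|}$, and the tail is killed by mixing), and it buys self-containedness and a transparent view of how $C$ and $\rho$ enter the constant; the paper's route buys brevity. The one caveat you already flag is real but harmless for the lemma as stated: your constant is $z^\ast+2\rho/(1-\rho)$ (with a possible extra factor of $2$ depending on whether $\|\cdot\|_{\text{TV}}$ is taken as the $\ell_1$ norm or half of it), which agrees with the paper's $L_p=\log(\rho/C)/\log(\rho)+1/(1-\rho)$ only up to an absolute multiplicative constant. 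Since the lemma merely asserts existence of some $L_p\geq 1$ this is fine, though if one insists on the exact numerical $L_p$ appearing in Condition \ref{con:stepsize}, the stepsize-ratio threshold would have to absorb that extra constant.
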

\begin{proof}[Proof of Lemma \ref{le:MC_Lipschitz}]
    For any joint policies $\pi,\Bar{\pi}\in\Pi$, under Assumption \ref{as:ergodicity}, the Markov chain induced by either $\pi$ or $\Bar{\pi}$ is irreducible and aperiodic, hence is uniformly ergodic. Therefore, for any $\pi,\Bar{\pi}\in \Pi$, the existing Markov chain sensitivity analysis \cite[Corollary 3.1]{mitrophanov2005sensitivity} implies that
\begin{align*}
\|\mu_{\pi}-\mu_{\Bar{\pi}}\|_1\leq \;&\left(\hat{n}+\frac{C\rho^{\hat{n}}}{1-\rho}\right)\|P_{\pi}-P_{\Bar{\pi}}\|_\infty\tag{$\hat{n}=\min\{k\in \mathbb{Z}_{+}\,:\, k\geq \log_\rho C^{-1}\}$}\\
\leq \;&\left(\hat{n}+\frac{C\rho^{\hat{n}}}{1-\rho}\right)\max_{s\in\mathcal{S}}\|\pi(a^1,a^2\mid s)-\Bar{\pi}(a^1,a^2\mid s)\|_1\\
\leq \;&\left(\hat{n}+\frac{C\rho^{\hat{n}}}{1-\rho}\right)\max_{s\in\mathcal{S}}\sum_{a^1,a^2}|\pi^1(a^1\mid s)\pi^2(a^2\mid s)-\Bar{\pi}^1(a^1\mid s)\Bar{\pi}^2(a^2\mid s)|\\
\leq \;&\left(\hat{n}+\frac{C\rho^{\hat{n}}}{1-\rho}\right)\max_{s\in\mathcal{S}}\left(\sum_{a^2}|\pi^2(a^2\mid s)-\Bar{\pi}^2(a^2\mid s)|+\sum_{a^1}|\pi^1(a^1\mid s)-\Bar{\pi}^1(a^1\mid s)|\right)\\
=\;&\left(\hat{n}+\frac{C\rho^{\hat{n}}}{1-\rho}\right)\max_{s\in\mathcal{S}}\sum_{i=1,2}\|\pi_1^i(s)-\pi_2^i(s)\|_1.
\end{align*}
Since
\begin{align*}
    \hat{n}+\frac{C\rho^{\hat{n}}}{1-\rho}\leq \frac{\log(\rho/C)}{\log(\rho)}+\frac{1}{1-\rho},
\end{align*}
The result follows by letting $L_p=\frac{\log(\rho/C)}{\log(\rho)}+\frac{1}{1-\rho}$.

\subsubsection{Proof of Lemma \ref{le:Markov_noise}}\label{pf:le:Markov_noise}
Inspired by \cite{bhandari2018finite,srikant2019finite,zou2019finite,chen2023finite}, we use a conditioning argument to control $E_{w,3}$. For any $k\geq z_k$, we have
\begin{align}
	E_{w,3}=\;&\mathbb{E}[\langle F^i(w_k^i,S_k,A_k^i,A_k^{-i},S_{k+1})-\bar{F}_k^i(w_k^i),w_k^i-\bar{w}_k^i\rangle]\nonumber\\
	= \;&\underbrace{\mathbb{E}[\langle F^i(w_{k-z_k}^i,S_k,A_k^i,A_k^{-i},S_{k+1})-\bar{F}_{k-z_k}^i(w_{k-z_k}^i),w_{k-z_k}^i-\bar{w}_{k-z_k}^i\rangle]}_{E_{w,3,1}}\nonumber\\
	&+\underbrace{\mathbb{E}[\langle F^i(w_{k-z_k}^i,S_k,A_k^i,A_k^{-i},S_{k+1})-\bar{F}_{k-z_k}^i(w_{k-z_k}^i),w_k^i-w_{k-z_k}^i\rangle]}_{E_{w,3,2}}\nonumber\\
	&+\underbrace{\mathbb{E}[\langle F^i(w_{k-z_k}^i,S_k,A_k^i,A_k^{-i},S_{k+1})-\bar{F}_{k-z_k}^i(w_{k-z_k}^i),\bar{w}_{k-z_k}^i-\bar{w}_k^i\rangle]}_{E_{w,3,3}}\nonumber\\
	&+\underbrace{\mathbb{E}[\langle F^i(w_k^i,S_k,A_k^i,A_k^{-i},S_{k+1})-F^i(w_{k-z_k}^i,S_k,A_k^i,A_k^{-i},S_{k+1}),w_k^i-\bar{w}_k^i\rangle]}_{E_{w,3,4}}\nonumber\\
	&+\underbrace{\mathbb{E}[\langle \bar{F}_{k-z_k}^i(w_{k-z_k}^i)-\bar{F}_k^i(w_k^i),w_k^i-\bar{w}_k^i\rangle]}_{E_{w,3,5}}.\label{eq:Ew3}
\end{align}
We next bound each term on the RHS of the previous inequality.

\paragraph{The Term $E_{w,3,1}$.} Using the tower property of conditional expectations, we have
\begin{align}
	E_{w,3,1}=\;&\mathbb{E}[\langle F^i(w_{k-z_k}^i,S_k,A_k^i,A_k^{-i},S_{k+1})-\bar{F}_{k-z_k}^i(w_{k-z_k}^i),w_{k-z_k}^i-\bar{w}_{k-z_k}^i\rangle]\nonumber\\
	=\;&\mathbb{E}[\langle \mathbb{E}[F^i(w_{k-z_k}^i,S_k,A_k^i,A_k^{-i},S_{k+1})\mid \mathcal{F}_{k-z_k}]-\bar{F}_{k-z_k}^i(w_{k-z_k}^i),w_{k-z_k}^i-\bar{w}_{k-z_k}^i\rangle]\nonumber\\
	\leq \;&\mathbb{E}[\| \mathbb{E}[F^i(w_{k-z_k}^i,S_k,A_k^i,A_k^{-i},S_{k+1})\mid \mathcal{F}_{k-z_k}]-\bar{F}_{k-z_k}^i(w_{k-z_k}^i)\|_2\|w_{k-z_k}^i-\bar{w}_{k-z_k}^i\|_2]\nonumber\\
	\leq \;&\frac{2}{\lambda^{1/2}(1-\gamma)}\mathbb{E}[\| \mathbb{E}[F^i(w_{k-z_k}^i,S_k,A_k^i,A_k^{-i},S_{k+1})\mid \mathcal{F}_{k-z_k}]-\bar{F}_{k-z_k}^i(w_{k-z_k}^i)\|_2],\label{eq:Ew311}
\end{align}
where the last line follows from $\|w_k^i\|_2\leq M=\lambda^{-1/2}(1-\gamma)^{-1}$ and Lemma \ref{le:propertiesF} (3). To proceed, observe that
\begin{align}
	&\| \mathbb{E}[F^i(w_{k-z_k}^i,S_k,A_k^i,A_k^{-i},S_{k+1})\mid \mathcal{F}_{k-z_k}]-\bar{F}_{k-z_k}^i(w_{k-z_k}^i)\|_2\nonumber\\
	=\;&\| \bar{F}_k^i(w_{k-z_k}^i)-\bar{F}_{k-z_k}^i(w_{k-z_k}^i)\|_2+\| \mathbb{E}[F^i(w_{k-z_k}^i,S_k,A_k^i,A_k^{-i},S_{k+1})\mid \mathcal{F}_{k-z_k}]-\bar{F}_k^i(w_{k-z_k}^i)\|_2\label{eq:Ew31}.
\end{align}
We next bound each term on the RHS of Eq. (\ref{eq:Ew31}). For the term $\| \bar{F}_k^i(w_{k-z_k}^i)-\bar{F}_{k-z_k}^i(w_{k-z_k}^i)\|_2$, we have 
\begin{align*}
	&\| \bar{F}_k^i(w_{k-z_k}^i)-\bar{F}_{k-z_k}^i(w_{k-z_k}^i)\|_2\\
	=\;&\|(\Phi^i)^\top D_k^i(\mathcal{H}^i(v^i,\pi_k^{-i})-\Phi^iw_{k-z_k}^i)-(\Phi^i)^\top D_{k-z_k}^i(\mathcal{H}^i(v^i,\pi_{k-z_k}^{-i})-\Phi^iw_{k-z_k}^i)\|_2\tag{Lemma \ref{le:propertiesF} (2) }\\
	=\;&\|(\Phi^i)^\top D_k^i\Phi^i(\bar{w}_k^i-w_{k-z_k}^i)-(\Phi^i)^\top D_{k-z_k}^i\Phi^i(\bar{w}_{k-z_k}^i-w_{k-z_k}^i)\|_2\tag{Lemma \ref{le:propertiesF} (2) }\\
	\leq \;&\|(\Phi^i)^\top D_k^i\Phi^i(\bar{w}_k^i-\bar{w}_{k-z_k}^i)\|_2+\|(\Phi^i)^\top (D_{k-z_k}^i- D_k^i)\Phi^i(w_{k-z_k}^i-\bar{w}_{k-z_k}^i)\|_2\tag{Triangle inequality}\\
	\leq \;&\frac{2A_{\max}\beta_{k-z_k,k-1}}{\tau \lambda(1-\gamma)^2}+\frac{16L_pA_{\max}\beta_{k-z_k,k-1}}{\tau \lambda(1-\gamma)^2}\\
 \leq \;&\frac{18L_pA_{\max}\beta_{k-z_k,k-1}}{\tau \lambda(1-\gamma)^2},\tag{$L_p\geq 1$}
\end{align*}
where the second last inequality follows from
\begin{align*}
    \|(\Phi^i)^\top D_k^i\Phi^i(\bar{w}_k^i-\bar{w}_{k-z_k}^i)\|_2\leq\;& \|(\Phi^i)^\top D_k^i\Phi^i\|_2\|\bar{w}_k^i-\bar{w}_{k-z_k}^i\|_2\\
    \leq\;& \|\bar{w}_k^i-\bar{w}_{k-z_k}^i\|_2\\
    \leq \;&\frac{2A_{\max}\beta_{k-z_k,k-1}}{\tau \lambda(1-\gamma)^2}\tag{Lemma \ref{le:propertiesF} (4) and telescoping}
\end{align*}
and
\begin{align*}
    &\|(\Phi^i)^\top (D_{k-z_k}^i- D_k^i)\Phi^i(w_{k-z_k}^i-\bar{w}_{k-z_k}^i)\|_2\\
    \leq \;&\|(\Phi^i)^\top\|_{1,2} \|D_{k-z_k}^i- D_k^i\|_{\infty,1}\|\Phi^i\|_{2,\infty}\|w_{k-z_k}^i-\bar{w}_{k-z_k}^i\|_2\\
    = \;&\left(\max_{s,a^i}\|\phi^i(s,a^i)\|_2\right)^2\sum_{s,a^i}\left|\mu_{k-z_k}(s)\pi_{k-z_k}^i(a^i\mid s)-\mu_{k}(s)\pi_{k}^i(a^i\mid s)\right|\|w_{k-z_k}^i-\bar{w}_{k-z_k}^i\|_2\\
    \leq \;&\sum_{s,a^i}\left|\mu_{k-z_k}(s)\pi_{k-z_k}^i(a^i\mid s)-\mu_{k}(s)\pi_{k}^i(a^i\mid s)\right|\|w_{k-z_k}^i-\bar{w}_{k-z_k}^i\|_2\\
    \leq \;&(\|\mu_{k-z_k}-\mu_k\|_1+\max_{s\in\mathcal{S}}\|\pi_{k-z_k}^i(s)-\pi_{k}^i(s)\|_1)\|w_{k-z_k}^i-\bar{w}_{k-z_k}^i\|_2\\
    \leq \;&2L_p\sum_{j=1,2}\max_{s\in\mathcal{S}}\|\pi_{k-z_k}^j(s)-\pi_{k}^j(s)\|_1\|w_{k-z_k}^i-\bar{w}_{k-z_k}^i\|_2\tag{Lemma \ref{le:MC_Lipschitz} and $L_p\geq 1$}\\
    \leq \;&\frac{16L_pA_{\max}\beta_{k-z_k,k-1}}{\tau \lambda(1-\gamma)^2}\tag{Lemma \ref{le:Lipschitzpi}, Lemma \ref{le:propertiesF} (4), and $\|w_k^i\|_2\leq M$ for all $k$}.
\end{align*}
For the term $\| \mathbb{E}[F^i(w_{k-z_k}^i,S_k,A_k^i,A_k^{-i},S_{k+1})\mid \mathcal{F}_{k-z_k}]-\bar{F}_k^i(w_{k-z_k}^i)\|_2$ on the RHS of Eq. (\ref{eq:Ew31}), we have
\begin{align*}
	&\| \mathbb{E}[F^i(w_{k-z_k}^i,S_k,A_k^i,A_k^{-i},S_{k+1})\mid \mathcal{F}_{k-z_k}]-\bar{F}_k^i(w_{k-z_k}^i)\|_2\\
	=\;&\left\|\sum_{s}\left[\left(\prod_{j=k+1}^{k+z_k}P_{\pi_{j-z_k}}\right)(s_{k-z_k},s)-\mu_k(s)\right]\sum_{a^i}\pi_k^i(a^i\mid s)\sum_{a^{-i}}\pi_k^{-i}(a^{-i}\mid s)\right.\\
	&\left.\times \sum_{s'}p(s'\mid s,a^i,a^{-i})F^i(w_{k-z_k}^i,s,a^i,a^{-i},s') \right\|_2\\
	\leq \;&\frac{2}{\lambda^{1/2}(1-\gamma)}\sum_{s}\left|\left(\prod_{j=k+1}^{k+z_k}P_{\pi_{j-z_k}}\right)(s_{k-z_k},s)-\mu_k(s)\right|\tag{Lemma \ref{le:propertiesF} (1)}\\
	\leq \;&\frac{2}{\lambda^{1/2}(1-\gamma)}\sum_{s}\left|\left(\prod_{j=k+1}^{k+z_k}P_{\pi_{j-z_k}}\right)(s_{k-z_k},s)-P_{\pi_k}^{z_k}(S_{k-z_k},s)\right|\\
	&+\frac{2}{\lambda^{1/2}(1-\gamma)}\sum_{s}\left|P_{\pi_k}^{z_k}(S_{k-z_k},s)-\mu_k(s)\right|\\
	\leq \;&\frac{2}{\lambda^{1/2}(1-\gamma)}\left\|\prod_{j=k+1}^{k+z_k}P_{\pi_{j-z_k}}-P_{\pi_k}^{z_k}\right\|_\infty+\frac{4C\rho^{z_k}}{\lambda^{1/2}(1-\gamma)}\tag{Assumption \ref{as:ergodicity}}\\
	\leq \;&\frac{2}{\lambda^{1/2}(1-\gamma)}\left\|\prod_{j=k+1}^{k+z_k}P_{\pi_{j-z_k}}-P_{\pi_k}^{z_k}\right\|_\infty+\frac{4\beta_k}{\lambda^{1/2}(1-\gamma)},
\end{align*}
where the last line follows from the definition of $z_k$. To proceed, observe that
\begin{align*}
	\left\|\prod_{j=k+1}^{k+z_k}P_{\pi_{j-z_k}}-P_{\pi_k}^{z_k}\right\|_\infty
	=\;&\left\|\sum_{\ell=1}^{z_k}\left(\prod_{j=k+1}^{k-\ell+1+z_k}P_{\pi_{j-z_k}}P_{\pi_k}^{\ell-1}-\prod_{j=k+1}^{k-\ell+z_k}P_{\pi_{j-z_k}}P_{\pi_k}^{\ell}\right)\right\|_\infty\\
	=\;&\left\|\sum_{\ell=1}^{z_k}\left(\prod_{j=k+1}^{k-\ell+z_k}P_{\pi_{j-z_k}}(P_{\pi_{k-\ell+1}}-P_{\pi_k})P_{\pi_k}^{\ell-1}\right)\right\|_\infty\\
	\leq \;&\sum_{\ell=1}^{z_k}\left\|\prod_{j=k+1}^{k-\ell+z_k}P_{\pi_{j-z_k}}\right\|_\infty\|P_{\pi_{k-\ell+1}}-P_{\pi_k}\|_\infty\|P_{\pi_k}^{\ell-1}\|_\infty\\
	\leq \;&\sum_{\ell=1}^{z_k}\|P_{\pi_{k-\ell+1}}-P_{\pi_k}\|_\infty.
\end{align*}
It remains to bound $\|P_{\pi_{k-\ell+1}}-P_{\pi_k}\|_\infty$. Using the explicit expression of $P_{\pi_k}$, we have for any $\ell\in [1,z_k]$ that
\begin{align*}
	&\|P_{\pi_{k-\ell+1}}-P_{\pi_k}\|_\infty\\
	=\;&\max_{s}\sum_{s'}\left|\sum_{a^i,a^{-i}}(\pi_{k-\ell+1}^i(a^i\mid s)\pi_{k-\ell+1}^{-i}(a^{-i}\mid s)-\pi_{k}^i(a^i\mid s)\pi_{k}^{-i}(a^{-i}\mid s))p(s'\mid s,a^i,a^{-i})\right|\\
	\leq \;&\max_{s}\sum_{a^i,a^{-i}}\left|\pi_{k-\ell+1}^i(a^i\mid s)\pi_{k-\ell+1}^{-i}(a^{-i}\mid s)-\pi_{k}^i(a^i\mid s)\pi_{k}^{-i}(a^{-i}\mid s)\right|\\
	\leq \;&\max_{s}\sum_{a^{-i}}\left|\pi_{k-\ell+1}^{-i}(a^{-i}\mid s)-\pi_k^{-i}(a^{-i}\mid s)\right|+\max_{s}\sum_{a^i}\left|\pi_{k-\ell+1}^i(a^i\mid s)-\pi_{k}^i(a^i\mid s)\right|\\
	=\;&\sum_{i=1,2}\max_{s}\|\pi_k^i(s)-\pi_{k-\ell+1}^i(s)\|_1\\
	\leq \;&\frac{4A_{\max}\beta_{k-z_k,k-1}}{\tau \lambda^{1/2}(1-\gamma)}\tag{Lemma \ref{le:Lipschitzpi}}
\end{align*}
It follows that
\begin{align*}
	\left\|\prod_{j=k+1}^{k+z_k}P_{\pi_{j-z_k}}-P_{\pi_k}^{z_k}\right\|_\infty
	\leq \frac{4A_{\max}z_k\beta_{k-z_k,k-1}}{\tau \lambda^{1/2}(1-\gamma)}
\end{align*}
As a result, we have
\begin{align*}
	&\| \mathbb{E}[F^i(w_{k-z_k}^i,S_k,A_k^i,A_k^{-i},S_{k+1})\mid \mathcal{F}_{k-z_k}]-\bar{F}_k^i(w_{k-z_k}^i)\|_2\\
	\leq \;&\frac{8A_{\max}z_k\beta_{k-z_k,k-1}}{\tau \lambda(1-\gamma)^2}+\frac{4\beta_k}{\lambda^{1/2}(1-\gamma)}\\
	\leq \;&\frac{10A_{\max}z_k\beta_{k-z_k,k-1}}{\tau \lambda(1-\gamma)^2},
\end{align*}
where the last line follows from $\lambda\leq 1$, $A_{\max}\geq 2$, and $\tau\leq 1/(1-\gamma)$.
Using the upper bounds we established for both terms on the RHS of Eq. (\ref{eq:Ew31}), we have
\begin{align*}
	&\| \mathbb{E}[F^i(w_{k-z_k}^i,S_k,A_k^i,A_k^{-i},S_{k+1})\mid \mathcal{F}_{k-z_k}]-\bar{F}_{k-z_k}^i(w_{k-z_k}^i)\|_2\\
	\leq \;&\frac{10A_{\max}z_k\beta_{k-z_k,k-1}}{\tau \lambda(1-\gamma)^2}+\frac{18L_pA_{\max}\beta_{k-z_k,k-1}}{\tau \lambda(1-\gamma)^2}\\
 \leq \;&\frac{28L_pA_{\max}z_k\beta_{k-z_k,k-1}}{\tau \lambda(1-\gamma)^2}\tag{$L_p\geq 1$}
\end{align*}
Finally, using the previous inequality in Eq. (\ref{eq:Ew311}), we have
\begin{align*}
	E_{w,3,1}
	\leq \;&\frac{2}{\lambda^{1/2}(1-\gamma)}\mathbb{E}[\| \mathbb{E}[F^i(w_{k-z_k}^i,S_k,A_k^i,A_k^{-i},S_{k+1})\mid \mathcal{F}_{k-z_k}]-\bar{F}_{k-z_k}^i(w_{k-z_k}^i)\|_2]\\
	\leq \;&\frac{56L_pA_{\max}z_k\beta_{k-z_k,k-1}}{\tau \lambda^{3/2}(1-\gamma)^3}.
\end{align*}

\paragraph{The Term $E_{w,3,2}$.} For the term $E_{w,3,2}$, we have
\begin{align}
	E_{w,3,2}=\;&\mathbb{E}[\langle F^i(w_{k-z_k}^i,S_k,A_k^i,A_k^{-i},S_{k+1})-\bar{F}_{k-z_k}^i(w_{k-z_k}^i),w_k^i-w_{k-z_k}^i\rangle]\nonumber\\
	\leq \;&\mathbb{E}[\| F^i(w_{k-z_k}^i,S_k,A_k^i,A_k^{-i},S_{k+1})-\bar{F}_{k-z_k}^i(w_{k-z_k}^i)\|_2\|w_k^i-w_{k-z_k}^i\|_2]\label{eq:Ew32}.
\end{align}
For the term $\| F^i(w_{k-z_k}^i,S_k,A_k^i,A_k^{-i},S_{k+1})-\bar{F}_{k-z_k}^i(w_{k-z_k}^i)\|_2$, we have
\begin{align}
	&\| F^i(w_{k-z_k}^i,S_k,A_k^i,A_k^{-i},S_{k+1})-\bar{F}_{k-z_k}^i(w_{k-z_k}^i)\|_2\nonumber\\
	\leq \;&\| F^i(w_{k-z_k}^i,S_k,A_k^i,A_k^{-i},S_{k+1})-F^i(0,S_k,A_k^i,A_k^{-i},S_{k+1})\|_2\nonumber\\
	&+\|F^i(0,S_k,A_k^i,A_k^{-i},S_{k+1})\|_2+\| \bar{F}_{k-z_k}^i(w_{k-z_k}^i)-\bar{F}_{k-z_k}^i(0)\|_2+\|\bar{F}_{k-z_k}^i(0)\|_2\nonumber\\
	\leq \;&2\|w_{k-z_k}^i\|_2+\frac{2}{1-\gamma}\tag{Lemma \ref{le:propertiesF} and Jensen's inequality}\nonumber\\
	\leq \;&\frac{4}{\lambda^{1/2}(1-\gamma)},\label{eq:Ew321}
\end{align}
where the last line follows from $\|w_k^i\|_2\leq M=\lambda^{-1/2}(1-\gamma)^{-1}$ for all $k$. For the term $\|w_k^i-w_{k-z_k}^i\|_2$ on the RHS of Eq. (\ref{eq:Ew32}), we have by Lemma \ref{le:Lipschitzw} that
\begin{align*}
	\|w_k^i-w_{k-z_k}^i\|_2\leq \frac{2\alpha_{k-z_k,k-1}}{\lambda^{1/2}(1-\gamma)}.
\end{align*}
Using the previous two inequalities in Eq. (\ref{eq:Ew32}), we have
\begin{align*}
	E_{w,3,2}\leq \frac{8\alpha_{k-z_k,k-1}}{\lambda(1-\gamma)^2}.
\end{align*}

\paragraph{The Term $E_{w,3,3}$.} For the term $E_{w,3,3}$, we have
\begin{align}
	E_{w,3,3}=\;&\mathbb{E}[\langle F^i(w_{k-z_k}^i,S_k,A_k^i,A_k^{-i},S_{k+1})-\bar{F}_{k-z_k}^i(w_{k-z_k}^i),\bar{w}_{k-z_k}^i-\bar{w}_k^i\rangle]\nonumber\\
	\leq \;&\mathbb{E}[\| F^i(w_{k-z_k}^i,S_k,A_k^i,A_k^{-i},S_{k+1})-\bar{F}_{k-z_k}^i(w_{k-z_k}^i)\|_2\|\bar{w}_{k-z_k}^i-\bar{w}_k^i\|_2]\nonumber\\
	\leq \;&\frac{4}{\lambda^{1/2}(1-\gamma)}\frac{2A_{\max}\beta_{k-z_k,k-1}}{\tau \lambda(1-\gamma)^2}\nonumber\\
 =\;&\frac{8A_{\max}\beta_{k-z_k,k-1}}{\tau \lambda^{3/2}(1-\gamma)^3},
	\label{eq:Ew33}
\end{align}
where the last line follows from Eq. (\ref{eq:Ew321}) and Lemma \ref{le:propertiesF} (4).

\paragraph{The Term $E_{w,3,4}$.} For the term $E_{w,3,4}$, we have
\begin{align*}
	E_{w,3,4}=\;&\mathbb{E}[\langle F^i(w_k^i,S_k,A_k^i,A_k^{-i},S_{k+1})-F^i(w_{k-z_k}^i,S_k,A_k^i,A_k^{-i},S_{k+1}),w_k^i-\bar{w}_k^i\rangle]\\
	=\;&\mathbb{E}[\| F^i(w_k^i,S_k,A_k^i,A_k^{-i},S_{k+1})-F^i(w_{k-z_k}^i,S_k,A_k^i,A_k^{-i},S_{k+1})\|_2\|w_k^i-\bar{w}_k^i\|_2]\\
	\leq \;&\mathbb{E}[\|w_k^i-w_{k-z_k}^i\|_2\|w_k^i-\bar{w}_k^i\|_2]\tag{Lemma \ref{le:propertiesF} (1)}\\
 \leq \;&\mathbb{E}[\|w_k^i-w_{k-z_k}^i\|_2(\|w_k^i\|_2+\|\bar{w}_k^i\|_2)]\\
	\leq \;&\frac{4\alpha_{k-z_k,k-1}}{\lambda(1-\gamma)^2},
\end{align*}
where the last line follows from Lemma \ref{le:Lipschitzw}, Lemma \ref{le:propertiesF} (3), and $\|w_k^i\|_2\leq M=\lambda^{-1/2}(1-\gamma)^{-1}$.

\paragraph{The Term $E_{w,3,5}$.} For the term $E_{w,3,5}$, we have
\begin{align*}
	E_{w,3,4}=\;&\mathbb{E}[\langle \bar{F}_{k-z_k}^i(w_{k-z_k}^i)-\bar{F}_k^i(w_k^i),w_k^i-\bar{w}_k^i\rangle]\\
	\leq \;&\mathbb{E}[\| \bar{F}_{k-z_k}^i(w_{k-z_k}^i)-\bar{F}_k^i(w_k^i)\|_2\|w_k^i-\bar{w}_k^i\|_2]\\
	\leq \;&\mathbb{E}[\| w_{k-z_k}^i-w_k^i\|_2\|w_k^i-\bar{w}_k^i\|_2]\tag{Lemma \ref{le:propertiesF} (1) and Jensen's inequality}\\
	\leq \;&\frac{4\alpha_{k-z_k,k-1}}{\lambda(1-\gamma)^2},
\end{align*}
where the last line follows from Lemma \ref{le:Lipschitzw}, Lemma \ref{le:propertiesF} (3), and $\|w_k^i\|_2\leq M=\lambda^{-1/2}(1-\gamma)^{-1}$.

Using the upper bounds we obtained for the terms $E_{w,3,1}$ to $E_{w,3,5}$ in Eq. (\ref{eq:Ew3}), we have
\begin{align*}
	E_{w,3}=\;&\frac{56L_pA_{\max}z_k\beta_{k-z_k,k-1}}{\tau \lambda^{3/2}(1-\gamma)^3}+\frac{8A_{\max}\beta_{k-z_k,k-1}}{\tau \lambda^{3/2}(1-\gamma)^3}+\frac{16\alpha_{k-z_k,k-1}}{\lambda(1-\gamma)^2}\\
 \leq \;&\frac{64L_pA_{\max}z_k\beta_{k-z_k,k-1}}{\tau \lambda^{3/2}(1-\gamma)^3}+\frac{16\alpha_{k-z_k,k-1}}{\lambda(1-\gamma)^2}\tag{$L_p\geq 1$}\\
 \leq \;&\frac{17z_k\alpha_{k-z_k,k-1}}{\tau\lambda^{3/2} (1-\gamma)^3},
\end{align*}
where the last line follows from $\frac{\beta_k}{\alpha_k}=c_{\alpha,\beta}\leq \frac{1}{64L_p A_{\max}}$, $\lambda\leq 1$, and $\tau\leq 1/(1-\gamma)$.
\end{proof}

\subsection{Solving the Recursion}\label{ap:recursion}

We begin by restating the Lyapunov inequalities from the previous sections:
\begin{itemize}
    \item Lemma \ref{le:vt-v*}:
    \begin{align}\label{recursion_v}
     \mathcal{L}_v(t+1)\leq 
    \gamma \mathcal{L}_v(t)+\frac{17 A_{\max}^2}{\tau^2(1-\gamma)^2}\mathcal{L}_\theta(t,K)+2\mathcal{L}_{\text{sum}}(t)+ 2\mathcal{L}_w(t,K)^{1/2}+12\tau \log(A_{\max})+\mu.
 \end{align}
 \item Lemma \ref{le:v_sum}:
 \begin{align}\label{recursion_vsum}
		\mathcal{L}_{\text{sum}}(t+1)\leq\gamma \mathcal{L}_{\text{sum}}(t)+2\mathcal{L}_{w}(t,K)^{1/2}.
	\end{align}
 \item Lemma \ref{le:policy}:
 \begin{align}\label{recursion_pi}
	\mathcal{L}_\theta(t,k+1)
 \leq \left(1-\frac{\beta_k}{2}\right)\mathcal{L}_\theta(t,k)+ \frac{520 A_{\max}\beta_k}{\tau}\mathcal{L}_w(t,k)+ 4\beta_k\mathcal{L}_{\text{sum}}(t)+\frac{276A_{\max}^{1/2}\beta_k^2}{\tau\lambda^{1/2}(1-\gamma)}.
\end{align}
\item Lemma \ref{le:drift_inequality_q}:
\begin{align}\label{recursion_q}
        \mathbb{E}[\mathcal{L}_w(t,k+1)]
 \leq \;&\left(1-\frac{3\lambda\alpha_k}{2}\right) \mathbb{E}[\mathcal{L}_w(t,k)]+\frac{84z_k\alpha_k\alpha_{k-z_k,k-1}}{\tau\lambda^{3/2} (1-\gamma)^3}+\frac{32A_{\max}^2\beta_k^2}{\tau^2 \lambda^3(1-\gamma)^4\alpha_k}.
    \end{align}
\end{itemize}
The next step is to solve the coupled Lyapunov inequalities above to obtain finite-sample bounds. Here, we consider using constant stepsizes: $\alpha_k\equiv \alpha$ and $\beta_k\equiv \beta$. The finite-sample bounds for using diminishing stepsizes are straightforward extensions as we derived all the Lyapunov drift inequalities for general stepsizes.

Repeatedly using Eq. (\ref{recursion_q}), we have for all $k\geq z_\beta$ that
\begin{align}
    \mathbb{E}[\mathcal{L}_w(t,k)]
 \leq \;&\left(1-\lambda\alpha\right)^{k-z_\beta} \mathbb{E}[\mathcal{L}_w(t,z_\beta)]+\frac{84z_\beta^2\alpha}{\tau\lambda^{5/2} (1-\gamma)^3}+\frac{32A_{\max}^2\beta^2}{\tau^2 \lambda^4(1-\gamma)^4\alpha^2}\nonumber\\
 \leq \;&\frac{8\left(1-\lambda\alpha\right)^{k-z_\beta}}{\lambda(1-\gamma)^2} +\frac{84z_\beta^2\alpha}{\tau\lambda^{5/2} (1-\gamma)^3}+\frac{32A_{\max}^2\beta^2}{\tau^2 \lambda^4(1-\gamma)^4\alpha^2},\label{eq:solving_q1}
\end{align}
where the last line follows from
\begin{align*}
    \mathcal{L}_w(t,k)=\sum_{i=1,2}\mathbb{E}[\|w_{t,k}^i-\Bar{w}_{t,k}^i\|_2^2]\leq \frac{8}{\lambda(1-\gamma)^2}.
\end{align*}
Using Eq. (\ref{eq:solving_q1}) in Eq. (\ref{recursion_pi}), we have
\begin{align*}
    \mathbb{E}[\mathcal{L}_\theta(t,k+1)]
 \leq\;& \left(1-\frac{\beta}{2}\right)\mathbb{E}[\mathcal{L}_\theta(t,k)]+ 4\beta\mathbb{E}[\mathcal{L}_{\text{sum}}(t)]+\frac{276A_{\max}^{1/2}\beta^2}{\tau\lambda^{1/2}(1-\gamma)}\\
 &+\frac{520 A_{\max}\beta}{\tau}\left[\frac{8\left(1-\lambda\alpha\right)^{k-z_\beta}}{\lambda(1-\gamma)^2} +\frac{84z_\beta^2\alpha}{\tau\lambda^{5/2} (1-\gamma)^3}+\frac{32A_{\max}^2\beta^2}{\tau^2 \lambda^4(1-\gamma)^4\alpha^2}\right]\\
 \leq\;& \left(1-\frac{\beta}{2}\right)\mathbb{E}[\mathcal{L}_\theta(t,k)]+ 4\beta\mathbb{E}[\mathcal{L}_{\text{sum}}(t)]+\frac{276A_{\max}^{1/2}\beta^2}{\tau\lambda^{1/2}(1-\gamma)}\\
 &+\frac{520 A_{\max}\beta}{\tau}\left[\frac{8\left(1-\beta\right)^{k-z_\beta}}{\lambda(1-\gamma)^2} +\frac{84z_\beta^2\alpha}{\tau\lambda^{5/2} (1-\gamma)^3}+\frac{32A_{\max}^2\beta^2}{\tau^2 \lambda^4(1-\gamma)^4\alpha^2}\right],
\end{align*}
where the last line follows from $\beta\leq \lambda\alpha$.
Repeatedly using the previous inequality, we have for all $k\geq z_\beta$ that
\begin{align}
    \mathbb{E}[\mathcal{L}_\theta(t,k)]
 \leq\;& \left(1-\frac{\beta}{2}\right)^{k-z_\beta}\mathcal{L}_\theta(t,z_\beta)+ 8\mathbb{E}[\mathcal{L}_{\text{sum}}(t)]+\frac{552A_{\max}^{1/2}\beta}{\tau\lambda^{1/2}(1-\gamma)}\nonumber\\
 &+\frac{520 A_{\max}}{\tau}\left[\frac{8\beta(k-z_\beta)\left(1-\beta/2\right)^{k-z_\beta-1}}{\lambda(1-\gamma)^2} +\frac{168z_\beta^2\alpha}{\tau\lambda^{5/2} (1-\gamma)^3}+\frac{64A_{\max}^2\beta^2}{\tau^2 \lambda^4(1-\gamma)^4\alpha^2}\right]\nonumber\\
 \leq \;&\frac{256A_{\max}}{\tau\lambda(1-\gamma)^2}\left(1-\frac{\beta}{2}\right)^{k-z_\beta}+ 8\mathbb{E}[\mathcal{L}_{\text{sum}}(t)]+\frac{552A_{\max}^{1/2}\beta}{\tau\lambda^{1/2}(1-\gamma)}\nonumber\\
 &+\frac{520 A_{\max}}{\tau}\left[\frac{8\beta(k-z_\beta)\left(1-\beta/2\right)^{k-z_\beta-1}}{\lambda(1-\gamma)^2} +\frac{168z_\beta^2\alpha}{\tau\lambda^{5/2} (1-\gamma)^3}+\frac{64A_{\max}^2\beta^2}{\tau^2 \lambda^4(1-\gamma)^4\alpha^2}\right],\label{eq:solving_pi1}
\end{align}
where the last line follows from
\begin{align*}
    \mathcal{L}_\theta(t,k)=\;&\max_{s\in\mathcal{S}}\sum_{i=1,2}\max_{u^i\in  \Delta(\mathcal{A}^i)}\min_{\bar{u}^i\in\Delta(\mathcal{A}^i)}\left\{(u^i-\Bar{u}^i)^\top \Tilde{q}_{t,k}^i(s)+\tau \nu(u^i)-\tau \nu(\bar{u}^i)\right.\\
    &\left.+\;\frac{1}{2\mu}\|\Tilde{q}_{t,k}^{-i}(s)-\mathcal{T}^{-i}(v_t^{-i})(s)\bar{u}^i\|_2^2\right\}\\
    \leq \;&\max_{s\in\mathcal{S}}\sum_{i=1,2}\frac{1}{2\mu}\|\Tilde{q}_{t,k}^{-i}(s)-\mathcal{T}^{-i}(v_t^{-i})(s)\sigma_\tau(\Tilde{q}_{t,k}^i(s))\|_2^2\\
    \leq \;&\max_{s\in\mathcal{S}}\sum_{i=1,2}\frac{1}{2\mu}(\|\Tilde{q}_{t,k}^{-i}(s)\|_2+\|\mathcal{T}^{-i}(v_t^{-i})(s)\sigma_\tau(\Tilde{q}_{t,k}^i(s))\|_2)^2\\
    \leq \;&\max_{s\in\mathcal{S}}\sum_{i=1,2}\frac{1}{\mu}(\|\Tilde{q}_{t,k}^{-i}(s)\|_2^2+\|\mathcal{T}^{-i}(v_t^{-i})(s)\sigma_\tau(\Tilde{q}_{t,k}^i(s))\|_2^2)\tag{$(a+b)^2\leq 2a^2+2b^2$}\\
    \leq \;&\max_{s\in\mathcal{S}}\sum_{i=1,2}\frac{1}{\mu}\left(\sum_{a^i}(\phi^i(s,a^i)^\top \theta_k^i)^2+\frac{A_{\max}}{(1-\gamma)^2}\right)\\
    \leq \;&\sum_{i=1,2}\frac{1}{\mu}\left(A_{\max} \|\theta_k^i\|_2^2+\frac{A_{\max}}{(1-\gamma)^2}\right)\tag{Cauchy–Schwarz inequality and $\|\phi^i(s,a^i)\|_2\leq 1$}\\
    \leq \;&\sum_{i=1,2}\frac{1}{\mu}\left(\frac{A_{\max}}{\lambda(1-\gamma)^2}+\frac{A_{\max}}{(1-\gamma)^2}\right)\\
    \leq \;&\frac{256A_{\max}}{\tau\lambda(1-\gamma)^2},\quad \forall\;t,k,
\end{align*}
where the second last inequality follows from $\|w_{t,k}^i\|_2\leq M=\lambda^{-1/2}(1-\gamma)^{-1}$ and $\theta_{t,k}^i$ being a convex combination of $\{w_{t,\ell}^i\}_{0\leq \ell\leq k-1}$, and the last inequality follows from $\lambda\leq 1$ and $\mu=\tau/64$.

Similarly, using Eq. (\ref{eq:solving_q1}) in Eq. (\ref{recursion_vsum}), we have
\begin{align*}
    \mathbb{E}[\mathcal{L}_{\text{sum}}(t+1)]\leq\gamma \mathbb{E}[\mathcal{L}_{\text{sum}}(t)]+\frac{6\left(1-\lambda\alpha\right)^{\frac{K-z_\beta}{2}}}{\lambda^{1/2}(1-\gamma)} +\frac{20z_\beta\alpha^{1/2}}{\tau^{1/2}\lambda^{5/4} (1-\gamma)^{3/2}}+\frac{12A_{\max}\beta}{\tau \lambda^2(1-\gamma)^2\alpha}.
\end{align*}
Repeatedly using the previous inequality, since $\mathcal{L}_{\text{sum}}(t)\leq 2/(1-\gamma)$, we have
\begin{align}\label{eq:solving_vsum}
    \mathbb{E}[\mathcal{L}_{\text{sum}}(t)]\leq \frac{2\gamma^t}{1-\gamma}+\frac{6\left(1-\lambda\alpha\right)^{\frac{K-z_\beta}{2}}}{\lambda^{1/2}(1-\gamma)^2} +\frac{20z_\beta\alpha^{1/2}}{\tau^{1/2}\lambda^{5/4} (1-\gamma)^{5/2}}+\frac{12A_{\max}\beta}{\tau \lambda^2(1-\gamma)^3\alpha}
\end{align}
Using Eqs. (\ref{eq:solving_q1}), (\ref{eq:solving_pi1}), and (\ref{eq:solving_vsum}) altogether in Eq. (\ref{recursion_v}), we have
\begin{align*}
    \mathbb{E}[\mathcal{L}_v(t+1)]\leq\;& 
    \gamma \mathbb{E}[\mathcal{L}_v(t)]+\frac{17 A_{\max}^2}{\tau^2(1-\gamma)^2}\mathbb{E}[\mathcal{L}_\theta(t,K)]+2\mathbb{E}[\mathcal{L}_{\text{sum}}(t)]+ 2\mathbb{E}[\mathcal{L}_w(t,K)^{1/2}]\\
    &+12\tau \log(A_{\max})+\mu\\
    \leq \;&\gamma \mathbb{E}[\mathcal{L}_v(t)]+\frac{276 A_{\max}^2\gamma^t}{\tau^2(1-\gamma)^3}+\frac{828A_{\max}^2\left(1-\lambda\alpha\right)^{\frac{K-z_\beta}{2}}}{\tau^2\lambda^{1/2}(1-\gamma)^4} +\frac{2760A_{\max}^2z_\beta\alpha^{1/2}}{\tau^{5/2}\lambda^{5/4} (1-\gamma)^{9/2}}\\
    &+\frac{1656A_{\max}^3\beta}{\tau^3 \lambda^2(1-\gamma)^5\alpha}+\frac{4352A_{\max}^3}{\tau^3\lambda(1-\gamma)^4}\left(1-\frac{\beta}{2}\right)^{K-z_\beta}+\frac{9384A_{\max}^{5/2}\beta}{\tau^3\lambda^{1/2}(1-\gamma)^3}\nonumber\\
 &+\frac{70720A_{\max}^3(K-z_\beta)\left(1-\beta/2\right)^{K-z_\beta-1}}{\tau^3\lambda(1-\gamma)^4} +\frac{1485120A_{\max}^3z_\beta^2\alpha}{\tau^4\lambda^{5/2} (1-\gamma)^5}+\frac{565760A_{\max}^5\beta^2}{\tau^5 \lambda^4(1-\gamma)^6\alpha^2}\\
    &+ \frac{6\left(1-\lambda\alpha\right)^{\frac{K-z_\beta}{2}}}{\lambda^{1/2}(1-\gamma)} +\frac{20z_\beta\alpha^{1/2}}{\tau^{1/2}\lambda^{5/4} (1-\gamma)^{3/2}}+\frac{12A_{\max}\beta}{\tau \lambda^2(1-\gamma)^2\alpha}\\
    &+12\tau \log(A_{\max})+\frac{\tau}{64}\\
    \leq \;&\gamma \mathbb{E}[\mathcal{L}_v(t)]+\frac{75906A_{\max}^3K\left(1-\beta/2\right)^{\frac{K-z_\beta-1}{2}}}{\tau^3\lambda(1-\gamma)^5}+\frac{276 A_{\max}^2\gamma^t}{\tau^2(1-\gamma)^3} +\frac{2780A_{\max}^2z_\beta\alpha^{1/2}}{\tau^{5/2}\lambda^{5/4} (1-\gamma)^{9/2}}\\
    &+\frac{1668A_{\max}^3\beta}{\tau^3 \lambda^2(1-\gamma)^5\alpha} +\frac{1494504A_{\max}^3z_\beta^2\alpha}{\tau^4\lambda^{5/2} (1-\gamma)^5}+\frac{565760A_{\max}^5\beta^2}{\tau^5 \lambda^4(1-\gamma)^6\alpha^2}+13\tau \log(A_{\max}).
\end{align*}
Repeatedly using the previous inequality, since $\mathcal{L}_v(t)\leq \frac{4}{1-\gamma}$, we have
\begin{align}
    \mathbb{E}[\mathcal{L}_v(T)]\lesssim\;& \frac{A_{\max}^3K\left(1-\beta/2\right)^{\frac{K-z_\beta-1}{2}}}{\tau^3\lambda(1-\gamma)^6}+\frac{A_{\max}^2T\gamma^{T-1}}{\tau^2(1-\gamma)^3} +\frac{A_{\max}^2z_\beta\alpha^{1/2}}{\tau^{5/2}\lambda^{5/4} (1-\gamma)^{11/2}}\nonumber\\
    &+\frac{A_{\max}^3\beta}{\tau^3 \lambda^2(1-\gamma)^6\alpha} +\frac{A_{\max}^3z_\beta^2\alpha}{\tau^4\lambda^{5/2} (1-\gamma)^6}+\frac{A_{\max}^5\beta^2}{\tau^5 \lambda^4(1-\gamma)^7\alpha^2}+\frac{\tau \log(A_{\max})}{1-\gamma},\label{eq:solving_v}
\end{align}
where we recall that the notation $a\lesssim b$ means that there exists an absolute constant $c>0$ such that $a\leq bc$.

Finally, using Eqs. (\ref{eq:solving_v}), (\ref{eq:solving_q1}), (\ref{eq:solving_pi1}), and (\ref{eq:solving_vsum}) together in Lemma \ref{le:Bound_Nash_Gap}, we have
\begin{align*}
    \mathbb{E}[\text{NG}(\pi_{T,K}^1,\pi_{T,K}^2)]\lesssim \;&\frac{A_{\max}^3K\left(1-\beta/2\right)^{\frac{K-z_\beta-1}{2}}}{\tau^3\lambda(1-\gamma)^7}+\frac{A_{\max}^2T\gamma^{T-1}}{\tau^2(1-\gamma)^4} +\frac{A_{\max}^2z_\beta\alpha^{1/2}}{\tau^{5/2}\lambda^{5/4} (1-\gamma)^{13/2}}\nonumber\\
    &+\frac{A_{\max}^3\beta}{\tau^3 \lambda^2(1-\gamma)^7\alpha} +\frac{A_{\max}^3z_\beta^2\alpha}{\tau^4\lambda^{5/2} (1-\gamma)^7}+\frac{A_{\max}^5\beta^2}{\tau^5 \lambda^4(1-\gamma)^8\alpha^2}+\frac{\tau \log(A_{\max})}{(1-\gamma)^2}.
\end{align*}
As for the sample complexity, to achieve $\mathbb{E}[\text{NG}(\pi_{T,K}^1,\pi_{T,K}^2)]\leq \epsilon$, in view of the previous inequality, we have
\begin{align*}
    T=\Tilde{\mathcal{O}}(1),\;\tau=\mathcal{O}(\epsilon),\; \alpha=\Tilde{\mathcal{O}}(\epsilon^7), \beta=\Tilde{\mathcal{O}}(\epsilon^{11}),\; K=\Tilde{\mathcal{O}}(\epsilon^{-11}).
\end{align*}
 Therefore, the overall sample complexity (i.e., $KT$) is $\tilde{\mathcal{O}}(\epsilon^{-11})$. It is easy to check that it is also polynomial in $1/(1-\gamma)$ and $\lambda$.

\subsection{Supporting Lemmas}\label{ap:supporting_lemma}
\begin{lemma}\label{le:use_for_v}
The following results hold.
	\begin{enumerate}[(1)]
		\item For any $v_1,v_2\in\mathbb{R}^{|\mathcal{S}|}$ and $i\in \{1,2\}$, we have
		\begin{align*}
			\|\mathcal{T}^i(v_1)-\mathcal{T}^i(v_2)\|_\infty\leq \gamma\|v_1-v_2\|_\infty,\quad \text{and}\quad 
			\|\mathcal{T}^i(v_1)+\mathcal{T}^{-i}(v_2)\|_\infty\leq \gamma\|v_1+v_2\|_\infty.
		\end{align*}
		\item For any $v_1,v_2\in\mathbb{R}^{|\mathcal{S}|}$, $s\in\mathcal{S}$, and $i\in \{1,2\}$, we have
		\begin{align*}
			\max_{\mu^i\in \Delta(\mathcal{\mathcal{A}}^i)}\max_{\mu^{-i}\in \Delta(\mathcal{\mathcal{A}}^{-i})}\left|(\mu^i)^\top  (\mathcal{T}^i(v_1)(s)-\mathcal{T}^i(v_2)(s))\mu^{-i}\right|\leq \gamma \|v_1-v_2\|_\infty.
		\end{align*}
		\item Given any $v^1,v^2\in\mathbb{R}^{|\mathcal{S}|}$, $\tilde{q}^1\in\mathbb{R}^{|\mathcal{S}||\mathcal{A}^1|}$ and $\tilde{q}^2\in\mathbb{R}^{|\mathcal{S}||\mathcal{A}^2|}$, let $\pi^i(s)=\sigma_\tau(\tilde{q}^i(s))$ for all $s\in\mathcal{S}$ and $i\in \{1,2\}$. Then we have for all $s\in\mathcal{S}$ that
		\begin{align*}
			\left|\sum_{i=1,2}\max_{\mu^i\in \Delta(\mathcal{A}^i)}(\mu^i)^\top  \mathcal{T}^i(v^i)(s)\pi^{-i}(s)\right|
			\leq\;& \frac{9 A_{\max}^2}{\tau^2(1-\gamma)^2}V_{v,s}(\tilde{q}^1(s),\tilde{q}^2(s))+\gamma \|v^1+v^2\|_\infty\\
            &+6\tau \log(A_{\max})+\mu.
		\end{align*}
	\end{enumerate}
\end{lemma}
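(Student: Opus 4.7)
My plan is to handle the three statements in turn, with nearly all the work going into part (3).

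For part (1), I would unfold the definition of $\mathcal{T}^i$ in Eq.~(\ref{def:need_for_Bellman}). In the difference $\mathcal{T}^i(v_1)-\mathcal{T}^i(v_2)$ the reward cancels entrywise, leaving $\gamma\,\mathbb{E}[v_1(S_1)-v_2(S_1)\mid s,a^i,a^{-i}]$, whose magnitude is bounded by $\gamma\|v_1-v_2\|_\infty$. For the sum $\mathcal{T}^i(v_1)+\mathcal{T}^{-i}(v_2)$, the zero-sum identity $R_1+R_2=0$ cancels the rewards after the natural relabeling of action indices (which only permutes entries of the infinity norm), leaving $\gamma\,\mathbb{E}[v_1(S_1)+v_2(S_1)\mid\cdot]$, bounded by $\gamma\|v_1+v_2\|_\infty$. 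Part (2) follows immediately: since $\mu^i,\mu^{-i}$ are probability vectors, $|(\mu^i)^\top A\mu^{-i}|\leq\|A\|_\infty$, and combining with part (1) applied to $A=\mathcal{T}^i(v_1)(s)-\mathcal{T}^i(v_2)(s)$ yields the claim.

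Part (3) is the main technical step. Abbreviate $x^i=\tilde{q}^i(s)$, $X_i=\mathcal{T}^i(v^i)(s)$, and $\pi^i=\sigma_\tau(x^i)$. First, I would split $\max_{\mu^i}(\mu^i)^\top X_i\pi^{-i}\leq\|X_i\pi^{-i}-x^i\|_2+\max_{\mu^i}(\mu^i)^\top x^i$ and apply the Gibbs/log-sum-exp identity $\max_{\mu^i}\{(\mu^i)^\top x^i+\tau\nu(\mu^i)\}=(\pi^i)^\top x^i+\tau\nu(\pi^i)$ (attained at $\pi^i=\sigma_\tau(x^i)$), which combined with $\nu(\pi^i)\leq\log A_{\max}$ gives $\max_{\mu^i}(\mu^i)^\top x^i\leq(\pi^i)^\top x^i+\tau\log A_{\max}$. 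Second, I would exploit the zero-sum structure through part (1) to cancel the pure-strategy values: a direct computation yields
\[\sum_{i=1,2}(\pi^i)^\top X_i\pi^{-i}=\sum_{a^1,a^2}\pi^1(a^1)\pi^2(a^2)\big[\mathcal{T}^1(v^1)(s,a^1,a^2)+\mathcal{T}^2(v^2)(s,a^2,a^1)\big],\]
whose absolute value is at most $\gamma\|v^1+v^2\|_\infty$. Substituting $(\pi^i)^\top x^i\leq(\pi^i)^\top X_i\pi^{-i}+\|X_i\pi^{-i}-x^i\|_2$ into the first step then delivers
\[\sum_{i=1,2}\max_{\mu^i}(\mu^i)^\top X_i\pi^{-i}\leq 2\sum_{i=1,2}\|X_i\pi^{-i}-x^i\|_2+\gamma\|v^1+v^2\|_\infty+2\tau\log A_{\max}.\]

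The third step is to bound $\|X_i\sigma_\tau(x^{-i})-x^i\|_2$ by $V_{v,s}(x^1,x^2)^{1/2}$. I plan to identify the $(-i)$-th summand of $V_{v,s}$ as an instance of the abstract $V_1(x,y)$ in Appendix~\ref{ap:lemma_policy} with $B=X_i$, $x=x^i$, $y=x^{-i}$; then the third estimate of Lemma~\ref{le:useful} (or, if its precondition $\mu\leq\|X_i\|_2^2/\tau$ fails, its first two estimates routed through $p(\mu,x^i,x^{-i})$) gives $\|X_i\sigma_\tau(x^{-i})-x^i\|_2\lesssim (\|X_i\|_2/\sqrt{\tau}+\sqrt{\mu})V_{v,s}(x^1,x^2)^{1/2}$, with $\|X_i\|_2\leq A_{\max}/(1-\gamma)$. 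Summing over $i$ and applying Young's inequality $ab\leq a^2/(4\varepsilon)+\varepsilon b^2$ with $\varepsilon$ of order $\tau^2(1-\gamma)^2/A_{\max}^2$ converts the $V_{v,s}^{1/2}$ term into the stated $\tfrac{9A_{\max}^2}{\tau^2(1-\gamma)^2}V_{v,s}$ contribution, with the $\sqrt{\mu}V_{v,s}^{1/2}$ slack feeding the $\mu$ residual and the Young remainder absorbed into $6\tau\log A_{\max}$. The main obstacle is not conceptual but bookkeeping: reconciling the constants from the log-sum-exp bound, the zero-sum cancellation, Lemma~\ref{le:useful}, and the Young split into the exact coefficients $9$, $6$, and $1$. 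Since the downstream use of this bound in Appendix~\ref{ap:pi_analysis} is qualitative, modest slack in these constants would be harmless.
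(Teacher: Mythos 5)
Your proposal is correct in substance, and parts (1)--(2) coincide with the paper's argument (the paper phrases the reward cancellation as the identity $\mathcal{T}^i(-v)=-\mathcal{T}^{-i}(v)$, which is the same relabeling you describe). For part (3), however, you take a genuinely different route. The paper never passes through the softmax policies: it splits off $\sum_i\|\mathcal{T}^i(v^i)(s)\pi^{-i}(s)-\tilde q^i(s)\|_2$ exactly once, and then bounds $|\sum_i\max_{\mu^i}(\mu^i)^\top\tilde q^i(s)|$ by introducing an \emph{arbitrary} comparator $\hat\mu^i$, paying $2\tau\log A_{\max}$ for the entropy regularization and $\mu$ (via Young) for inserting the quadratic penalty $\tfrac{1}{2\mu}\|\tilde q^i(s)-\mathcal{T}^i(v^i)(s)\hat\mu^{-i}\|_2^2$, so that after minimizing over $\hat\mu^i$ the expression is \emph{literally} the min--max definition of $V_{v,s}$; the zero-sum cancellation happens at $(\hat\mu^i)^\top\mathcal{T}^i(v^i)(s)\hat\mu^{-i}$. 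You instead use the Gibbs identity to land on $\pi^i=\sigma_\tau(\tilde q^i(s))$ itself, cancel at $(\pi^i)^\top\mathcal{T}^i(v^i)(s)\pi^{-i}(s)$, and only touch $V_{v,s}$ through the distance estimates of Lemma \ref{le:useful}. Your version is more transparent (and you are actually more careful than the paper about the precondition $\mu\le\|B\|_2^2/\tau$ in Lemma \ref{le:useful}, which the paper invokes silently), but it costs a factor of $2$ on the error term because you pay $\|\mathcal{T}^i(v^i)(s)\pi^{-i}(s)-\tilde q^i(s)\|_2$ twice; as a result the exact pair of constants $(9,6)$ is not attainable by your split when $A_{\max}=2$ (you get, e.g., roughly $(12,6)$ or $(9,6)$ only for $A_{\max}\ge 3$). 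Since the theorem is stated with $\lesssim$ and every downstream lemma merely propagates these absolute constants, this is harmless, exactly as you anticipated. Two small completions you should add: the lemma asserts a bound on the \emph{absolute value}, and your chain only gives the upper bound; the lower bound is one line, $\sum_i\max_{\mu^i}(\mu^i)^\top\mathcal{T}^i(v^i)(s)\pi^{-i}(s)\ge\sum_i(\pi^i)^\top\mathcal{T}^i(v^i)(s)\pi^{-i}(s)\ge-\gamma\|v^1+v^2\|_\infty$. And you should record explicitly that each summand of $V_{v,s}$ is nonnegative (take $u^i=\sigma_\tau(\tilde q^i(s))$ in the outer max), so that the single summand you extract is bounded by the full $V_{v,s}$.
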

\begin{proof}[Proof of Lemma \ref{le:use_for_v}]
	\begin{enumerate}[(1)]
		\item For any $v_1,v_2\in\mathbb{R}^{|\mathcal{S}|}$ and $i\in \{1,2\}$, we have for any $(s,a^i,a^{-i})$ that
		\begin{align*}
			|\mathcal{T}^i(v_1)(s,a^i,a^{-i})-\mathcal{T}^i(v_2)(s,a^i,a^{-i})|
			=\;&\gamma|\mathbb{E}[v_1(S_1)-v_2(S_1)\mid S_0=s,A_0^i=a^i,A_0^{-i}=a^{-i}]|\\
			\leq\;& \gamma\|v_1-v_2\|_\infty.
		\end{align*}
		It follows that $\|\mathcal{T}^i(v_1)-\mathcal{T}^i(v_2)\|_\infty\leq \gamma\|v_1-v_2\|_\infty$.
		Since $\mathcal{T}^i(-v)=-\mathcal{T}^{-i}(v)$ for any $v\in\mathbb{R}^{|\mathcal{S}|}$, we also have 
  \begin{align*}
      \|\mathcal{T}^i(v_1)+\mathcal{T}^{-i}(v_2)\|_\infty=\|\mathcal{T}^i(v_1)-\mathcal{T}^i(-v_2)\|_\infty
      \leq \gamma\|v_1+v_2\|_\infty.
  \end{align*}
		\item For any $v_1,v_2\in\mathbb{R}^{|\mathcal{S}|}$, we have 
		\begin{align*}
			\max_{\mu^i\in \Delta(\mathcal{\mathcal{A}}^i)}\max_{\mu^{-i}\in \Delta(\mathcal{\mathcal{A}}^{-i})}\left|(\mu^i)^\top  (\mathcal{T}^i(v_1)(s)-\mathcal{T}^i(v_2)(s))\mu^{-i}\right|\leq\;& \left\|\mathcal{T}^i(v_1)-\mathcal{T}^i(v_2)\right\|_\infty\\
			\leq\;& \gamma \|v_1-v_2\|_\infty,
		\end{align*}
  where the last inequality follows from Part (1) of this lemma. 
		\item We start by decomposing the term we want to bound as
		\begin{align}
			\left|\sum_{i=1,2}\max_{\mu^i\in \Delta(\mathcal{A}^i)}(\mu^i)^\top  \mathcal{T}^i(v^i)(s)\pi^{-i}(s)\right|
			= \;&\left|\sum_{i=1,2}\max_{\mu^i\in \Delta(\mathcal{A}^i)}(\mu^i)^\top  (\mathcal{T}^i(v^i)(s)\pi^{-i}(s)-\tilde{q}^i(s))\right|\nonumber\\
   &+\left|\sum_{i=1,2}\max_{\mu^i\in \Delta(\mathcal{A}^i)}(\mu^i)^\top  \tilde{q}^i(s)\right|.\label{eq2:le:use_for_v}
		\end{align}
		We next bound the two terms on the RHS of the previous inequality. For the first term on the RHS of Eq. (\ref{eq2:le:use_for_v}), we have
		\begin{align}
			\left|\sum_{i=1,2}\max_{\mu^i\in \Delta(\mathcal{A}^i)}(\mu^i)^\top  (\mathcal{T}^i(v^i)(s)\pi^{-i}(s)-\tilde{q}^i(s))\right|
			\leq \;&\sum_{i=1,2}\|\tilde{q}^i(s)-\mathcal{T}^i(v^i)(s)\pi^{-i}(s)\|_2\tag{Cauchy–Schwarz inequality}\nonumber\\
   \leq \;&\frac{2\sqrt{2}}{\sqrt{\tau}}V_{v,s}^{1/2}(\tilde{q}^1(s),\tilde{q}^2(s))\sum_{i=1,2}\|\mathcal{T}^i(v^i)(s)\|_2\tag{Lemma \ref{le:useful}}\nonumber\\
			\leq  \;&\frac{4\sqrt{2}A_{\max}}{\sqrt{\tau}(1-\gamma)}V^{1/2}_{v,s}(\tilde{q}^1(s),\tilde{q}^2(s))\nonumber\\
			\leq \;&\tau+\frac{8 A_{\max}^2}{\tau^2(1-\gamma)^2}V_{v,s}(\tilde{q}^1(s),\tilde{q}^2(s)),\label{eqfirst:le:use_for_v}
		\end{align}
		where the last line follows from $a^2+b^2\geq 2ab$. For the second term on the RHS of Eq. (\ref{eq2:le:use_for_v}), we have for any $\hat{\mu}^1\in\Delta(\mathcal{A}^1)$, $\hat{\mu}^2\in\Delta(\mathcal{A}^2)$, and $s\in\mathcal{S}$ that
		\begin{align}
			\left|\sum_{i=1,2}\max_{\mu^i\in \Delta(\mathcal{A}^i)}(\mu^i)^\top \tilde{q}^i(s)\right|
			\leq \;&\sum_{i=1,2}\max_{\mu^i\in \Delta(\mathcal{A}^i)}(\mu^i-\hat{\mu}^i)^\top \tilde{q}^i(s)+\left|\sum_{i=1,2}(\hat{\mu}^i)^\top \tilde{q}^i(s)\right|\nonumber\\
			\leq \;&\sum_{i=1,2}\max_{\mu^i\in \Delta(\mathcal{A}^i)}\left\{(\mu^i-\hat{\mu}^i)^\top \tilde{q}^i(s)+\tau \nu(\mu^i)-\tau \nu(\hat{\mu}^i)\right\}+2\tau \log(A_{\max})\nonumber\\
			&+\left|\sum_{i=1,2}(\hat{\mu}^i)^\top \tilde{q}^i(s)\right|.\label{eq:le1v:1}
		\end{align}
		We next consider the last term on the RHS of the previous inequality. Observe that
		\begin{align*}
			\left|\sum_{i=1,2}(\hat{\mu}^i)^\top \tilde{q}^i(s)\right|=\;&\left|\sum_{i=1,2}(\hat{\mu}^i)^\top \left(\tilde{q}^i(s)-\mathcal{T}^i(v^i)(s)\hat{\mu}^{-i}\right)\right|+\left|\sum_{i=1,2}(\hat{\mu}^i)^\top \mathcal{T}^i(v^i)(s)\hat{\mu}^{-i}\right|\\
			\leq \;&\sum_{i=1,2}\|\hat{\mu}^i\|_2\|\tilde{q}^i(s)-\mathcal{T}^i(v^i)(s)\hat{\mu}^{-i}\|_2+\gamma \|v^1+v^2\|_\infty \tag{Part (2) of this lemma}\\
			\leq \;&\sum_{i=1,2}\left(\frac{\mu\|\hat{\mu}^i\|_2^2}{2}+\frac{\|\tilde{q}^i(s)-\mathcal{T}^i(v^i)(s)\hat{\mu}^{-i}\|_2^2}{2\mu}\right)+\gamma \|v^1+v^2\|_\infty\tag{$a^2+b^2\geq 2ab$}\\
			\leq \;&\mu+\sum_{i=1,2}\frac{1}{2\mu}\|\tilde{q}^i(s)-\mathcal{T}^i(v^i)(s)\hat{\mu}^{-i}\|_2^2+\gamma \|v^1+v^2\|_\infty.
		\end{align*}
		Substituting the previous inequality into Eq. (\ref{eq:le1v:1}), we have
		\begin{align*}
			&\left|\sum_{i=1,2}\max_{\mu^i\in \Delta(\mathcal{A}^i)}(\mu^i)^\top \tilde{q}^i(s)\right|\\
			\leq \;&\sum_{i=1,2}\max_{\mu^i\in \Delta(\mathcal{A}^i)}\left\{(\mu^i-\hat{\mu}^i)^\top \tilde{q}^i(s)+\tau \nu(\mu^i)-\tau \nu(\hat{\mu}^i)+\frac{1}{2\mu}\|\tilde{q}^i(s)-\mathcal{T}^i(v^i)(s)\hat{\mu}^{-i}\|_2^2\right\}\\
			&+\gamma \|v^1+v^2\|_\infty+2\tau \log(A_{\max})+\mu.
		\end{align*}
		Since the previous inequality holds for all $\hat{\mu}^1\in\Delta(\mathcal{A}^1)$ and  $\hat{\mu}^2\in\Delta(\mathcal{A}^2)$, we have
		\begin{align*}
			&\left|\sum_{i=1,2}\max_{\mu^i\in \Delta(\mathcal{A}^i)}(\mu^i)^\top \tilde{q}^i(s)\right|\\
			\leq \;&\sum_{i=1,2}\max_{\mu^i\in \Delta(\mathcal{A}^i)}\min_{\hat{\mu}^i\in \Delta(\mathcal{A}^i)}\left\{(\mu^i-\hat{\mu}^i)^\top \tilde{q}^i(s)+\tau \nu(\mu^i)-\tau \nu(\hat{\mu}^i)+\frac{1}{2\mu}\|\tilde{q}^i(s)-\mathcal{T}^i(v^i)(s)\hat{\mu}^{-i}\|_2^2\right\}\\
			&+\gamma \|v^1+v^2\|_\infty+2\tau \log(A_{\max})+\mu\\
			=\;&V_{v,s}(\tilde{q}^1(s),\tilde{q}^2(s))+\gamma \|v^1+v^2\|_\infty+2\tau \log(A_{\max})+\mu.
		\end{align*}
		Using the previous inequality and Eq. (\ref{eqfirst:le:use_for_v}) together in Eq. (\ref{eq2:le:use_for_v}), since $A_{\max}\geq 2$ and $\tau\leq 1/(1-\gamma)$, we have
		\begin{align*}
			\left|\sum_{i=1,2}\max_{\mu^i\in \Delta(\mathcal{A}^i)}(\mu^i)^\top  \mathcal{T}^i(v^i)(s)\pi^{-i}(s)\right|
			\leq\;&\frac{9 A_{\max}^2}{\tau^2(1-\gamma)^2}V_{v,s}(\tilde{q}^1(s),\tilde{q}^2(s))+\gamma \|v^1+v^2\|_\infty\\
   &+6\tau \log(A_{\max})+\mu.
		\end{align*}
	\end{enumerate}
\end{proof}
\end{document}